\newtheorem{observation}{Observation}
\newcommand{\N}{\mathbb{N}}
\newcommand{\R}{\mathbb{R}}
\newcommand{\Q}{\mathbb{Q}}
\newcommand{\E}{\text{E}}
\newcommand{\Var}{\text{Var}}
\newcommand{\cE}{\mathcal{E}}
\newcommand{\cO}{\mathcal{O}}
\newcommand{\cU}{\mathcal{U}}
\newcommand{\cG}{\mathcal{G}}
\newcommand{\abs}[1]{\left\vert #1 \right\vert}
\newcommand{\norm}[1]{\left\Vert #1 \right\Vert_2}
\newcommand{\sca}[2]{\left\langle #1, #2 \right\rangle}
\newcommand{\OneTo}[1]{\in[#1]}
\newcommand{\subalign}[1]{%
  \vcenter{%
    \Let@ \restore@math@cr \default@tag
    \baselineskip\fontdimen10 \scriptfont\tw@
    \advance\baselineskip\fontdimen12 \scriptfont\tw@
    \lineskip\thr@@\fontdimen8 \scriptfont\thr@@
    \lineskiplimit\lineskip
    \ialign{\hfil$\m@th\scriptstyle##$&$\m@th\scriptstyle{}##$\crcr
      #1\crcr
    }%
  }
}
\newcommand{\texorpdfstring}[2]{#1}
\newcommand{\Lkj}{L_{k,j}}
\newcommand{\rnk}{r_{nk}}
\newcommand{\rnl}{r_{nl}}
\newcommand{\znk}{z_{nk}}
\newcommand{\Rmin}{\mathcal{B}} 
\newcommand{\wmin}{w_{\min}}
\newcommand{\wmax}{w_{\max}}
\newcommand{\rnkSet}{\{\rnk\}_{n,k}}
\newcommand{\rNKSet}{\{\rnk\}_{n\OneTo{N},k\OneTo{K}}}
\newcommand{\rNLSet}{\{\rnl\}_{n\OneTo{N},l\OneTo{L}}}
\newcommand{\rnSet}{\{\rnk\}_{n}}
\newcommand{\muSet}{\{\mu_k\}_{k}}
\newcommand{\tmuSet}{\{\tilde\mu_k\}_{k}}
\newcommand{\tmuKSet}{\{\tilde\mu_k\}_{k\OneTo{K}}}
\newcommand{\muKSet}{\{\mu_k\}_{k\OneTo{K}}}
\newcommand{\tmu}{\tilde\mu}
\newcommand{\tC}{\tilde C}
\newcommand{\phiopt}{\phi_{(X,K,m)}^{OPT}}
\DeclareMathOperator{\km}{km}
\newcommand{\phim}{\phi^{(m)}}
\newcommand{\phiXm}{\phi_X^{(m)}}
\newcommand{\phiXmk}{\phi_{X,k}^{(m)}}
\newcommand{\phiXml}{\phi_{X,l}^{(m)}}
\newcommand{\Gal}{\operatorname{Gal}}
\DeclareMathOperator*{\ball}{\mathcal{B}}
\DeclareMathOperator*{\vol}{vol}
\DeclareMathOperator*{\sgn}{sgn}
\title{Complexity and Approximation\\of the Fuzzy $K$-Means Problem}
\titlerunning{Complexity and Approximation of the Fuzzy $K$-Means Problem} 
\author{Johannes Bl\"omer \and Sascha Brauer \and Kathrin Bujna}
\authorrunning{J. Bl\"omer \and S. Brauer, \and K. Bujna} 
\institute{
Paderborn University, 33098 Paderborn, Germany,\\
\email{\{bloemer, sascha.brauer, kathrin.bujna\}@uni-paderborn.de}
}
\begin{document}
	\maketitle
	\begin{abstract}
The fuzzy $K$-means problem is a generalization of the classical $K$-means problem to soft clusterings, i.e. clusterings where each points belongs to each cluster to some degree. 
Although popular in practice, prior to this work the fuzzy $K$-means problem has not been studied from a complexity theoretic or algorithmic perspective.
We show that optimal solutions for fuzzy $K$-means cannot, in general, be expressed by radicals over the input points. 
Surprisingly, this already holds for very simple inputs in one-dimensional space.
Hence, one cannot expect to compute optimal solutions exactly.
We give the first $(1+\epsilon)$-approximation algorithms for the fuzzy $K$-means problem. 
First, we present a deterministic approximation algorithm whose runtime is polynomial in $N$ and linear in the dimension $D$ of the input set, given that $K$ is constant, i.e. a polynomial time approximation algorithm given a fixed $K$. 
We achieve this result by showing that for each soft clustering there exists a hard clustering with comparable properties. 
Second, by using techniques known from coreset constructions for the $K$-means problem, we develop a  deterministic approximation algorithm that runs in time almost linear in $N$ but exponential in the dimension $D$. 
We complement these results with a randomized algorithm which imposes some natural restrictions on the input set and whose runtime is comparable to some of the most efficient approximation algorithms for $K$-means, i.e. linear in the number of points and the dimension, but exponential in the number of clusters. 
\end{abstract}
	\section{Introduction}\label{sec:intro}
Clustering is a widely used technique in unsupervised machine learning. 
Simply speaking, its goal is to group similar objects. 
This problem occurs in a wide range of practical applications in many fields such as image analysis, information retrieval, and bioinformatics. 
We call a grouping of objects into a given number of clusters a \emph{hard clustering} if each object is assigned to exactly one cluster.
A popular example for a hard clustering problem is the well known $K$-means problem. 
In contrast, in a \emph{soft clustering} each object belongs to each cluster with a certain degree of membership.
There is a continuous generalization of the $K$-means problem that leads to a such a soft clustering problem, known as the \emph{fuzzy $K$-means} problem.

\subsection{Fuzzy \texorpdfstring{$K$}{K}-Means}
\cite{dunn73} was the first to present a fuzzy $K$-means objective function, which was later extended by  \cite{bezdek84}. 
Today, fuzzy $K$-means has found a wide range of practical applications, for example in image segmentation \cite{rezaee} and biological data analysis \cite{dembele}.

\subsubsection{Fuzzy \texorpdfstring{$K$}{K}-Means Problem}
Let $X =\{(x_1,w_1),\ldots, (x_N,w_N) \}$ be a set of data points $x_n\in\R^D$ weighted by $w_n\in \R_{\geq 0}$. 
We want to group $X$ into some predefined number of clusters $K$.
These clusters are represented by mean vectors $\{\mu_1,\ldots,\mu_K\}\subset\R^D$.
In a fuzzy clustering, each data point $x_n$ belongs to each cluster, represented by a $\mu_k$, with a certain membership value $r_{nk}\in[0,1]$.
The {fuzzy $K$-means problem} has an additional parameter, the so-called fuzzifier $m\in \N_{>1}$, which is chosen in advance and is not subject to optimization.
In simple terms, the fuzzifier $m$ determines how much clusters are allowed to overlap, i.e. how soft the clustering is.
\begin{problem}[Fuzzy $K$-means]
  Given $X =\{(x_n,w_n)\}_{n\OneTo{N}} \subset \R^D\times \R_{\geq 0}$, $K\geq 1$ and $m\geq 2$, find $C=\{ \mu_k \}_{k\OneTo{K}} \subset \R^D$ and $R=\{ r_{nk} \}_{n\OneTo{N},k\OneTo{K}}\subset[0,1]$ minimizing
\[  
\phi_{X}^{(m)}(C,R)=\sum_{n=1}^N\sum_{k=1}^K r_{nk}^m w_n \norm{ x_n - \mu_k }^2\enspace, 
\]
 subject to: 
 $ \sum_{k=1}^K r_{nk} = 1$ for all $n\OneTo{N}$.
 
 We denote the costs of an optimal solution by $\phiopt$. 
\end{problem}
For $m=1$ this problem would coincide with the classical $K$-means problem, while for $m\rightarrow \infty$ the memberships converge to a uniform distribution and the centers converge to the center of the data set $X$.
Our problem definition is a generalization of the original definition presented in \cite{bezdek84} in that we consider weighted data sets. 
By setting all weights to $1$, we obtain the original definition.

\subsubsection{Fuzzy \texorpdfstring{$K$}{K}-Means (FM) Algorithm}
The most widely used heuristic for the fuzzy $K$-means problem is an alternating optimization algorithm known as \emph{fuzzy $K$-means (FM) algorithm}. 
It is defined by the following first-order optimality conditions \cite{bezdeck87}: 
Fixing the means $\{\mu_k\}_{k\OneTo{K}}$, optimal memberships are given by 
\begin{align}\label{eq:opt-membership} 
  r_{nk} = \frac{\norm{x_n - \mu_k}^{-\frac{2}{m-1}}}{\sum_{l=1}^K \norm{x_n - \mu_l}^{-\frac{2}{m-1}}}
\end{align}
if $x_n\neq\mu_l$ for all $l\OneTo{K}$. 
If $x_n$ coincides with some of the $\mu_l$, then the membership of $x_n$ can be distributed arbitrarily among those $\mu_l$ with $\mu_l=x_n$.
Fixing the memberships $\{r_{nk}\}_{n,k}$, the optimal means are given by 
\begin{align} 
 \mu_k = \frac{\sum_{n=1}^N r_{nk}^m w_n x_n}{\sum_{n=1}^N r_{nk}^m w_n}\enspace . \label{eq:opt-means}
\end{align} 
A major downside of the FM algorithm is that there are no guarantees on the quality of computed solutions.
\begin{observation}\label{obs:poor-solution:intro}
The FM algorithm converges to a  (local) minimum or a saddle point that can be \emph{arbitrarily poor} compared to an optimal solution. 
Such points can be reached by the FM algorithm even if it is initialized with points from the given point set 
\end{observation}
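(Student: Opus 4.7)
The plan is to exhibit a one-parameter family of one-dimensional inputs on which FM, initialised at a data point, demonstrably halts at a saddle point whose cost exceeds the optimum by an arbitrarily large factor. I would take the simplest possible instance, $X = \{(0, 1), (L, 1)\} \subset \R \times \R_{\geq 0}$ with $K = 2$ and an arbitrary fuzzifier $m \geq 2$, and initialise $\mu_1 = \mu_2 = 0$, placing both means on the input point $x_1$.

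First I would trace one step of the FM updates from this initialisation. Because $\mu_1 = \mu_2 = x_1$, the membership of $x_1$ is not pinned down by (\ref{eq:opt-membership}) but, as the problem statement explicitly permits, may be split arbitrarily between the two coinciding means; I would take the symmetric split $r_{11} = r_{12} = 1/2$. For $x_2 = L$, equation (\ref{eq:opt-membership}) applies and yields $r_{21} = r_{22} = 1/2$ by the equality of distances. Plugging these memberships into (\ref{eq:opt-means}) moves both means to the centroid of $X$, namely $L/2$.

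Next I would verify that $\mu_1 = \mu_2 = L/2$ is an FM fixed point and, moreover, a saddle point of $\phiXm$. No data point coincides with a mean in this configuration, so (\ref{eq:opt-membership}) returns $r_{nk} = 1/2$ by symmetry, and (\ref{eq:opt-means}) sends both means back to $L/2$. To argue that this stationary point is a saddle rather than a local minimum, I would expand the reduced cost, namely $\phiXm$ with memberships re-optimised via (\ref{eq:opt-membership}), along the symmetric perturbation $\mu_1 = L/2 - \delta$, $\mu_2 = L/2 + \delta$; a short calculation shows that the leading correction is a strictly negative multiple of $\delta^2$, so splitting the two means lowers the cost.

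Finally I would compare costs: the FM output satisfies $\phiXm(C,R) = 4 \cdot (1/2)^m (L/2)^2 = L^2 \cdot 2^{-m}$, whereas the feasible assignment $\mu_1 = 0$, $\mu_2 = L$ with $r_{11} = r_{22} = 1$ attains $\phiopt = 0$. Hence the ratio of the FM cost to the optimum is infinite for every $L > 0$, and the absolute cost gap $L^2 \cdot 2^{-m}$ can be made as large as desired by choosing $L$. The only delicate point in the argument is the tie-breaking at the degenerate first iteration, but this freedom is exactly what the problem statement grants, so the trajectory described is a legitimate run of FM initialised at an input point.
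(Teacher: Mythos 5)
Your calculations are correct and the trajectory you describe is a legitimate run of the FM algorithm, but your construction differs from the paper's in two ways that weaken the demonstration. The paper uses the planar four-point set $X_a=\{(a,1),(-a,1),(-a,-1),(a,-1)\}$ initialized with the two \emph{distinct} data points $(a,1)$ and $(a,-1)$; a symmetry-invariance argument shows the means stay on a common vertical line forever, so every iterate costs at least $a^2/2^{m-1}$, while the optimum is sandwiched in $[2^{1-m},4]$, giving a genuine multiplicative gap of order $a^2$ over instances with strictly positive optimal cost. Your example instead (i) places both initial centers on the \emph{same} data point, which is technically ``initialization with points from the given point set'' but is a degenerate choice that relies on the arbitrary tie-breaking rule to even get started (and with $N=K=2$ any choice of two distinct data points would immediately be optimal), and (ii) compares against an instance with $\phiopt=0$. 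The second point is the more serious one: the formal version of the claim that the paper proves is ``for every $c$ there is an instance and initialization such that every FM iterate costs at least $c\cdot\phiopt$,'' and when $\phiopt=0$ this inequality is satisfied vacuously by \emph{every} solution, including the optimal one, so it fails to separate FM's output from the optimum in the multiplicative sense. Your example does show that FM admits no finite approximation factor, which is a valid (if degenerate) reading of ``arbitrarily poor,'' and it is simpler and lower-dimensional than the paper's; but the paper's construction buys a non-vacuous, quantitative ratio $a^2/2^{m+1}$ on instances in general position, which is what one actually wants from such a separation. Your saddle-point discussion (symmetric perturbation of the coincident means at $L/2$ strictly decreases the reduced cost) is fine but is not needed for the claim itself.
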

A proof of this observation can be found in Section~\ref{sec:poor-solutions}.
\subsection{Related Work}
Although the fuzzy $K$-means problem appears in a wide range of practical applications, so far there has been no complexity classification. 
To the best of our knowledge, there are no hardness results for the fuzzy $K$-means problem.
It is not even known whether it lies in NP.
The same holds for other soft clustering problems, such as the maximum-likelihood estimation problem for Gaussian mixture models \cite{bishop06} or the soft-clustering problem \cite{mackay03}.

Two problems that are closely related to the fuzzy $K$-means problem are the $K$-means and the $K$-median problem.
The complexity of the $K$-means problem is well-studied.
For fixed $K$ and $D$, there is a polynomial time algorithm solving the problem optimally \cite{inaba94}.
The $K$-means problem is NP-complete, even if $K$ or $D$ is fixed to $2$ \cite{dasgupta08} \cite{vattani}.
Furthermore, assuming P$\neq$NP, there is no PTAS for the $K$-means problem for arbitrary $K$ and $D$ \cite{awasthi15}.
However, there are several approximation algorithms known, such as a polynomial-time constant-factor approximation algorithm \cite{kanungo02} and a $(1+\epsilon)$-approximation algorithm with runtime polynomial in $N$ and $D$ \cite{kumar04}. 
The $K$-median problem is a variant of the $K$-means problem that uses the Euclidean instead of the squared Euclidean distance. 
Just as the $K$-means problem, the $K$-median problem is NP-hard, even for $D=2$ \cite{megiddoS84}. 
However, it is known that the optimal solutions to the $K$-median problem have an inherently different structure than the solutions to the $K$-means problem.
Even in the plane, optimal solutions of the $1$-median problem are in general not expressable by radicals over $\Q$ \cite{bajaj88}.

Many practical applications make use of the fuzzy $K$-means (FM) algorithm, which does not yield any approximation guarantees. 
However, \cite{bezdek84} and \cite{bezdeck87} proved convergence of the FM algorithm to a local minimum or a saddle point of the objective function.
Among others, \cite{hoeppner03} and \cite{kim88} address the problem of determining and distinguishing whether the algorithm has reached a local minimum or a saddle point.
Furthermore, it is known that the algorithm converges locally, i.e. started sufficiently close to a minimizer, the iteration sequence converges to that particular minimizer \cite{hathaway86}.
However, to the best of our knowledge, there are no theoretical results on approximation algorithms for the fuzzy $K$-means problem.
\subsection{Overview}
The following technical part of the paper is divided in three parts.
In Section~\ref{sec:contrib} we give an overview on our results. 
In Section~\ref{sec:contrib:complexity} we formally state our result that the fuzzy $K$-means problem is not solvable by radicals.
In Section~\ref{sec:contrib:algorithms} we present our results on approximation algorithms.
In Section~\ref{sec:techniques} give an overview on our algorithmic techniques.
In Section~\ref{sec:sketch} we outline the analysis of our approximation algorithms. 
The interested reader can find fully detailed proofs in Section~\ref{sec:appendix}.
	\section{Our Contribution}\label{sec:contrib}
We initiate the complexity theoretical and algorithmic study of the fuzzy $K$-means problem.

\subsection{Complexity}\label{sec:contrib:complexity}
We say that a fuzzy $K$-means solution is not solvable by radicals if neither means nor memberships can be expressed in terms of $(+, -, \cdot, /, \sqrt[q]{\hspace*{11pt}})$ over the domain of the input. 

\begin{theorem}\label{thm:radicals}
The fuzzy $K$-means problem for $m = 2$, $K = 2$, $D\geq 1$, $X\subset\N$ and $\abs{X}\geq 6$ is in general not optimally solvable by radicals over $\Q$. 
That is, neither the coordinates of the mean vectors nor the membership values can be expressed in terms of $(+, -, \cdot, /, \sqrt[q]{\hspace*{11pt}})$.
\end{theorem}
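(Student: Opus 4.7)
The plan is to exhibit a single concrete instance of size $6$ whose optimal mean coordinates are roots of a polynomial over $\Q$ with a non-solvable Galois group; since every tower of radical extensions of $\Q$ sits inside a solvable extension, this rules out any radical expression. To make the algebra tractable I would pick an input of the form $X = \{s - a_3, s - a_2, s - a_1, s + a_1, s + a_2, s + a_3\} \subset \N$ with unit weights, where $s \in \N$ is large enough to force all six coordinates to be natural numbers and $0 < a_1 < a_2 < a_3$ are integers to be specified later. Because translating the data by $s$ induces the same translation on the optimal means, one can work in shifted coordinates where the data set lies symmetrically around the origin.

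For $m = 2$ and $K = 2$, eliminating the memberships via the first-order optimality condition $r_{nk} = (x_n-\mu_{3-k})^2 / [(x_n-\mu_1)^2 + (x_n-\mu_2)^2]$ collapses the objective to
\[
\phi_X^{(2)}(\mu_1,\mu_2) \;=\; \sum_{n=1}^{N} \frac{(x_n - \mu_1)^2 (x_n - \mu_2)^2}{(x_n - \mu_1)^2 + (x_n - \mu_2)^2},
\]
using the identity $a^2 b^4 + a^4 b^2 = a^2 b^2 (a^2 + b^2)$. I would then argue that, on a symmetric point set, the non-degenerate global minimum is attained along the antisymmetric slice $\mu_2 = -\mu_1 = \mu \geq 0$; this can be done either by reparameterizing in the variables $(\mu_1+\mu_2, \mu_2-\mu_1)$ and exploiting the induced $\mathbb{Z}/2$-symmetry, or by comparing any non-symmetric candidate with its mirror image. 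Substituting $\mu_2 = -\mu_1 = \mu$ and $t = \mu^2$ reduces the objective to the rational function
\[
\phi(t) \;=\; \sum_{i=1}^{3} \frac{(a_i^2 - t)^2}{a_i^2 + t},
\]
whose derivative, after clearing the common denominator $\prod_i (a_i^2+t)^2$, gives a polynomial equation $p_X(t) = 0$ of degree $6$ over $\Z[a_1^2, a_2^2, a_3^2]$ whose roots contain $t^{\ast} = (\mu^{\ast})^2$.

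The heart of the proof is selecting $(a_1, a_2, a_3)$ so that $p_X(t)$ possesses an irreducible factor of degree at least $5$ over $\Q$ with a non-solvable Galois group. The search would proceed by computer algebra: for candidate triples one factors $p_X$ over $\Q$, certifies irreducibility of the relevant factor by Eisenstein-type criteria or by reduction modulo small primes, and identifies its Galois group via the Dedekind--Chebotarev correspondence, which reads off cycle types of elements of the Galois group from factorization patterns modulo primes of good reduction. For an irreducible quintic factor, exhibiting both a transposition and a $5$-cycle among these cycle types certifies that its Galois group equals $S_5$, which is not solvable; if $p_X$ is irreducible of degree $6$, the analogous certification via $S_6$ works equally well.

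The main obstacle is verifying two conditions simultaneously for the chosen triple: first, that $p_X(t)$, or at worst the factor of $p_X$ that actually vanishes at the global minimizer $t^{\ast}$, has degree at least $5$ and non-solvable Galois group; second, that $t^{\ast}$ is not a root of some spurious lower-degree rational factor. The latter is delicate because $p_X$ may factor off rational pieces corresponding to degenerate configurations such as $\mu_1 = \mu_2$, and these must be divided out before Galois theory is applied; a numerical approximation of $t^{\ast}$, together with bounds separating it from the rational roots of the small factors, is a clean way to pin down which irreducible factor it belongs to. Once a single explicit $X$ is found meeting both conditions, Galois' solvability theorem closes the argument for the means, and the corresponding statement for the membership values follows immediately because each $r_{nk}$ is a rational function of $\mu_1, \mu_2$ with integer coefficients, so any radical expression for the $r_{nk}$ would yield one for the $\mu_k$.
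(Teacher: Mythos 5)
Your overall strategy --- an explicit symmetric six-point instance, elimination of the memberships to get $\phi_X^{(2)}(\mu_1,\mu_2)=\sum_n\bigl((x_n-\mu_1)^{-2}+(x_n-\mu_2)^{-2}\bigr)^{-1}$, reduction to a univariate polynomial over $\Q$, certification of a non-solvable Galois group via factorization patterns modulo good primes, and Galois' solvability theorem --- is exactly the paper's. The paper takes $X=\{-3,-2,-1,1,2,3\}$ (your template with $a_i=i$), arrives at a degree-$6$ polynomial $h$ in $t=\mu^2$, and shows $\Gal(h)\cong S_6$ using the good primes $11$, $17$, $89$ and Bajaj's criterion. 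However, there is a genuine gap where you restrict to the antisymmetric slice. Symmetry of $X$ only says that $\phi$ is invariant under the involution $(\mu_1,\mu_2)\mapsto(-\mu_2,-\mu_1)$, i.e.\ that the \emph{set} of global minimizers is closed under it; it does not force any minimizer to be a fixed point. In your coordinates $(u,v)=(\mu_1+\mu_2,\mu_2-\mu_1)$ the objective is even in $u$, but an even function can attain its minimum at $\pm u_0\neq 0$; and ``comparing a non-symmetric candidate with its mirror image'' is vacuous because the mirror image has identical cost. So neither of your two suggested justifications establishes $\mu_2=-\mu_1$, and without it your polynomial $p_X(t)$ only captures critical points of the restriction of $\phi$ to the slice, not necessarily the global minimizer. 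The paper closes this differently: it first proves $\sgn(\mu_1^*)\neq\sgn(\mu_2^*)$ by a cost comparison (any one-sided pair costs more than $\km_X(\{2,-2\})=4$), notes the optimal means lie in the convex hull of $X$, and then solves the full two-variable first-order system exactly with a CAS on that region; antisymmetry of the optimum comes \emph{out} of that computation rather than being assumed.

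Two smaller points. First, your closing implication for the memberships runs backwards: from ``$r_{nk}$ is a rational function of $\mu_1,\mu_2$'' you can only conclude that radical means give radical memberships. What you need is the converse, which follows from Equation~\eqref{eq:opt-means}: $\mu_k=\sum_n r_{nk}^2x_nw_n/\sum_n r_{nk}^2w_n$ is a rational function of the memberships and the rational data, so radical memberships would force radical means. Second, a proof must exhibit the instance rather than describe a search, and you must certify both that the relevant factor is irreducible of degree at least $5$ and that $t^*$ is a root of \emph{that} factor; for the paper's instance this is immediate because $h$ is already irreducible of degree $6$ modulo $11$, so there are no spurious small factors to rule out.
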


This result is an application of the technique used by Bajaj \cite{bajaj88} who proved the same result for the $K$-median problem.
Notably our result already holds for $m=2$ and in one-dimensional space. 
For instance, we show that an optimal solution of the fuzzy $2$-means problem (with $m=2$) for the set $X=\{-3,-2,-1,1,2,3\}$ is not solvable by radicals over $\Q$. 
In contrast, the $K$-means and $K$-median problem can both even be solved efficiently for $D=1$.
As for $m=2$, it is noteworthy that in this case the first-order optimality conditions for means and memberships (cf. Equations~\eqref{eq:opt-membership} and \eqref{eq:opt-means}) lead to rationals in the input domain, respectively.
A consequence of the inexpressibility by radicals is that no algorithm can solve the fuzzy $K$-means problem optimally if it only uses arithmetic operations and root extraction to obtain the zeroes of an algebraic equation.
A more detailed discussion of the implications of unsolvability by radicals can be found in \cite{bajaj88}.

\subsection{Approximation Algorithms}\label{sec:contrib:algorithms}
We present the first $(1+\epsilon)$-approximation algorithms for the fuzzy $K$-means problem.

\subsubsection{A PTAS For Fixed \texorpdfstring{$K$}{K} and \texorpdfstring{$m$}{m}}
We present the first PTAS for the fuzzy $K$-means problem, assuming a constant number of clusters $K$ and a constant fuzzifier $m$.
That is, for any given $\epsilon \in[0,1]$, our algorithm computes an $(1+\epsilon)$-approximation to the fuzzy $K$-means problem in time polynomial in the number of points $N$ and dimension $D$.

\begin{theorem}\label{thm:sfm:det}
	There is a deterministic algorithm that, given $X = \{(x_n,w_n)\}_{n\OneTo{N}}\subset\R^D\times\Q_{\geq 0}$, $K\in\N$, $m\in\N$, and $\epsilon\in(0,1]$, computes a solution $(C,R)$ such that 
	\[ \phiXm(C,R) \leq \left(1+\epsilon\right) \phiopt\ . \]
	The algorithms' runtime is bounded by
	$D\cdot N^{\mathcal{O}\left( K^2 \log(K) \cdot \frac{1}{\epsilon} \left( m \log\left(\frac{m}{\epsilon}\right) +   \log\left(\frac{\wmax}{\wmin}\right)\right) \right)}$, 
	where $\wmax = \max_{n\OneTo{N}}w_n$ and $\wmin = \min_{n\OneTo{N}}w_n$. 
\end{theorem}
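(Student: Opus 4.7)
The plan is to reduce the fuzzy $K$-means problem to a discrete search over a polynomial-size family of candidate mean vectors, exploiting the observation highlighted in the abstract that every soft clustering admits a hard clustering with comparable cost.

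The first step, which I expect to be the main obstacle, is to establish a \emph{soft-to-hard reduction}: for any soft clustering $(C,R)$ there is a partition $C_1 \cup \cdots \cup C_K = X$ such that the fuzzy solution obtained by taking the weighted centroids $\mu_k = (\sum_{n \in C_k} w_n x_n)/(\sum_{n \in C_k} w_n)$ together with the induced optimal memberships via Equation~\eqref{eq:opt-membership} has cost at most a $(1 + \epsilon/2)$-factor larger than $\phiopt$. The natural candidate partition assigns each point to the cluster maximizing $r_{nk}$, but the analysis is delicate because the fuzzy cost weights $\norm{x_n - \mu_k}^2$ by $r_{nk}^m$, so a point with $r_{nk}$ only slightly smaller than $r_{nl}$ still contributes non-trivially to the ``wrong'' cluster. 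I would exploit the closed-form Equation~\eqref{eq:opt-membership} to show that the $r_{nk}^m$ factors decay sufficiently fast in $\norm{x_n - \mu_k}$; this is where the factor $m\log(m/\epsilon)$ in the final runtime emerges, since membership discretization must be done at a resolution fine enough that taking the $m$-th power does not amplify rounding errors beyond $\epsilon$.

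With such a hard clustering in hand, the second step applies a weighted generalization of Inaba's centroid sampling lemma: for any cluster, a sample of roughly $s = O((1/\epsilon)(m\log(m/\epsilon) + \log(\wmax/\wmin)))$ points, drawn with probability proportional to weights, has centroid within a $(1+\epsilon)$-factor in 1-means cost of the true weighted centroid. The $\log(\wmax/\wmin)$ dependence reflects the need to deterministically cover the heavy points rather than relying on probabilistic weight-proportional sampling.

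To derandomize and obtain a deterministic algorithm, enumerate all subsets $S \subseteq X$ with $|S| \leq s$, giving $N^{O(s)}$ candidate centroids; then try all $K$-tuples of such candidates, producing $N^{O(Ks)}$ candidate center sets. For each such $C$, compute the induced optimal memberships via Equation~\eqref{eq:opt-membership} in $O(NKD)$ time and evaluate $\phiXm(C,R)$; output the cheapest. Combining this enumeration with an extra $K\log K$ factor arising from iterating over the combinatorial patterns used in the soft-to-hard reduction yields the claimed $D \cdot N^{O(K^2 \log(K) \cdot (1/\epsilon)(m\log(m/\epsilon) + \log(\wmax/\wmin)))}$ runtime.
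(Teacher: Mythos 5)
Your high-level architecture (replace the soft clustering by a hard one, enumerate candidate centroids from small subsets, evaluate each candidate via Equation~\eqref{eq:opt-membership}) matches the paper's, but the step you yourself flag as the main obstacle is exactly where the proposal breaks down, and the fix is not a refinement of your argmax partition. The paper does \emph{not} produce a partition of $X$ whose weighted centroids give a $(1+\epsilon/2)$-approximation. It proves (Theorem~\ref{thm:hard-clustering}) the existence of pairwise disjoint subsets $C_k\subseteq X$ that need not cover $X$ and need not be convex, obtained by the probabilistic method: each point is placed in cluster $k$ with probability $r_{nk}^m$ and left unassigned with probability $1-\sum_k r_{nk}^m$. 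This matters because the fuzzy mean $\mu_k$ is the $r_{nk}^m$-weighted centroid, so any covering partition (argmax or otherwise) weights each point fully by $w_n$ and its cell centroid can be far from $\mu_k$. Moreover, the guarantees one actually gets are much weaker than what you assert: $w(C_k)\geq R_k/2$, $\norm{\mu(C_k)-\mu_k}^2\leq \frac{\epsilon}{2R_k}\phiXmk$, and $\km(C_k)\leq 4K\cdot\phiXmk$ --- a factor $4K$ on the internal cost, not $1+\epsilon$. The $(1+\epsilon)$ bound is recovered only later, by running the superset-sampling/enumeration step at precision $\epsilon/(16K)$ and combining with Lemma~\ref{lem:magic-formula}. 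You also omit a necessary preprocessing step: fuzzy clusters can have arbitrarily small weight $R_k$, in which case no useful hard cluster exists and no enumeration of small subsets can hit its centroid; the paper first passes to an $(\Rmin,K)$-balanced solution (Lemma~\ref{lem:existence-of-Rmin-balanced-approx}), dropping negligible clusters at a $(1+\epsilon)$ cost.

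Your accounting of the runtime exponent is also off in a way that reflects the missing mechanism. The factor $m\log(m/\epsilon)+\log(\wmax/\wmin)$ does not come from a membership-discretization resolution or from enlarging the Inaba sample size to ``cover heavy points.'' It arises because the deterministic enumeration must simulate weight-proportional sampling: the paper replaces $X$ by $X'$ containing $c=\lceil 2\cdot 16^{m+1}m^mK^{2m+1}\wmax/(\wmin\epsilon^{m+1})\rceil$ copies of each point so that the balanced clusters satisfy the weight threshold of Theorem~\ref{thm:hard-clustering}, and then enumerates all multisets of size $\cO(K/\epsilon)$ over $X'$; the exponent becomes $\cO(K^2/\epsilon)\cdot\log(cN)/\log N$, and $\log c$ is precisely where $m\log(m/\epsilon)$ and $\log(\wmax/\wmin)$ enter. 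Enumerating plain subsets of $X$ of size $s$, as you propose, does not account for weights at all. In short, the proposal names the right skeleton but is missing the two ideas that make it work --- the probabilistic-method existence of (non-covering, weight-$R_k$) hard clusters with only an $\cO(K)$-factor cost guarantee, and the reduction to balanced solutions plus point duplication --- so as written the central lemma is unsupported and, in the argmax form stated, false in general.
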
 

The main idea behind our result is to exploit the existence of a hard clusters that exhibit characteristics similar to those of a fuzzy clusters. 
By combining this result with a sampling technique which is well known from the $K$-means problem and applying exhaustive search, we obtain the algorithm.

\subsubsection{A Fast Deterministic \texorpdfstring{$(1+\epsilon)$}{(1+eps)}-Approximation Algorithm}
By using a completely different technique, we obtain a deterministic algorithm whose runtime almost linearly depends on $N$. 
On the negative side, we have to give up the linear dependence on the dimension $D$ for this. 

\begin{theorem}\label{thm:candidate-means-algo}
	There is a deterministic algorithm that, given $X = \{x_n\}_{n\OneTo{N}}\subset\R^D$, $K\in\N$, $m\in\N$, and $\epsilon\in(0,1]$, computes a solution $(C,R)$ such that
	\[ \phiXm(C,R) \leq (1+\epsilon) \phiopt\ . \]
	The algorithms' runtime is bounded by $N\left(\log(N)\right)^K K^{\cO\left(K^2 D \log(1/\epsilon) m \right)}$.
\end{theorem}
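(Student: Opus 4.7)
The plan is to construct a polynomial-size set $\mathcal{M}\subset\R^D$ of candidate means with the property that some $K$-subset $\tilde C\subseteq\mathcal{M}$ satisfies $\phiXm(\tilde C,R^\ast(\tilde C))\leq (1+\epsilon)\phiopt$, where $R^\ast(C)$ denotes the memberships obtained in closed form from Equation~\eqref{eq:opt-membership} for any fixed set of means $C$. Given such a set $\mathcal{M}$, one exhaustively searches over all $K$-subsets; a tree-based enumeration with geometric pruning, detailed below, keeps the total cost within the stated bound.

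First I would build the candidate set $\mathcal{M}$ as follows. Starting from a constant-factor bicriteria approximation for $X$ (obtainable in near-linear time by adapting $K$-means seeding, which is valid since hard clusterings control fuzzy ones by the argument underlying Theorem~\ref{thm:sfm:det}), I would place, around each of the $\cO(N)$ seed points, an exponential sequence of $\cO(\log N)$ grids of geometrically shrinking side length. Each grid would be $D$-dimensional with $K^{\cO(D\log(1/\epsilon) m)}$ points, chosen fine enough that simultaneously snapping each optimal mean $\mu_k^\ast$ to its nearest grid point perturbs every pointwise contribution $r_{nk}^m\|x_n-\mu_k\|^2$ by a multiplicative factor of at most $1+\epsilon/K$. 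This yields $|\mathcal{M}| = N\cdot K^{\cO(D\log(1/\epsilon) m)}$, and some $K$-subset of $\mathcal{M}$ is then a $(1+\epsilon)$-approximation.

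Second, I would avoid the naive $|\mathcal{M}|^K$ enumeration by walking a kd-tree-style partition of $\mathcal{M}$ to depth $K$, pruning any partial $K$-tuple whose conservative cost lower bound already exceeds the current best. The $(\log N)^K$ factor in the runtime comes from the depth-$K$ descent, while the $K^{\cO(K^2 D \log(1/\epsilon) m)}$ factor counts the surviving branches. Inside the tree, evaluating $\phiXm(C,R^\ast(C))$ for a given $K$-tuple $C$ reduces to computing the $N$ closed-form memberships from Equation~\eqref{eq:opt-membership} and summing, in time $\cO(NKD)$.

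The main obstacle will be establishing the candidate-set guarantee of the first step. In hard $K$-means each point contributes only the squared distance to its single nearest center, so an $\epsilon$-net placed around the optimal centers trivially suffices. In the fuzzy setting, each point's contribution $\sum_{k=1}^K r_{nk}^m\|x_n-\mu_k\|^2$ depends, via Equation~\eqref{eq:opt-membership}, on \emph{every} center through a power law whose exponent involves $m$. To bound how all memberships shift simultaneously when the means are perturbed, one has to prove a Lipschitz-type estimate whose constant degrades in $K$ and $m$; this is exactly what forces the grid precision $K^{\cO(D\log(1/\epsilon) m)}$ and, in turn, dictates the exponent in the final runtime.
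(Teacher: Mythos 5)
Your high-level plan --- build a grid-based candidate set of means in the spirit of the Har-Peled--Mazumdar coreset construction and then search it exhaustively --- is the same route the paper takes, but three of your concrete steps would fail as described. First, you center your grids on $\cO(N)$ seed points, so $\abs{\mathcal{M}} = N\cdot K^{\cO(Dm\log(1/\epsilon))}$ and honest enumeration of $K$-subsets costs $N^K$; the kd-tree branch-and-bound you invoke to recover $N(\log N)^K$ carries no worst-case guarantee, and nothing in your argument bounds the number of surviving branches. The paper instead centers exponential rings on only the $K$ centers of a \emph{deterministic} constant-factor $K$-means approximation (Matousek's algorithm, which itself contributes the $N(\log N)^K\epsilon^{-2K^2D}$ term), obtaining $\abs{\cG}=\cO\left(K^{mD+1}\epsilon^{-D}m\log(mK/\epsilon)\log N\right)$ with only a single factor of $\log N$, so that plain exhaustive search over all $\abs{\cG}^K$ subsets already fits the claimed runtime. (Note also that the randomized $K$-means seeding you propose breaks determinism, and that one must separately argue that some near-optimal solution has all its means inside the union of the rings --- the paper does this by deleting far-away means whose induced hard clusters are empty, at a cost factor of $(1+\epsilon/(4K))$ per deletion.)

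Second, your grid-fineness criterion --- that snapping each $\mu_k^\ast$ to the grid perturbs every pointwise contribution $r_{nk}^m\norm{x_n-\mu_k}^2$ by a multiplicative $1+\epsilon/K$ --- is unattainable: a data point may lie arbitrarily close to (or coincide with) an optimal mean, so that term can be $0$ and any perturbation changes it by an unbounded relative factor. Relatedly, the Lipschitz estimate on the memberships that you flag as the main obstacle is a dead end, because $\rnk$ as a function of the means is not Lipschitz near $\mu_k = x_n$. The paper avoids comparing memberships altogether: since the induced memberships minimize $\phiXm(C,\cdot)$ for fixed $C$, one gets $\phiXm(C)-\phiXm(C')\le \sum_{n,k}(\rnk')^m\left(\norm{x_n-\mu_k}^2-\norm{x_n-\mu_k'}^2\right)$ (and symmetrically), reducing everything to a comparison of squared distances at \emph{fixed} memberships; this is then bounded additively against $\phiXm(C)$ via $\norm{c-a}^2-\norm{c-b}^2\le\norm{a-b}^2+2\norm{a-b}\norm{c-b}$ together with the fuzzy-versus-hard relation $\km_X(C)\le K^{m-1}\phiXm(C)$. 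That Cauchy--Schwarz relation, not a membership Lipschitz constant, is what forces the grid resolution to scale with $K^{m-1}$ and produces the $m$ in the exponent of the runtime.
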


The runtime of this algorithm is not comparable with the runtime of the algorithm from Theorem~\ref{thm:sfm:det}. 
However, comparing the terms $N\log(N)^K$ and $N^{\cO(K^2\log(K))}$ with one another, we find that the runtime of the algorithm from Theorem~\ref{thm:sfm:det} depends much stronger on $K$ than the runtime of our algorithm from Theorem~\ref{thm:candidate-means-algo}. 
For instance, assuming $K^2 D=\cO\left({\log(N)}/{\log\log(N)}\right)$,  the runtime of our algorithm from Theorem~\ref{thm:candidate-means-algo} is still polynomial in $N$, i.e. $N^{\cO\left(\log\left(\frac{1}{\epsilon}\right) \right)}$, assuming that $m$ is constant. 
In comparison, the runtime of our PTAS from Theorem~\ref{thm:sfm:det} would then be exponential in $N$. 
Hence, Theorem~\ref{thm:sfm:det} and Theorem~\ref{thm:candidate-means-algo} complement each other. 

The idea behind our algorithm from Theorem~\ref{thm:candidate-means-algo} is the same as behind the coreset construction of \cite{HarPeled03} as it is used by \cite{Chen09}. 
That is, we construct a small set of good candidate means. 
After generating this candidate set, our algorithm simply tests all these candidates and chooses the best one.

\subsubsection{A Fast Randomized \texorpdfstring{$(1+\epsilon)$}{(1+eps)}-Approximation Algorithm}
Last, we show that there is a randomized algorithm with runtime linear in $N$ and $D$. 
However, in return for this speedup, this algorithm has some requirement on the input sets. 
More precisely, our algorithm from Theorem~\ref{thm:sfm:rand} approximates the best $\left(\alpha \sum_{n=1}^N w_n,\ K\right)$-balanced solution. 

\begin{definition}[$(\Rmin,K)$-balanced]\label{def:intro:r-balanced}
 Let $X = \{(x_n,w_n)\}_{n\OneTo{N}}\subset\R^D\times\R_{\geq 0}$. 
 Given a solution with memberships $R= \{\rnl\}_{n\OneTo{N},l\OneTo{L}}$, we denote the {weight of the $l^{th}$ fuzzy cluster} by 
\begin{align} 
   R_l \coloneqq \sum_{n=1}^N \rnl^m w_n\ . \label{eq:Rk}
\end{align}
We say that the memberships $R$ are {$(\Rmin,K)$-balanced} if 
\begin{align*} 
  L\leq K\quad \mbox{ and }\quad \Rmin \leq \min_{l\OneTo{L}} R_l  \ . 
\end{align*}
An optimal $(\Rmin,K)$-balanced solution has smallest cost among all solutions with $(\Rmin,K)$-balanced membership values. 
An optimal $(0,K)$-balanced solution is an optimal solution to the fuzzy $K$-means problem. 
\end{definition}

\begin{theorem}\label{thm:sfm:rand}
	There is a randomized algorithm that, given $X=\{(x_n,w_n)\}_{n\OneTo{N}}\subset \R^D\times \Q_{\geq 0}$, $K\in\N$, $m\in\N$, $\epsilon\in(0,1]$, and $\alpha\in(0,1]$, 
	computes $(C,R)$ such that with constant probability
	\[ \phiXm\left(C,R\right) \leq \left(1+\epsilon\right) \phiXm\left(C^{opt}_{\Rmin,K},R^{opt}_{\Rmin,K}\right)\ , \]
	where $\left(C^{opt}_{\Rmin,K},R^{opt}_{\Rmin,K}\right)$ is an {optimal} $(\Rmin,K)$-balanced solution with 
	\[ \Rmin =  \alpha \sum_{n=1}^N w_n\ .\] 
	The algorithms' runtime is bounded by 
	$N \cdot D \cdot 2^{\mathcal{O}\left( K^2\cdot \frac{1}{\epsilon}\log\left(\frac{1}{\alpha\epsilon}  \right)\right)}$.
\end{theorem}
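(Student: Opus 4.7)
The plan is to adapt the superset-sampling paradigm of Kumar--Sabharwal--Sen (originally designed for $K$-means) to the fuzzy setting, using the $(\mathcal{B},K)$-balance assumption $R_l \geq \alpha W$ (where $W = \sum_n w_n$) as a handle on the sampling probabilities. The algorithm will (i) draw one weighted random sample $S$ of size $T = \mathcal{O}(K/(\alpha\epsilon))$ from $X$ with probabilities $w_n/W$, (ii) enumerate all ordered $K$-tuples $(S_1,\dots,S_K)$ of disjoint subsets of $S$, each of size $\tau = \mathcal{O}(1/\epsilon)$, and use their means as candidate mean tuples $C = (\mu_1,\dots,\mu_K)$, (iii) for each candidate $C$ compute the closed-form optimal memberships via Equation~(\ref{eq:opt-membership}) and evaluate $\phi_X^{(m)}(C,R)$, and (iv) return the cheapest candidate.

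First I would establish a fuzzy analogue of Inaba's sampling lemma: for a fuzzy cluster $l$ with optimal mean $\mu_l = \sum_n r_{nl}^m w_n x_n / R_l$, a sample of size $\mathcal{O}(1/\epsilon)$ drawn proportionally to $r_{nl}^m w_n$ has, in expectation, a mean whose contribution to the fuzzy objective exceeds that of $\mu_l$ by at most an $\epsilon$-fraction of the optimal cluster cost. The proof is the usual second-moment computation transplanted into the weighted fuzzy setting. Then I would bridge this with the actually executable uniform-weighted sampling by a rejection-style coupling: since $R_l \geq \alpha W$, the law of a $w$-weighted draw ``contains'' the law of a $(r_{nl}^m w_n)$-weighted draw, scaled by $\alpha$, so among a uniform sample of size $\mathcal{O}(1/(\alpha\epsilon))$ some subset of size $\tau$ is distributed essentially as a sample from the cluster-specific distribution, with constant probability.

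Second I would lift this to $K$ clusters simultaneously. Unlike the hard KSS algorithm, I cannot prune points already assigned to found centers, because in a fuzzy clustering every point contributes to every cluster; instead the balance condition guarantees that every cluster has weight at least $\alpha W$ in the original distribution, so a single sample of size $T = \mathcal{O}(K/(\alpha\epsilon))$ is simultaneously representative of all $K$ clusters with constant probability (by a union bound over the $K$ per-cluster events, with the per-cluster success probability amplified by taking slightly more samples). Exhaustive enumeration over the $\binom{T}{\tau}^K \le 2^{\mathcal{O}(K \cdot \tau \log T)} = 2^{\mathcal{O}(K^2/\epsilon \cdot \log(1/(\alpha\epsilon)))}$ candidate tuples then guarantees that at least one candidate tuple uses, for every $l$, a subset that matches the good subset promised by the sampling lemma. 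Each candidate is evaluated in time $\mathcal{O}(ND)$ by Equation~(\ref{eq:opt-membership}), giving the claimed runtime.

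The main obstacle will be the rejection-coupling step: in hard clustering a cluster is a literal subset of points, so conditioning on ``the sample hits cluster $l$'' immediately yields independent uniform samples from that cluster, whereas in the fuzzy case no such partition exists and the $r_{nl}$ are unknown. I expect the cleanest route is to define, for each cluster $l$, a virtual data set in which point $n$ occurs with weight $r_{nl}^m w_n$, note that this virtual set has total weight $R_l \geq \alpha W$, and show via a variance bound that a uniform $w$-weighted sample of size $\mathcal{O}(1/(\alpha\epsilon))$ contains a subset of the right size whose empirical mean on the virtual set is $\epsilon$-close to $\mu_l$ in the relevant cost sense; the enumeration then blindly tries all subsets, circumventing the fact that the virtual weights are unknown to the algorithm. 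Care will also be needed to ensure that the collective success event (one good subset per cluster, all disjoint inside a single sample $S$) still has constant probability, which is where the hidden $K$-factor in the sample size $T$ enters.
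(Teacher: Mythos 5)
Your plan is sound and reaches the stated bound, but it takes a genuinely different route from the paper. The paper never samples from the fuzzy clusters directly: it first proves an existence theorem (Theorem~\ref{thm:hard-clustering}, via the probabilistic method with Chebyshev/Markov bounds) saying that every fuzzy cluster of weight $R_l\geq 16K\wmax/\epsilon$ is imitated by an actual hard subset $C_l\subseteq X$ with comparable weight, mean and cost; it then applies the standard (set-valued) superset sampling of Theorem~\ref{thm:superset-sampling} to approximate $\mu(C_l)$, pays for the detour with a two-term triangle inequality $\norm{\mu_l-\tmu_l}^2\leq 2\norm{\mu_l-\mu(C_l)}^2+2\norm{\mu(C_l)-\tmu_l}^2$ (Proposition~\ref{prop:sfm}), and finally duplicates every point $\lceil 16K/(\alpha\epsilon)\rceil$ times to meet the weight threshold of the existence theorem. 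You instead treat the $l^{th}$ fuzzy cluster itself as a fractionally weighted copy of $X$ (weights $\rnl^m w_n$, total weight $R_l\geq\alpha W$, mean exactly $\mu_l$, variance exactly $\phiXml(\rnSet)/R_l$) and run superset sampling on that virtual set via the rejection coupling ``accept a $w$-weighted draw of $x_n$ with probability $\rnl^m$.'' This collapses the existence theorem, one of the two error terms, and the duplication step, at the price of having to state and prove a fractional-subset version of Corollary~\ref{lem:superset-sampling}; that extension is routine and your bias--variance computation $\phiXm(\hat C,R)=\phiXm(R)+\sum_l R_l\norm{\mu_l-\hat\mu_l}^2$ (Lemma~\ref{lem:magic-formula} applied cluster-wise) closes the argument. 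Your version is arguably cleaner for this particular theorem, though it does not yield the structural Theorem~\ref{thm:hard-clustering} that the paper reuses for the deterministic PTAS.

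Two details to repair before this is airtight. First, drop the requirement that the $K$ subsets of $S$ be \emph{disjoint}: since fuzzy clusters overlap, the same sample point can be accepted for several virtual clusters, so the good subsets need not be disjoint; the paper simply takes all $K$-tuples ($T=M^K$), which costs nothing in the exponent. Second, ``amplified by taking slightly more samples'' hides a real step: the Markov bound on the empirical mean of a block of $\lceil 2/\epsilon\rceil$ accepted draws succeeds only with constant probability, so to survive the union bound over $K$ clusters you must either draw $\mathcal{O}(\log K)$ independent samples (as the paper does, with $R=\lceil 10\log(2K)\rceil$ repetitions) or partition the accepted draws into $\mathcal{O}(\log K)$ blocks and argue one succeeds; simply enlarging a single block does not boost its individual success probability and would otherwise inflate the exponent to $K^3$.
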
 
We can boost the probability of success to an arbitrary $1-\delta$ by simply repeating the algorithm $\log(1/\delta)$ times. 
Observe that the running time basically coincides with the running time of an algorithm that applies the superset sampling technique to the $K$-means problem \cite{ackermann10}.

The restriction $\Rmin = \alpha \sum_{n=1}^N w_n$ can also be seen as a restriction on the deviation between the single cluster weights $R_k$ and the average cluster weight $R_{avg}=1/K\cdot \sum_{k=1}^K R_k$. 
More precisely, using the Cauchy-Schwarz inequality, 
\begin{align*} 
   \frac{1}{K^{m-1}} \cdot \sum_{n=1}^N w_n
   \leq K\cdot R_{avg}   
   = \sum_{n=1}^N \left(\sum_{k=1}^K \rnk^m \right) w_n
   \leq \sum_{n=1}^N w_n\ . 
\end{align*}
Hence, if $\Rmin \geq \alpha \sum_{n=1}^N w_n$, then for all $k\OneTo{K}$ we have $R_k/R_{avg}\in[\alpha K, K^m]$.
For example, by choosing $\alpha=2^{-\mathcal{O}(K)}$ we allow a deviation of a factor $2^{-\mathcal{O}(K)}$ and obtain a runtime that is still linear in $N$ and becomes exponential in $K^3$.

The main idea behind this algorithm is the same as behind the PTAS described in Theorem~\ref{thm:sfm:det}. 
Knowing that for given fuzzy clusters there exist a hard clusters with similar characteristics, we apply a sampling technique known from the $K$-means problem. 
However, instead of combining this technique with exhaustive search, we directly apply the sampling technique to obtain our randomized algorithm from Theorem~\ref{thm:sfm:rand}. 
In other words, the PTAS from Theorem~\ref{thm:sfm:det} can be seen as a de-randomized version of this algorithm. 

	\section{Our Main Techniques}\label{sec:techniques}

In this section, we describe the techniques that we use to prove Theorems~\ref{thm:sfm:det}, \ref{thm:sfm:rand}, and \ref{thm:candidate-means-algo}. 
To this end, we use the following notation.
\begin{definition}[Induced Solution]
  Let $X\subset\R^D\times \R$. 
  Membership values $R$ induce the solution $(\tilde C,R)$ where $\tilde C$ contains  the corresponding optimal mean vectors (cf. Equation~\eqref{eq:opt-means}). 
  Mean vectors $C$ induce the solution $(C,\tilde R)$ where $\tilde R$ contains the corresponding optimal membership values (cf. Equation~\eqref{eq:opt-membership}). 
  We denote the costs of the induced solutions by $\phiXm(R)$ and $\phiXm(C)$, respectively. 
\end{definition}
Observe that for all means $C=\muKSet$ and memberships $R=\rNKSet$ we have
 \begin{align} 
\phiXm(C) \leq \phiXm(C,R) \quad\mbox{and}\quad  \phiXm(R) \leq \phiXm(C,R)\ . \label{eq:preliminary:only_rnk_or_mu}
\end{align}

\subsection{Structure of the Fuzzy \texorpdfstring{$K$}{K}-Means Problem}\label{sec:techniques:structure}

There are two aspects of the structure of the fuzzy $K$-means problem that we exploit extensively. 
First, there is a coarse but still useful relation between the $K$-means and the fuzzy $K$-means cost function. 
Recall that optimal solutions of the fuzzy $K$-means problem seem to have a substantially different structure than optimal solutions of the $K$-means problem (cf. Section~\ref{sec:contrib:complexity}). 
Nonetheless, the fuzzy $K$-means and $K$-means cost of solutions induced the same set of mean vectors differ by at most a factor of $K^{m-1}$.
We use this result when transfering the ideas behind the coreset construction of \cite{HarPeled03} in order to obtain a candidate set of means. 
\begin{definition}[$K$-means]
  For $X=\{(x_n,w_n)\}_{n\OneTo{N}}\subset \R^D\times \R_{\geq 0}$ and $C=\muKSet\subset\R^D$ we define $\km_X\left(C\right) \coloneqq\sum_{n=1}^N w_n \min_{k\OneTo{K}} \norm{x_n-\mu_k}^2$. 
\end{definition}
\begin{lemma}\label{lem:kmeans}
  Let $X\subset\R^D\times \R_{\geq 0}$, $m\in\N$, and $C\subset\R^D$ with $\abs{C}=K$. 
  Then, 
  \[ \frac{1}{K^{m-1}} \km_X(C) \leq \phiXm(C) \leq \km_X(C)\ .\]	
\end{lemma}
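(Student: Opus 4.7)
The plan is to prove both inequalities pointwise: fix a data point $x_n$ and show that its contribution under the induced (optimal) memberships sits between $w_n \min_k \norm{x_n-\mu_k}^2 / K^{m-1}$ and $w_n \min_k \norm{x_n-\mu_k}^2$. Summing over $n$ yields the lemma.

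First, I would compute the per-point contribution in closed form. Abbreviate $d_k = \norm{x_n - \mu_k}^2$ and assume for the moment that $d_k > 0$ for all $k$. Plugging the optimal memberships from Equation~\eqref{eq:opt-membership},
\[
  r_{nk} = \frac{d_k^{-1/(m-1)}}{S}, \qquad S := \sum_{l=1}^K d_l^{-1/(m-1)},
\]
into the cost term gives
\[
  \sum_{k=1}^K r_{nk}^m \, d_k \;=\; \frac{1}{S^m}\sum_{k=1}^K d_k^{-m/(m-1)}\cdot d_k \;=\; \frac{1}{S^m}\sum_{k=1}^K d_k^{-1/(m-1)} \;=\; \frac{1}{S^{m-1}}\enspace.
\]
So the contribution of $x_n$ to $\phiXm(C)$ is exactly $w_n / S^{m-1}$, while its contribution to $\km_X(C)$ is $w_n\, d_{\min}$ with $d_{\min} := \min_k d_k$.

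Second, I would sandwich $S$. On the one side, every summand is positive and one of them equals $d_{\min}^{-1/(m-1)}$, so $S \geq d_{\min}^{-1/(m-1)}$, which yields $1/S^{m-1} \leq d_{\min}$ and hence the upper bound $\phiXm(C) \leq \km_X(C)$. On the other side, each $d_k^{-1/(m-1)} \leq d_{\min}^{-1/(m-1)}$, so $S \leq K\, d_{\min}^{-1/(m-1)}$, giving $1/S^{m-1} \geq d_{\min}/K^{m-1}$ and hence the lower bound $\km_X(C)/K^{m-1} \leq \phiXm(C)$.

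Finally, I would handle the degenerate case $x_n = \mu_l$ for some $l$: here $d_{\min}=0$, and by the convention stated after Equation~\eqref{eq:opt-membership} the mass of $x_n$ can be placed entirely on such a $\mu_l$, so its contribution to $\phiXm(C)$ vanishes and both inequalities hold trivially. Summing the per-point bounds over $n\OneTo{N}$ gives the stated two-sided inequality. I do not expect a real obstacle here; the only mildly delicate point is the algebraic simplification of $\sum_k r_{nk}^m d_k$ to $1/S^{m-1}$, which is the step driving both bounds.
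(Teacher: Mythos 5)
Your proof is correct, but it takes a genuinely different route from the paper's. The paper proves the upper bound in one word ("obviously"): the hard assignment to the nearest center is a feasible choice of memberships achieving cost $\km_X(C)$, and $\phiXm(C)$ is by definition the cost at the optimal memberships, hence no larger. For the lower bound the paper uses the generic fact (via Cauchy--Schwarz / power-mean) that any memberships with $\sum_k r_{nk}=1$ satisfy $\sum_k r_{nk}^m \geq K^{-(m-1)}$, so $\frac{1}{K^{m-1}}\km_X(C) \leq \sum_n \bigl(\sum_k r_{nk}^m\bigr) w_n \min_k \norm{x_n-\mu_k}^2 \leq \phiXm(C)$. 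You instead compute the per-point induced cost in closed form as $w_n/S^{m-1}$ with $S=\sum_l \norm{x_n-\mu_l}^{-2/(m-1)}$ and sandwich $S$ between $d_{\min}^{-1/(m-1)}$ and $K\,d_{\min}^{-1/(m-1)}$; both bounds then fall out of the same identity. Your argument is self-contained and yields the exact value of $\phiXm(C)$ as a byproduct (the paper only records this identity much later, in the worksheet section, for $m=2$), while the paper's Cauchy--Schwarz step is more general in that the lower bound holds for \emph{arbitrary} memberships summing to one, a fact it reuses elsewhere (e.g.\ in the discussion of cluster weights after Theorem~\ref{thm:sfm:rand}). Your handling of the degenerate case $x_n=\mu_l$ is fine; the only restriction is that your closed form presupposes $m\geq 2$ (as the problem definition does), whereas the lemma is stated for $m\in\N$ --- a negligible edge case, since for $m=1$ both sides collapse to $\km_X(C)$.
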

\begin{proof}
	Obviously, $\phi_X^{(m)}(C) \leq \km_X(C)$.
        Let $X=\{(x_n,w_n)\}_{n\OneTo{N}}$. 
        Let $\rnkSet$ be the optimal memberships induced by $C=\muKSet$. 
	Using the Cauchy-Schwarz inequality,
	$\frac{1}{K^{m-1}} \cdot \km_X(C) \leq  \sum_{n=1}^N \left(\sum_{k=1}^K r_{nk}^m \right) w_n \left(\min_{k\OneTo{K}} \norm{x_n - \mu_k}^2\right) \leq \phi_X^{(m)}(C)$. 
\end{proof}

Second, we can ignore fuzzy clusters with too small a weight. 
For each optimal fuzzy $K$-means solution, there exists a fuzzy $L$-means solution with $L\leq K$ clusters such that each cluster has a certain minimum weight $\Rmin$ while the cost are only at most a factor $(1+\epsilon)$ worse than the cost of the optimal solution.
Recall that we denote such clusterings as $(\Rmin,K)$-balanced (cf. Definition~\ref{def:intro:r-balanced}). 
More precisely, $\Rmin$ only depends on $\epsilon$, $m$, $K$, and the smallest weight of a point in $X$.  
This result becomes important when we apply sampling techniques. 
When sampling points from $X$, we can only expect to sample points from a certain cluster if this cluster is large enough.

\begin{lemma}\label{lem:existence-of-Rmin-balanced-approx}
	Let $X=\{(x_n,w_n)\}_{n\OneTo{N}}\subset\R^D\times \R$, $m\in\N$, $K\in\N$, and $\epsilon\in[0,1]$.
	
	There exist $(\Rmin,K)$-balanced membership values $R$ such that
	 \[ \phiXm(R) \leq (1+\epsilon) \phiopt \] 	
	 where 
	 \[ \Rmin =  \left(\frac{\epsilon}{4mK^2}\right)^m \cdot \min_{n\OneTo{N}} w_n \ .\]
\end{lemma}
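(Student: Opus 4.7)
The plan is to start from an optimal solution $(\Copt,R^{opt})$, identify the clusters whose total weight is too small, and construct $\hat R$ by redistributing their membership mass proportionally onto the remaining clusters. Let $R_k^{opt}:=\sum_n (r_{nk}^{opt})^m w_n$, and split $[K]=L\cup S$ into the ``large'' clusters $L=\{k:R_k^{opt}\ge \Rmin\}$ and the ``small'' ones $S=[K]\setminus L$. The key initial observation is that a small cluster has \emph{pointwise} small memberships: for every $k\in S$ and every $n$,
\[ (r_{nk}^{opt})^m w_n \;\le\; R_k^{opt} \;<\; \Rmin \;=\; \left(\tfrac{\epsilon}{4mK^2}\right)^m \min_n w_n, \]
so $r_{nk}^{opt}<\epsilon/(4mK^2)$. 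Summing over $k\in S$, $s_n:=\sum_{k\in S}r_{nk}^{opt}\le \epsilon/(4mK)$, and hence $L_n:=1-s_n\ge 1/2$.

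Next I would define $\hat r_{nl}:=r_{nl}^{opt}/L_n$ for $l\in L$; this gives a valid membership assignment since $\sum_{l\in L}\hat r_{nl}=1$. Because $L_n\le 1$ we have $\hat r_{nl}^m\ge (r_{nl}^{opt})^m$, so each surviving cluster weight satisfies $\sum_n \hat r_{nl}^m w_n\ge R_l^{opt}\ge \Rmin$. Together with $|L|\le K$, this shows that $\hat R$ is $(\Rmin,K)$-balanced. (The set $L$ is nonempty: by Cauchy--Schwarz, $\sum_k R_k^{opt}\ge K^{-(m-1)}\sum_n w_n$, which for the chosen $\Rmin$ is much larger than $K\Rmin$, so some $R_k^{opt}$ must exceed $\Rmin$.)

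For the cost bound I would upper-bound $\phiXm(\hat R)$ by the cost of $\hat R$ together with the restricted means $\{\mu_l^{opt}\}_{l\in L}$:
\[ \phiXm(\hat R) \;\le\; \sum_n \sum_{l\in L} \hat r_{nl}^m w_n \norm{x_n-\mu_l^{opt}}^2 \;=\; \sum_n \frac{1}{L_n^m}\sum_{l\in L}(r_{nl}^{opt})^m w_n \norm{x_n-\mu_l^{opt}}^2. \]
The only nonroutine calculation is controlling the rescaling factor $L_n^{-m}=(1-s_n)^{-m}$. Using $(1-x)^{-m}\le e^{2mx}\le 1+4mx$ (valid since $2m s_n\le \epsilon/(2K)\le 1$), one obtains $L_n^{-m}\le 1+4m s_n\le 1+\epsilon/K\le 1+\epsilon$. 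Enlarging the inner sum to run over all $k\OneTo{K}$ finally yields $\phiXm(\hat R)\le (1+\epsilon)\phiopt$, as required.

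The main pitfall I foresee lies not in any single step but in the choice of reassignment rule. Naive alternatives --- such as dumping all of $s_n$ onto a single designated cluster $l^*$, or onto the nearest surviving mean --- introduce extra terms of order $m s_n\cdot w_n\norm{x_n-\mu_{l^*}^{opt}}^2$ that cannot be charged back to $\phiopt$ without losing a much larger factor than $(1+\epsilon)$. The proportional rescaling avoids this precisely because it preserves the per-point \emph{shape} of the membership distribution over $L$, so the cost inflation collapses into a uniform pointwise factor $L_n^{-m}=1+O(\epsilon/K)$ that is independent of the geometry of the $\mu_l^{opt}$.
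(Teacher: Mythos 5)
Your proof is correct, but it takes a genuinely different route from the paper's. The paper also isolates the clusters $l$ with $R_l<\Rmin$ and shows $r_{nl}\leq\frac{\epsilon}{4mK^2}$ pointwise, but then it does exactly what you flag as a pitfall: for each point it dumps the small membership $r_{nl}$ onto a \emph{single} heavy cluster $k(n)$ with $r_{nk(n)}\geq 1/K$. This works because the memberships are assumed to satisfy the first-order optimality condition \eqref{eq:opt-membership}, so $r_{nl}\leq r_{nk(n)}$ forces $\norm{x_n-\mu_{k(n)}}^2\leq\norm{x_n-\mu_l}^2$ and the extra term $\bigl((r_{nk(n)}+r_{nl})^m-r_{nk(n)}^m\bigr)w_n\norm{x_n-\mu_{k(n)}}^2\leq\frac{\epsilon}{2K}r_{nk(n)}^m w_n\norm{x_n-\mu_{k(n)}}^2$ charges directly to $\phiopt$; the paper then removes small clusters one at a time, accumulating $\bigl(1+\frac{\epsilon}{2K}\bigr)^K\leq 1+\epsilon$. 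Your proportional rescaling $\hat r_{nl}=r_{nl}/(1-s_n)$ removes all small clusters simultaneously and replaces that geometric argument by the purely analytic bound $(1-s_n)^{-m}\leq 1+4ms_n\leq 1+\epsilon$; it never uses the optimality of the memberships with respect to the means, only that the total cost of the starting solution equals $\phiopt$, so your argument in fact establishes the slightly stronger statement that \emph{any} memberships can be made $(\Rmin,K)$-balanced at a $(1+\epsilon)$ multiplicative loss in $\phiXm(C,R)$ for fixed $C$. All the supporting steps check out: $L\neq\emptyset$ via $\sum_k R_k\geq K^{1-m}\sum_n w_n>K\Rmin$, the balancedness from $\hat r_{nl}^m\geq r_{nl}^m$, and the initial inequality $\phiXm(\hat R)\leq\phiXm(\{\mu_l^{opt}\}_{l\in L},\hat R)$ from \eqref{eq:preliminary:only_rnk_or_mu}. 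The only quibble is with your closing remark: the single-cluster dump is not actually fatal, provided one exploits the membership optimality as the paper does.
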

\begin{proof}
  Consider an arbitrary but fixed solution $\muKSet$. 
  Let $\rnkSet$ be the optimal membership values induced by $\muKSet$. 
  Assume that for some $l\OneTo{K}$ we have $R_l =\sum_{n=1}^N \rnl^m w_n \leq \left(\frac{\epsilon}{4mK^2}\right)^m \cdot \min_{n\OneTo{N}}w_n$. 
  Thus, we have $r_{nl} \leq \frac{\epsilon}{4mK^2}$ for all $n\OneTo{N}$.
  
  Consider an arbitrary $n\OneTo{N}$. 
  Since $\sum_{k=1}^K \rnk =1$ and $\rnl\leq \frac{1}{K}$, there exists some $k(n)\OneTo{K}$ with $k(n)\neq l$ such that $r_{nk(n)} \geq \frac{1}{K}$. 
  Since $r_{nl} \leq \frac{\epsilon}{4mK^2}$, we have $r_{nl} \leq \frac{\epsilon}{4mK}r_{nk(n)}$. 
  Hence, 
  \begin{align} 
  (r_{nk(n)}+r_{nl})^{m} 
  \leq \left( 1+\frac{\epsilon}{4mK}\right)^m r_{nk(n)}^m
  \leq \left(1+\frac{\epsilon}{2K}\right)\cdot r_{nk(n)}^m\ .\label{eq:kn-member}
  \end{align}
  Due to the optimality of the $r_{nl}$ and $r_{nk(n)}$ (cf. Equation~\eqref{eq:opt-membership}) and since $r_{nl}\leq r_{nk(n)}$, we have
  \begin{align}
    \norm{x_n-\mu_{k(n)}}^2 \leq \norm{x_n-\mu_l}^2\ .\label{eq:kn-dist}
  \end{align}
  Hence, 
  \begin{align*}
   &\phiXm(\muKSet\setminus \{\mu_l\}) \\
   &\leq \sum_{n=1}^N \sum_{\substack{k\OneTo{K}\\ k\neq l,k(n)}} \rnk^{m} w_n\norm{x_n-\mu_k}^2 
   + \sum_{n=1}^N  (r_{nk(n)}+r_{nl})^{m} w_n \norm{x_n-\mu_{k(n)}}^2 \\ 
   &\leq \sum_{n=1}^N \sum_{\substack{k\OneTo{K}\\ k\neq l,k(n)}} \rnk^{m}w_n\norm{x_n-\mu_k}^2 + 
   \sum_{n=1}^N  \left(1+\frac{\epsilon}{2K}\right) r_{nk(n)}^m w_n\norm{x_n-\mu_{k(n)}}^2\tag{by Eq.~\eqref{eq:kn-member}}\\
   &\leq  \left(1+\frac{\epsilon}{2K}\right) \phiXm(\muSet) \tag{by Eq.~\eqref{eq:kn-dist}}\ .
  \end{align*}
  
  Now consider an optimal solution $\muKSet$. 
  Let $B \subset \muKSet$ be the set containing the means $\mu_l$ where $R_l\leq \left(\frac{\epsilon}{4mK^2}\right)^m \cdot \min_{n\OneTo{N}} w_n$. 
  Note that $B\leq K-1$. 
  Let $L = K-\abs{B}$. 
  Then, by the above there exists a set of membership values $\tilde r_L=\{\tilde r_{nl}\}_{n\OneTo{N}, l\OneTo{L}}$ such that
  \[ \phiXm(\muKSet\setminus B,\ \tilde r_L)\leq\left(1+\frac{\epsilon}{2K}\right)^K\phiopt
  \leq\left(1+\epsilon\right)\phiopt\   \]
  and $\sum_{n=1}^N (\tilde r_{nl})^m w_n \geq \left(\frac{\epsilon}{4mK^2}\right)^m \cdot \min_{n\OneTo{N}} w_n$ for all $l\OneTo{L}$. 
  Finally, observe that $\phiXm(\tilde r_L)\leq \phiXm(\muKSet\setminus B, \tilde r_L)$.
\end{proof}
\subsection{Sampling Techniques}\label{sec:techniques:sampling}

Our algorithms from Theorems~\ref{thm:sfm:det} and \ref{thm:sfm:rand} both use sampling techniques. 
First, we show that there exist hard clusters suitably imitating soft clusters. 
Second, we show how to construct a candidate set that contains good approximations of the means of these (unknown) hard clusters.

\subsubsection{Relating Fuzzy to Hard Clusters}\label{sec:techniques:sampling:soft-to-hard}

A fundamental result is that, given fuzzy clusters with not too small a weight, there always exist hard clusters that exhibit characteristics similar to those of the fuzzy clusters. 

\begin{definition}[Cost of a Fuzzy Cluster, Hard Clusters]\label{def:partials}
Let $X = \{(x_n,w_n)\}_{n\OneTo{N}}\subset\R^D\times \R$ and $K\in\N$. 
Given memberships $\rNKSet$ and induced means $\muKSet$, we let
\[  \phiXmk(\rnSet) \coloneqq \sum_{n=1}^N \rnk^m w_n \norm{x_n - \mu_k}^2\enspace .\]
for all $k\OneTo{K}$. For all hard clusters $C\subset X$, $C\neq\emptyset$, we define 
\begin{align*} 
  &w(C)\coloneqq \sum_{(w_n,x_n)\in C} w_n,
  \quad \mu(C)\coloneqq \frac{\sum_{(w_n,x_n)\in C} w_n\cdot x_n}{w(C)} 
  \quad \mbox{ and }   \\
 &\km(C)\coloneqq \sum_{(w_n,x_n)\in C} w_n \norm{x_n-\mu(C)}^2  \enspace .
\end{align*}
\end{definition}

\begin{theorem}[Existence of Similar Hard Clusters]\label{thm:hard-clustering}
	Let $X=\{(x_n,w_n)\}_{n\OneTo{n}}\subset \R^D\times \R_{\geq 0}$ and $\epsilon\in(0,1]$. 
	Let $\rnkSet$ be memberships values, and let $\muSet$ be the corresponding optimal mean vectors.
	
	If $\min_{k\OneTo{K}} R_k   \geq {16K \wmax}/{\epsilon}$, where $\wmax = \max_{n\OneTo{N}} w_n$, 
	then there exist pairwise disjoint sets $C_1,\ldots,C_K \subseteq X$ such that for all $k\OneTo{K}$
	\begin{align}
		 w(C_k)
		& \geq \frac{1}{2}R_k \label{eq:proof:r}\enspace , \\
		 \norm{\mu(C_k) - \mu_k}^2 
		& \leq   \frac{\epsilon}{2 R_k}  \cdot  \phiXmk\left(\rnSet\right) \label{eq:proof:mu} \enspace \mbox{ and}\\
		 \km(C_k) 
		& \leq 4K \cdot \phiXmk\left(\rnSet\right) \label{eq:proof:sigma} \enspace .
	\end{align}
\end{theorem}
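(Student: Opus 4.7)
The plan is to use a probabilistic construction: independently for each $n \in \{1,\dots,N\}$, assign $x_n$ to $C_k$ with probability $r_{nk}^m$ and to none of the $C_k$ with the remaining probability $1 - \sum_{l=1}^K r_{nl}^m$. This latter quantity is non-negative because $r_{nk}\in[0,1]$ gives $r_{nk}^m\leq r_{nk}$ and hence $\sum_l r_{nl}^m \leq \sum_l r_{nl}=1$. By construction the $C_k$ are pairwise disjoint, and writing $I_{nk}$ for the indicator that $n$ is placed in $C_k$, we immediately get the expectations $\E[w(C_k)]=R_k$, $\E\bigl[\sum_{n\in C_k} w_n x_n\bigr]=R_k\mu_k$ (by definition of the induced $\mu_k$), and $\E\bigl[\sum_{n\in C_k} w_n\|x_n-\mu_k\|^2\bigr]=\phiXmk(\rnSet)$. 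The proof will then produce, by a union bound, a realization of this random experiment satisfying all three desired inequalities simultaneously.

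The key steps, in order, are as follows. First, for \eqref{eq:proof:r} I would note that $\Var(w(C_k))=\sum_n w_n^2 r_{nk}^m(1-r_{nk}^m)\leq \wmax R_k$ and apply Chebyshev, obtaining $\Pr[w(C_k)<R_k/2]\leq 4\wmax/R_k\leq \epsilon/(4K)$ by the hypothesis $R_k\geq 16K\wmax/\epsilon$. Second, for \eqref{eq:proof:sigma} I would use that $\km(C_k)\leq \sum_{n\in C_k} w_n\|x_n-\mu_k\|^2$ (since $\mu(C_k)$ is the variance-minimizing center) and apply Markov directly to obtain $\Pr[\km(C_k)>4K\phiXmk(\rnSet)]\leq 1/(4K)$. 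Third, for \eqref{eq:proof:mu} I would expand $\sum_{n\in C_k} w_n(x_n-\mu_k)=\sum_n I_{nk}w_n(x_n-\mu_k)$; since $\sum_n r_{nk}^m w_n(x_n-\mu_k)=0$ by the definition of $\mu_k$, independence of the $I_{nk}$ eliminates the cross-terms of the expected squared norm and yields $\E\bigl[\bigl\|\sum_n I_{nk}w_n(x_n-\mu_k)\bigr\|^2\bigr]\leq \sum_n r_{nk}^m w_n^2\|x_n-\mu_k\|^2 \leq \wmax\,\phiXmk(\rnSet)$. Markov then gives that $\bigl\|\sum_{n\in C_k} w_n(x_n-\mu_k)\bigr\|^2\leq \epsilon R_k\phiXmk(\rnSet)/8$ with probability at least $1-8\wmax/(\epsilon R_k)\geq 1-1/(2K)$. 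Combined with the event from step one, $w(C_k)\geq R_k/2$ lets me divide by $w(C_k)^2\geq R_k^2/4$ to recover $\|\mu(C_k)-\mu_k\|^2\leq \epsilon\,\phiXmk(\rnSet)/(2R_k)$.

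Finally, a union bound over the three bad events and over $k\OneTo{K}$ gives a total failure probability of at most $\epsilon/4+1/4+1/2<1$ whenever $\epsilon<1$, so a deterministic choice of $C_1,\dots,C_K$ meeting all three conditions must exist; the boundary case $\epsilon=1$ is handled by passing to a slightly smaller $\epsilon'$, which only strengthens the bounds. The main obstacle is condition \eqref{eq:proof:mu}: a naive approach combining only the Markov bound on $\sum_{n\in C_k} w_n\|x_n-\mu_k\|^2$ with the lower bound $w(C_k)\geq R_k/2$ yields a factor of $K$, not $\epsilon$, in the numerator. The saving comes from the cancellation $\sum_n r_{nk}^m w_n(x_n-\mu_k)=0$, which turns a first-moment bound into a genuine variance-type bound of order $\wmax\cdot\phiXmk(\rnSet)$; this is where the assumption $R_k\geq 16K\wmax/\epsilon$ gets used essentially and where careful tracking of constants is needed to make all three failure probabilities sum to strictly less than one.
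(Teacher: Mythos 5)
Your proposal follows essentially the same route as the paper: the identical randomized rounding ($x_n$ joins $C_k$ with probability $r_{nk}^m$, independently), Chebyshev for the weights, Markov applied to the upper bound $\sum_n z_{nk}w_n\norm{x_n-\mu_k}^2 \geq \km(C_k)$ for the costs, and --- the step you rightly single out --- Markov applied to $\norm{\sum_n z_{nk}w_n(x_n-\mu_k)}^2$, whose expectation collapses to $\sum_n r_{nk}^m(1-r_{nk}^m)w_n^2\norm{x_n-\mu_k}^2\leq\wmax\cdot\phiXmk(\rnSet)$ precisely because $\sum_n r_{nk}^m w_n(x_n-\mu_k)=0$. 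All of your individual bounds check out, including the final division by $w(C_k)^2\geq R_k^2/4$.

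The one genuine weak spot is the boundary case $\epsilon=1$, where your failure probabilities sum to exactly $\epsilon/4+1/4+1/2=1$, so positive success probability is not guaranteed. Your proposed patch --- rerunning the argument with a slightly smaller $\epsilon'$ --- does not work, because the hypothesis $\min_{k}R_k\geq 16K\wmax/\epsilon$ becomes \emph{stronger} as $\epsilon$ shrinks; an instance satisfying it only at $\epsilon=1$ cannot be fed back in with any $\epsilon'<1$. The gap is easily repaired: either observe that your Chebyshev numerator $\sum_n r_{nk}^m(1-r_{nk}^m)w_n^2$ is strictly smaller than $\wmax R_k$ (or zero, in which case that failure probability vanishes), so the total failure probability is strictly below $1$; or do as the paper does and choose all $3K$ deviation thresholds so that each bad event has probability at most $1/(4K)$ (total $3/4$), deferring the use of the hypothesis on $R_k$ to a purely deterministic translation of the resulting inequalities into \eqref{eq:proof:r}--\eqref{eq:proof:sigma}.
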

\begin{proof}[Idea of the Proof in Section~\ref{sec:stochastic-fuzzy-k-means}]
To prove this theorem, we apply the probabilistic method.
Consider a random process that samples a hard assignment for each $(x_n,w_n)\in X$ independently at random by assigning $(x_n,w_n)\in X$ to the $k^{th}$ cluster with probability $\rnk^m$.
This assignment can be considered a binary random variable $\znk\in\{0,1\}$ with expected value $E[\znk] = \rnk^m$.
We compare the resulting hard clusters $C_k = \{(x_n,w_n)\in X\ |\ \znk = 1\}$ with the fuzzy clusters $C_k^F = \{(x_n,w_n\cdot \rnk^m)\in X\ |\ x_n\in X\}$.
With positive probability, the weights, means, and costs of the constructed hard clusters satisfy the given properties. 
\end{proof}

Unfortunately, to the best of our knowledge, the hard clusters $C_k$ \emph{do not exhibit any structure}, e.g. are not necessarily convex and do not necessarily cover $X$.
In the next section, we describe how the superset sampling technique \cite{inaba94} \cite{kumar04} can be used approximate the means $\mu(C_k)$  well. 
It is not clear, how other techniques which do not solely rely on sampling can be applied. 
For instance, we presume that the sample and prune technique from \cite{ackermann10} and the $K$-means++ algorithm \cite{arthur07} require that the convex hulls of clusters do not overlap.
The hard clusters $C_k$ whose existence we can prove do not necessarily have this property.

\subsubsection{Superset Sampling}\label{sec:techniques:sampling:super}

From Theorem~\ref{thm:hard-clustering}, we know that there exist hard clusters similar to given fuzzy clusters. 
Hence, means that are sufficiently close to the means of the hard clusters induce a solution that is also close to the solution given by the fuzzy clusters. 
The superset sampling technique introduced by \cite{inaba94} \cite{kumar04} can be used to find such means. 
More precisely, we can construct a candidate set containing good approximations of the means of unknown hard clusters if these hard clusters do not have too small a weight compared to the weight of the give point set. 
\begin{theorem}\label{thm:superset-sampling}
	There is a randomized algorithm that, given $X=\{(x_n,w_n)\}_{n\OneTo{N}} \subset \R^D\times \Q_{\geq 0}$, $K\in \N$, $\epsilon\in(0,1]$, and $\alpha\in(0,1]$, constructs a set $T\subset(\R^D)^K$ in time 
	\[   \mathcal{O} \left( D \cdot \left(N 
	+ 2^{\frac{K}{\epsilon}\cdot\log \left(\frac{1}{\alpha\epsilon}\right)\cdot \log\left(\log(K)\right)} \right)\right) \enspace \]
	such that for an arbitrary but fixed set $\{ C_k\}_{k\OneTo{K}}$ of (unknown) sets $C_k\subseteq X$, with constant probability, there exists a $\left( \tmu_k \right)_{k\OneTo{K}}\in T$  such that for all $k\OneTo{K}$ where
	\[ w(C_k) \geq \alpha\cdot \sum_{n=1}^N w_n \] 
	we have
	\[ \norm{\tmu_k - \mu( C_k)}^2 \leq \frac{\epsilon}{w(C_k)} \km(C_k) \ .\]
\end{theorem}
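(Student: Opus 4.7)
The plan is to apply the \emph{superset sampling} technique of \cite{inaba94} \cite{kumar04} to the weighted setting. The central workhorse is the classical weighted sampling lemma: if $S$ is an i.i.d.\ sample of size $t$ drawn from a weighted set $C$ with each $(x_n,w_n)\in C$ picked with probability $w_n/w(C)$, and $\tmu$ is the (unweighted) mean of $S$, then $\E\!\left[\norm{\tmu-\mu(C)}^2\right]\leq \km(C)/(t\cdot w(C))$. Choosing $t=\lceil 2/\epsilon \rceil$ and applying Markov's inequality then yields $\norm{\tmu-\mu(C)}^2\leq (\epsilon/w(C))\,\km(C)$ with probability at least $1/2$.

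First, I would draw an i.i.d.\ sample $S\subseteq X$ of size $s$, where each $(x_n,w_n)\in X$ is chosen with probability $w_n/W$ for $W=\sum_n w_n$. For any fixed unknown cluster $C_k$ with $w(C_k)\geq \alpha W$, the restriction $S\cap C_k$ is exactly distributed as a weighted i.i.d.\ sample from $C_k$, and its expected size is at least $\alpha s$. Taking $s=\Theta(t/\alpha)$ and applying a Chernoff bound ensures $\abs{S\cap C_k}\geq t$ with constant probability. Conditioned on this, the mean of \emph{some} size-$t$ subset $S'\subseteq S$ (namely any $S'\subseteq S\cap C_k$) satisfies the conclusion of the sampling lemma for $C_k$. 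Enumerating all size-$t$ subsets of $S$ and recording their means therefore produces, for each heavy cluster individually, a good candidate mean with constant probability.

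To approximate all $K$ means simultaneously, I would let $T$ consist of all $K$-tuples formed by independently picking one size-$t$ subset of $S$ per coordinate and taking its mean, so $\abs{T}=\binom{s}{t}^K$. A single draw of $S$ would only succeed jointly for all clusters with probability $\geq 2^{-K}$, so I would boost the per-cluster success probability to $1-\tfrac{1}{2K}$ by inflating $s$ by a factor of $O(\log K)$; a union bound over $k\OneTo{K}$ then gives overall success probability $\geq 1/2$. With $s=O((K/(\alpha\epsilon))\log K)$ and $t=O(1/\epsilon)$ one obtains
\[
\abs{T} \leq \binom{s}{t}^K \leq \left(\frac{es}{t}\right)^{tK}
= 2^{O\left(\frac{K}{\epsilon}\log\left(\frac{\log K}{\alpha\epsilon}\right)\right)}\enspace ,
\]
which is dominated by the stated runtime. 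Sampling $S$ costs $O(ND)$ (after building prefix sums of $w_n/W$ and sampling by binary search), and each candidate mean is computed in $O(tD)$, which accounts for the additive $O(ND)$ term.

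The main obstacle is the joint probability amplification across all $K$ heavy clusters: naively drawing one independent sample per cluster would destroy the product structure of $T$ and blow up $\abs{T}$ uncontrollably, while reusing a single sample across all coordinates succeeds for all $K$ clusters simultaneously only with exponentially small probability unless the per-cluster success is pushed close to $1$. The extra $\log\log K$ factor in the stated exponent is exactly the price for this union-bound boosting. A secondary technical point is lifting the sampling lemma cleanly to weighted, possibly rational inputs, which is routine: weighted sampling can be implemented exactly using the rational weights, and the variance bound $\E\!\left[\norm{\tmu-\mu(C)}^2\right]= \km(C)/(t\cdot w(C))$ carries over verbatim.
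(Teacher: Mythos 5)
Your proposal follows essentially the same route as the paper: sample points with probability proportional to their weights, enumerate the means of all size-$\lceil 2/\epsilon\rceil$ subsets, boost the per-cluster success probability to $1-\frac{1}{2K}$ at a cost of a $\log K$ factor, take a union bound over the $K$ heavy clusters, and output the $K$-fold Cartesian product of the candidate means --- the paper merely packages the inner variance-plus-Markov argument as a black-box lemma from the superset-sampling literature and realizes the boosting via $R=\lceil 10\log(2K)\rceil$ independent samples of size $4/(\alpha\epsilon)$ rather than one sample inflated by $\log K$, which is an immaterial difference. One small slip: your stated sample size $s=O\bigl((K/(\alpha\epsilon))\log K\bigr)$ carries a spurious factor of $K$; the analysis you describe only needs $s=O\bigl(\log(K)/(\alpha\epsilon)\bigr)$, and indeed your final bound $\abs{T}\leq (es/t)^{tK}=2^{O\left(\frac{K}{\epsilon}\log\left(\frac{\log K}{\alpha\epsilon}\right)\right)}$ is only valid for the smaller $s$ (with the extra $K$ the exponent would acquire a $\frac{K}{\epsilon}\log K$ term that is not dominated by the stated runtime).
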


We apply this result in two different ways:
Firstly, we apply the superset sampling technique directly to obtain a {randomized approximation algorithm}. 
That is, we generate the candidate set $T$ and determine the candidate with the smallest fuzzy $K$-means costs. 
Secondly, we use exhaustive search to obtain a {deterministic approximation algorithm}.
That is, we generate all candidates that the algorithm from Theorem~\ref{thm:superset-sampling} might possibly generate and choose the best of these candidates.  
Note that the latter approach does not require that the weights of the fuzzy clusters make up a certain fraction of the weight of the point set.

	\section{Proof Sketches}\label{sec:sketch}

\subsection{Relating Fuzzy to Hard Clusters (Theorems~\ref{thm:sfm:det} and \ref{thm:sfm:rand})}\label{sec:sketch:sfm}

The following proposition is the basis for the proofs of Theorems~\ref{thm:sfm:det} and \ref{thm:sfm:rand}. 

\begin{proposition}\label{prop:sfm} 
	There is a randomized algorithm that, given $X=\{(x_n,w_n)\}_{n\OneTo{N}}\subset \R^D\times \Q_{\geq 0}$, $K\in N$, $m\in\N$, $\epsilon\in(0,1]$, and $\alpha\in(0,1]$, computes mean vectors $C\subset \R^D$, $\abs{C}=K$, 
	such that with probability at least $1/2$ we have
	\[ \phiXm\left( C \right) \leq \left(1+\epsilon\right) \phiXm\left( R^{opt}_{\Rmin,K}\right)\ ,  \]
	where the memberships $R^{opt}_{\Rmin,K}$ induce an optimal $(\Rmin,K)$-balanced solution with 
	\[ \Rmin = \max\left\{ \alpha \cdot \sum_{n=1}^N w_n ,\ \frac{16K\wmax}{\epsilon}\right\}\quad \mbox{ and }\quad \wmax = \max_{n\OneTo{N}} w_n\ . \] 
	The algorithms' runtime is bounded by $N D \cdot 2^{\mathcal{O}\left( K^2/\epsilon\cdot\log\left(1/(\alpha\epsilon)\right)\right)}$. 
	
	There is a deterministic version of this algorithm that, given $X=\{(x_n,w_n)\}_{n\OneTo{N}}\subset \R^D\times \Q_{\geq 0}$, $K\in N$, $m\in\N$, and $\epsilon\in(0,1]$, computes mean vectors which induce an $(1+\epsilon)$-approximation to an optimal $\left(\Rmin,\ K\right)$-balanced solution
	where
	$\Rmin = \frac{16K\wmax}{\epsilon}$.
	The runtime of this algorithm is bounded by $D \cdot N^{\mathcal{O}\left( K^2/\epsilon\right)}$.	
\end{proposition}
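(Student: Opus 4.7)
The plan is to combine Theorem~\ref{thm:hard-clustering} with Theorem~\ref{thm:superset-sampling}. Let $R^{*}=R^{opt}_{\Rmin,K}$ be the memberships of an optimal $(\Rmin,K)$-balanced solution and let $\{\mu^*_l\}_{l\OneTo{L}}$, with $L\leq K$, be the induced optimal means. Since the definition of $\Rmin$ forces $R^*_l\geq 16K\wmax/\epsilon$ for every $l$, Theorem~\ref{thm:hard-clustering} yields pairwise disjoint hard clusters $C_1,\dots,C_L\subseteq X$ with $w(C_l)\geq R^*_l/2$, $\norm{\mu(C_l)-\mu^*_l}^2\leq\frac{\epsilon}{2R^*_l}\phiXml(r^*)$, and $\km(C_l)\leq 4K\phiXml(r^*)$. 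These (unknown) hard clusters are the target that I will approximate.

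For the randomized algorithm I would invoke the superset sampler of Theorem~\ref{thm:superset-sampling} with accuracy $\epsilon'=\Theta(\epsilon/K)$ and mass parameter $\alpha/2$; the lower bound $\Rmin\geq\alpha\sum_n w_n$ together with $w(C_l)\geq R^*_l/2$ ensures $w(C_l)\geq(\alpha/2)\sum_n w_n$, so the sampler's hypothesis is met and, with constant probability, its candidate set $T$ contains a tuple $(\tilde\mu_l)_{l\OneTo{L}}$ with $\norm{\tilde\mu_l-\mu(C_l)}^2\leq\frac{\epsilon'}{w(C_l)}\km(C_l)\leq\frac{8K\epsilon'}{R^*_l}\phiXml(r^*)$. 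After padding with duplicates to size $K$ when $L<K$, the algorithm evaluates $\phiXm(\cdot)$ on each tuple in $T$ and returns the minimizer; a constant number of independent repetitions boosts the success probability above $1/2$. The running time is the sampler's $2^{\cO(K^2/\epsilon\cdot\log(1/(\alpha\epsilon)))}$ candidate generation plus $\cO(NDK)$ per evaluation, matching the claimed $ND\cdot 2^{\cO(K^2/\epsilon\cdot\log(1/(\alpha\epsilon)))}$.

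The analytic heart is a one-cluster centroid identity: since $\mu^*_l$ is the weighted centroid of the $l$-th fuzzy cluster (cf.~Equation~\eqref{eq:opt-means}), the cross term vanishes and for every $\nu\in\R^D$
\[
\sum_{n=1}^N (r^*_{nl})^m w_n\norm{x_n-\nu}^2 \;=\; \phiXml(r^*)\;+\;R^*_l\norm{\mu^*_l-\nu}^2\,.
\]
Applying this with $\nu=\tilde\mu_l$ and bounding $\norm{\mu^*_l-\tilde\mu_l}^2\leq 2\norm{\mu^*_l-\mu(C_l)}^2+2\norm{\mu(C_l)-\tilde\mu_l}^2$ via the two displayed estimates yields $R^*_l\norm{\mu^*_l-\tilde\mu_l}^2\leq\cO(\epsilon)\phiXml(r^*)$ once $\epsilon'$ is chosen to swallow the factor of $K$ coming from $\km(C_l)\leq 4K\phiXml(r^*)$. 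Summing over $l$ and invoking Equation~\eqref{eq:preliminary:only_rnk_or_mu} gives $\phiXm(C)\leq(1+\epsilon)\phiXm(R^*)$ after rescaling $\epsilon$ by a constant.

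For the deterministic variant I would replace sampling by exhaustive enumeration: iterate over every $L$-tuple of subsets of $X$ of size $s=\Theta(K/\epsilon)$, take their weighted centroids as candidate means, and return the minimizer of $\phiXm$. This enumeration contains every possible output of the randomized sampler, so the same error analysis applies deterministically; since we no longer rely on sampling to hit a specific $C_l$, the $\alpha$-dependence disappears and $\Rmin=16K\wmax/\epsilon$ alone suffices. The number of enumerated tuples is $\binom{N}{s}^K=N^{\cO(K^2/\epsilon)}$. The main obstacle I anticipate is the careful bookkeeping of accuracy parameters across Theorems~\ref{thm:hard-clustering} and \ref{thm:superset-sampling}: the factor-$K$ loss in $\km(C_l)\leq 4K\phiXml(r^*)$ must be compensated by tightening $\epsilon'$ to $\Theta(\epsilon/K)$, which is precisely what drives the $K^2$ factor appearing in the exponent of the stated runtime.
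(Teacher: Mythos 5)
Your proposal is correct and follows essentially the same route as the paper's proof: fix an optimal $(\Rmin,K)$-balanced solution, extract the similar hard clusters via Theorem~\ref{thm:hard-clustering}, run the superset sampler with accuracy rescaled to $\Theta(\epsilon/K)$ and mass parameter $\alpha/2$, and control the resulting mean displacement through the per-cluster centroid identity (the paper's Lemma~\ref{lem:magic-formula}) combined with $\norm{a+b}^2\leq 2\norm{a}^2+2\norm{b}^2$; the deterministic variant by exhaustive enumeration of $K$-tuples of size-$\Theta(K/\epsilon)$ multisets is also exactly the paper's derandomization. The only cosmetic differences are your explicit padding to $K$ means where the paper just uses monotonicity of $\phiXm$ under adding centers, and your repetition step to reach probability $1/2$, which the sampler already provides.
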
 
\begin{proof}
    First, we prove that there is a randomized algorithm as described in the theorem. 
    Fix an optimal $(\Rmin,K)$-balanced solution with memberships $\rNLSet$. 
    Let $\{\mu_l\}_{l\OneTo{L}}$ be the optimal mean vectors induced by the fixed $\rNLSet$.
    
    By  Theorem~\ref{thm:hard-clustering}, we know that there exist hard clusters $\{C_l\}_{l\OneTo{L}}\subseteq X$ similar to the fuzzy clusters given by the $\rNLSet$. 
    Due to Equation~\eqref{eq:proof:r} and since  $R_l = \sum_{n=1}^N \rnl^m w_n \geq \alpha\cdot \sum_{n=1}^N w_n$, we have $w(C_l) \geq {R_l}/{2} \geq \alpha/2\cdot \sum_{n=1}^N w_n$. 
	
    Now, apply Theorem~\ref{thm:superset-sampling}. 
    Consider the set $T$ that is generated as in Theorem~\ref{thm:superset-sampling} with 	$\epsilon$ set to ${\epsilon}/{(16K)}$ and $\alpha$ set to ${\alpha}/{2}$. 
    With probability at least $1/2$, the set $T$ contains a candidate $(\tmu_k)_{k\OneTo{K}}$ such that for all $l\OneTo{L}$ we have
	\begin{align}
	    \norm{\tilde\mu_l - \mu(C_l)}^2 
	  \leq \frac{\epsilon}{16K w(C_l)}\km(C_l)  \enspace . \label{eq:proof:dist-of-means}
	\end{align}

    We can upper bound the cost of the means $\{\tmu_k\}_{k\OneTo{K}}$ by
	\begin{align*}
		\phiXm(\{\tmu_k\}_{k\OneTo{K}}) 
		&\leq \phiXm(\{\tmu_l\}_{l\OneTo{L}}) \tag{since $(\{\tmu_l\}_{l\OneTo{L}}\subseteq \{\tmu_k\}_{k\OneTo{K}}$} \\
		&\leq \phiXm(\{\tmu_l\}_{l\OneTo{L}}, \rNLSet)  \\
		&\leq \phiXm(\rNLSet)
		+  \sum_{l=1}^L R_l \norm{\mu_l - \tmu_l}^2  \\
		&\leq \phiXm(\rNLSet)
		+ 2 \sum_{l=1}^L R_l \left( \norm{\mu_l -\mu(C_l)}^2 + \norm{\mu(C_l) - \tmu_l}^2 \right) \ ,
	\end{align*}
	where the second to last inequality is well-known (a proof can be found in Section~\ref{sec:stuff} (Lem.~\ref{lem:magic-formula})) and where the last inequality is due to the fact that $\forall a,b\in\R: (a+b)^2\leq2a^2+2b^2$.

	Due to Equation~\eqref{eq:proof:mu}, we obtain
	\begin{align*} 
	  \sum_{l=1}^L R_l \norm{\mu_l -\mu(C_l)}^2
	 &\leq \sum_{l=1}^L  R_l \cdot \frac{\epsilon}{2R_l} \cdot \phiXml(\rnSet)  
	 =  \frac{\epsilon}{2} \cdot \phiXm \left(\rNLSet\right)\ . 
	\end{align*}
	Furthermore, we have
	\begin{align*} 
	 \sum_{l=1}^L R_l \norm{\mu(C_l) - \tmu_l}^2 
	&\leq \sum_{l=1}^L  R_l \cdot \frac{\epsilon}{16Kw(C_l)}\cdot \km(C_l) \tag{by Equation~\eqref{eq:proof:dist-of-means}}\\
	&\leq \frac{\epsilon}{2} \cdot \phiXm(\rNLSet) \tag{by Equations~\eqref{eq:proof:r}, \eqref{eq:proof:sigma}}\ .
	\end{align*}
	Therefore, with probability $1/2$, there exists a candidate tuple $(\tmu_k)_{k\OneTo{K}}\in T$ that induces the desired approximate solution of the given optimal $(\Rmin,K)$-balanced solution $\rNLSet$.  
	Hence, the candidate with the smallest fuzzy $K$-means cost among all the possible candidates 
	satisfies the approximation bound as well.
	This concludes the proof of the existence of the randomized algorithm.
	
	To obtain a deterministic algorithm, we use exhaustive search. 
	Using the same argument as above, but setting $\alpha$ to ${\min_{l\OneTo{L}}R_l}/{\sum_{n=1}^N w_n}$, one can obtain the desired approximation with positive probability. 
	The set $T$ contains tuples of means of multi-sets, which are subsets of $X$ with size ${32K}/{\epsilon}$ (cf. Section~\ref{sec:superset-sample}). 
	Hence, by testing all combinations of means of \emph{all possible} multi-sets,  which are subsets of $X$ with size ${32K}/{\epsilon}$, we obtain a candidate set containing the desired approximation.
	Note that there are at most $N^{32K/\epsilon}$ such subsets. 
	Again, the candidate with the smallest cost yields the desired approximation as well. 
\end{proof}

\begin{proof}[Proof of Theorem~\ref{thm:sfm:det}]
 Let $\rNLSet$ be the  memberships from Lemma~\ref{lem:existence-of-Rmin-balanced-approx} with $\epsilon$ replaced by $\epsilon/4$. 
 Then, we have $\phiXm(\rNLSet)\leq (1+\epsilon/4)\phiopt$. 
 
 Let $X'$ be the point set containing $c = \left\lceil 2\cdot 16^{m+1} m^m K^{2m+1}  \wmax / (\wmin \epsilon^{m+1}) \right\rceil$ copies of each point. 
 From Lemma~\ref{lem:existence-of-Rmin-balanced-approx} we know that for all $l\OneTo{L}$ we have $\sum_{n=1}^N \rnl^m w_n \geq \left({\epsilon}/({16mK^2})\right)^m \cdot \wmin $.
 Hence, for all $l\OneTo{L}$ we have $\sum_{x_n\in X'} \rnl^m w_n 
  = c\sum_{x_n\in X} \rnl^m w_n 
  \geq c \left({\epsilon}/({16mK^2})\right)^m  \wmin 
  \geq {32 K \wmax}/{\epsilon}$. 
 Thus, we can apply Proposition~\ref{prop:sfm} (with $X$ replaced by $X'$, $\epsilon$ replaced by $\epsilon/2$, and $\rNLSet$ replaced by a set containing $c$ copies of the memberships in $\rNLSet$) and obtain means $\tmuKSet$. 
 Observe that for all $C\subset\R^D$ we have $\phim_{X'}(C) = c \cdot  \phiXm(C)$. 
 Thus, $\phiXm(\tmuKSet)\leq (1+\epsilon/2)\phiXm(\rNLSet) \leq (1+\epsilon)\phiopt$. 
 
 Since we apply the algorithm from Lemma~\ref{lem:existence-of-Rmin-balanced-approx} to a set containing $c$ copies of the points in $X$, its runtime is bounded by $D \cdot (c\cdot N)^{\mathcal{O}\left( \frac{K^2}{\epsilon}\right)}$. 
 Finally, note that
 $(c\cdot N)^{\mathcal{O}\left( \frac{K^2}{\epsilon}\right)}
 \subseteq 
 N^{\mathcal{O}\left(  \log(c) \cdot \frac{K^2}{\epsilon} \right)}
 \subseteq 
 N^{\mathcal{O}\left( \left( m\log(K)+m\log(m)+m\log\left(\frac{1}{\epsilon}\right)  + \log\left(\frac{\wmax}{\wmin}\right)\right) \cdot \frac{K^2}{\epsilon} \right)}
 \subseteq 
 N^{\mathcal{O}\left( K^2 \log(K) \cdot \frac{1}{\epsilon}\left(m \log\left(\frac{m}{\epsilon}\right) +   \log\left(\frac{\wmax}{\wmin}\right)\right) \right)}$.
\end{proof}

\begin{proof}[Proof of Theorem~\ref{thm:sfm:rand}]
 Construct a point set $X'$ that contains $c = \left\lceil {16 K}/({\alpha\epsilon})\right\rceil$ copies of each point in $X$. 
 Fix an arbitrary $k\OneTo{K}$. 
 Since $\Rmin \geq \alpha \sum_{n=1}^N w_n$, we have $\sum_{x_n\in X} \rnk^m w_n \geq \alpha \sum_{x_n\in X} w_n$. 
 By definition of $X'$, $\sum_{x_n\in X'}\rnk^m w_n =  \sum_{x_n\in X} c \cdot w_n \rnk^m \geq  c \cdot \alpha  \sum_{x_n\in X} w_n = \alpha \sum_{x_n\in X'} w_n$.
 Also note that $c \cdot \alpha  \sum_{x_n\in X} w_n \geq c \cdot \alpha \cdot \wmax \geq {16K\wmax}/{\epsilon}$. 
 Hence, we can apply Proposition~\ref{prop:sfm} to $X'$ to find $\tmuSet$ approximating the $\muSet$ induced by (copies of) the memberships in $\rnkSet$, with constant probability. 
 Observe that for all $C\subset\R^D$, $\phim_{X'}(C) = c \cdot  \phiXm(C)$. 
 Hence, $\phiXm(\tmuSet)\leq (1+\epsilon)\phiopt$. 
 
 Since we apply the algorithm from Lemma~\ref{lem:existence-of-Rmin-balanced-approx} to a set containing $c$ copies of the points in $X$, it requires runtime $(c\cdot N) \cdot D \cdot 2^{\mathcal{O}\left( K^2/\epsilon\cdot\log\left(1/(\alpha\epsilon)\right)\right)}$. 
 Finally, note that
 $c 
 = 
 2^{\mathcal{O}\left( \log(c)\right)}
 \subseteq 
 2^{\mathcal{O}\left( \log(K) + \log(1/(\alpha\epsilon)) \right) }
 \subseteq 
 2^{\mathcal{O}\left( K^2/\epsilon\cdot \log\left(1/(\alpha\epsilon) \right)\right)}$, assuming that $\alpha \epsilon\leq 1/2$. 
\end{proof}

\subsection{Candidate Set Search for Mean Vectors (Proof of Theorem~\ref{thm:candidate-means-algo})}\label{sec:sketch:candidate}

Using ideas behind the coreset construction of \cite{HarPeled03}, we can construct a candidate set of mean vectors. 
The algorithm that creates and tests all these candidates and finally chooses the best candidates satisfies the properties from Theorem~\ref{thm:candidate-means-algo}.

	\begin{theorem}[Candidate Set]\label{thm:candidate-means}
		Let $X\subset\R^D$, $K\in\N$, and $\epsilon\in(0,1]$.
		
		There exists a set $\cG\subset\R^D$ with size
		\[ \abs{\cG}=\cO\left(K^{mD+1}\epsilon^{-D}m\log\left(\frac{mK}{\epsilon}\right)\log(N)\right) \]
		that contains $\muKSet\subset\cG$ with 
		\[ \phiXm(\muKSet) \leq (1+\epsilon)\phiopt\ . \]
		The set $\cG$ can be computed in time $\cO(N(\log(N))^K\epsilon^{-2K^2D} + NKD\abs{\cG})$.
	\end{theorem}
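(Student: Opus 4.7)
The plan is to follow the coreset-style argument of Har-Peled--Mazumdar as adapted in Chen, using Lemma~\ref{lem:kmeans} to translate between $K$-means and fuzzy $K$-means so that a $K$-means approximation can guide the grid placement. First I would compute, in time $\cO(N(\log N)^K \epsilon^{-2K^2D})$, a constant-factor approximation $B\subset\R^D$ of size $K$ to the $K$-means problem on $X$, e.g.\ via the algorithm of Chen; by Lemma~\ref{lem:kmeans} this gives $\km_X(B) = \cO(K^{m-1}\phiopt)$, and, up to constants, every optimal fuzzy mean $\mu_k^*$ from some fixed optimal solution must lie within distance $\cO(\sqrt{\km_X(B)/R_k})$ of some $b\in B$ (otherwise reassigning the corresponding mass to $b$ would contradict the quality of $B$).

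Next I would lay down, around every $b\in B$, a collection of $\cO(\log N)$ concentric balls with radii forming a geometric sequence of ratio $2$, ranging from roughly $\sqrt{\km_X(B)/N}$ up to the diameter of $X$. Inside the annulus at radius $2^j$ I would place an axis-aligned $D$-dimensional grid of side length $2^j\cdot\gamma$ with $\gamma = \sqrt{\epsilon/(mK^m)}$, so that any point of that annulus can be snapped to a grid node at distance at most $\gamma$ times the annulus radius. Let $\cG$ be the union of all these grid nodes over all $b\in B$, all scales $j$, and all annuli. A straightforward volume count then yields $\abs{\cG}=\cO(K^{mD+1}\epsilon^{-D}m\log(mK/\epsilon)\log N)$, and evaluating $\phiXm$ on each $K$-tuple drawn from $\cG$ contributes the additional $\cO(NKD\abs{\cG})$ term in the running time.

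For correctness I would fix an optimal solution $(\muKSet, R^*)$, pick $b(k)\in B$ closest to $\mu_k^*$, and choose $\tilde\mu_k\in\cG$ from the appropriate annulus around $b(k)$ so that $\norm{\tilde\mu_k-\mu_k^*}\le \gamma\cdot\norm{\mu_k^*-b(k)}$. The sensitivity step starts from the inequality $\phiXm(\{\tilde\mu_k\}_k)\le \phiXm(\{\tilde\mu_k\}_k,R^*)$, expands $\norm{x_n-\tilde\mu_k}^2$ around $\mu_k^*$, and uses Equation~\eqref{eq:opt-means} to kill the cross term, giving the clean bound $\phiXm(\{\tilde\mu_k\}_k)\le \phiXm(R^*) + \sum_k R_k \norm{\mu_k^*-\tilde\mu_k}^2$. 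Charging each $R_k\norm{\mu_k^*-b(k)}^2$ against $\km_X(B)$ and trading $\km_X(B)$ for $K^{m-1}\phiopt$ via Lemma~\ref{lem:kmeans} shows that the choice $\gamma^2=\Theta(\epsilon/(mK^m))$ drives the overall additive error below $\epsilon\phiopt$.

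The hard part will be the sensitivity step: the induced cost $\phiXm(C)$ is a complicated function of $C$ because the optimal memberships react nonlinearly to perturbations of the means, so it is essential to upper bound $\phiXm(\{\tilde\mu_k\}_k)$ via the cost under the \emph{fixed} optimal memberships $R^*$, which turns the analysis into a weighted-centroid perturbation and unlocks the identity from Equation~\eqref{eq:opt-means}. A secondary challenge is cleanly treating potential mean locations whose distance to $b(k)$ falls outside the nested annuli: scales below $\sqrt{\km_X(B)/N}$ can be absorbed into a single finest-grid patch, and scales above the diameter of $X$ can be ruled out a priori, but tracking these two boundary cases carefully is precisely where the $\log N$ and the $m\log(mK/\epsilon)$ factors of $\abs{\cG}$ enter.
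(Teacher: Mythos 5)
Your overall architecture (constant-factor $K$-means solution, exponential annuli, per-annulus grids, Lemma~\ref{lem:kmeans} to convert between $\km$ and $\phim$) matches the paper's, and your sensitivity step is fine -- in fact cleaner than the paper's, since applying Lemma~\ref{lem:magic-formula} with weights $r_{nk}^m w_n$ and Equation~\eqref{eq:opt-means} to the \emph{global} optimum (which is a mutual fixed point of both first-order conditions) legitimately yields $\phiXm(\{\tilde\mu_k\}_k)\le\phiopt+\sum_k R_k\norm{\mu_k^*-\tilde\mu_k}^2$, and the charge $\sum_k R_k\, d(\mu_k^*,B)^2\le 2\phiopt+2K\km_X(B)=\cO(K^m)\phiopt$ goes through. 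The genuine gap is in the geometry of the candidate set: you extend the annuli out to the diameter of $X$, so the number of scales is $\log\bigl(\operatorname{diam}(X)\big/\sqrt{\km_X(B)/N}\bigr)$, which depends on the spread of the point set and is \emph{not} $\cO(\log N+m\log(mK/\epsilon))$ in general (think of $K$ tiny, astronomically separated clusters: $\km_X(B)$ is minuscule while the diameter is huge). Your claimed bound on $\abs{\cG}$ therefore does not follow. The alternative justification you offer for truncating earlier -- that every optimal mean lies within $\cO(\sqrt{\km_X(B)/R_k})$ of some $b\in B$, else "reassigning the mass to $b$" beats $B$ -- fails precisely for clusters whose fuzzy weight $R_k$ is small: $R_k$ can be arbitrarily close to $0$, so this radius is unbounded and such a mean can legitimately sit far outside any polynomially bounded ball while contributing negligible cost.

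This is exactly the case the paper spends most of its proof on. It caps the outer radius at $2^\Phi R$ with $\Phi=\cO(\log(\alpha N)+m\log(\alpha mK^2/\epsilon))$, which forces the size bound, and then proves (Claims~\ref{claim:candidate:subseteqU}--\ref{claim:candidate:exist-approx-in-cU}) that any optimal mean outside $\cU$ must have total membership mass at most $(\epsilon/(8mK^2))^m$ per point, so it can be \emph{deleted} and its membership merged into a cluster with $r_{n,l(n)}\ge 1/K$ at a multiplicative cost of $(1+\epsilon/(4K))$ per deletion. You would need to supply this deletion argument (or an equivalent) to make your truncation legitimate. Note also a knock-on effect: once you replace the optimum by such a pruned solution $\tC$, it is no longer a mutual fixed point, so your clean centroid-perturbation identity no longer applies verbatim; this is why the paper falls back on the two-sided comparison via Lemma~\ref{lem:abc} for the grid-rounding step. (Minor: the stated deterministic runtime $\cO(N(\log N)^K\epsilon^{-2K^2D})$ is that of Matou\v{s}ek's algorithm, not Chen's.)
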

	\begin{proof}[Sketch of Proof in Section~\ref{sec:proof:candidates}]
	The idea behind the coreset construction of \cite{HarPeled03} can be used to construct a candidate set of mean vectors. 
	Part of the construction is a constant factor approximation of the $K$-means problem. 
	To this end, we use the deterministic algorithm presented in \cite{Matousek00}, which requires time $\cO\left(N(\log(N))^K\epsilon^{-2K^2D}\right)$. 
	\end{proof}

	\section{Future Work \& Open Problems}

A goal of further research is to examine whether Theorem~\ref{thm:hard-clustering} can be transferred to other soft clustering problems. 
In particular, we hope to obtain a constant factor approximation algorithm for the maximum likelihood estimation problem for mixtures of Gaussian distributions, e.g. by using the results from \cite{bloemer13}.

It is an open question whether we are able to classify hardness of approximation of fuzzy $K$-means. 
We conjecture that, just as the classical $K$-means problem, if P$\neq$NP, then there is no PTAS for the fuzzy $K$-means problem for arbitrary $K$ and $D$.

	\section{Full Proofs}\label{sec:appendix}

\subsection{Preliminaries}\label{sec:stuff}

In this section we introduce some notation and lemmata that are used throughout the rest of this appendix.

\subsubsection{Hard Clusters and $K$-Means Costs}

The following definition restates some of the notation already presented in Definition~\ref{def:partials}.

\begin{definition}[Hard Clusters]\label{def:hard-clusters}
For weighted point sets  $C\subset \R^D\times \R_{\geq 0}$ , we let
\begin{align*} 
  w(C) &\coloneqq \sum_{(x_n,w_n)\in C} w_n \enspace , \\
  \mu(C) &\coloneqq \frac{\sum_{(x_n,w_n)\in C}w_n x_n}{w(C)}\  \mbox{ and }\\
  \km(C) &\coloneqq \sum_{(x_n,w_n)\in C} w_n \norm{x_n - \mu(C)}^2 \enspace .
\end{align*}
\end{definition}

The following lemma is well known (e.g. used in the proof of Theorem~2 in \cite{inaba94}).
\begin{lemma}\label{lem:magic-formula}
 Let $C\subset \R^D\times \R$ be a weighted point set and $\mu\in\R^D$.
 Then,
 \[ \sum_{(x_n,w_n)\in C} w_n \norm{x_n -\mu}^2 
 = \km(C)
 + w(C) \norm{\mu - \mu(C)}^2\enspace .  \]
\end{lemma}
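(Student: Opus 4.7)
The plan is to prove this by the classical bias-variance decomposition, splitting each vector $x_n-\mu$ through the centroid $\mu(C)$ and showing that the cross term vanishes by definition of $\mu(C)$.

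First, I would write $x_n - \mu = (x_n - \mu(C)) + (\mu(C) - \mu)$ and expand the squared Euclidean norm using $\norm{a+b}^2 = \norm{a}^2 + 2\sca{a}{b} + \norm{b}^2$. Summing with the weights $w_n$ over all $(x_n,w_n)\in C$, the first term produces exactly $\km(C)$ by Definition~\ref{def:hard-clusters}, and the last term produces $w(C)\norm{\mu(C)-\mu}^2$ since $\mu(C)-\mu$ is independent of $n$ and the weights sum to $w(C)$.

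The cross term is $2\sum_{(x_n,w_n)\in C} w_n \sca{x_n - \mu(C)}{\mu(C) - \mu}$. Since $\mu(C)-\mu$ is a fixed vector, I can pull it out of the sum via linearity of the inner product, leaving the factor $\sum_{(x_n,w_n)\in C} w_n (x_n - \mu(C))$. By the definition $\mu(C) = \frac{\sum w_n x_n}{w(C)}$, this sum equals $\sum w_n x_n - w(C)\mu(C) = 0$, so the cross term vanishes. Assembling the three pieces yields the claimed identity.

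There is no real obstacle here: the entire content of the lemma is that $\mu(C)$ is the weighted centroid, which makes the cross term disappear. The only bookkeeping point worth checking is that the definition of $\mu(C)$ used in Definition~\ref{def:hard-clusters} does give $\sum w_n(x_n-\mu(C))=0$, which is immediate. No case analysis on signs of the $w_n$ is needed since the algebra only uses linearity.
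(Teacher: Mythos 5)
Your proposal is correct and matches the paper's own proof: both expand $\norm{x_n-\mu}^2 = \norm{(x_n-\mu(C))+(\mu(C)-\mu)}^2$ via the scalar product and show the cross term vanishes because $\sum_{(x_n,w_n)\in C} w_n(x_n-\mu(C)) = 0$ by the definition of $\mu(C)$. Nothing is missing.
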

\begin{proof}
 \begin{align*}
    &\sum_{(x_n,w_n)\in C} w_n\norm{x_n-\mu}^2 \\
    &=\sum_{(x_n,w_n)\in C} w_n\norm{x_n-\mu(C)+\mu(C)-\mu}^2 \\
    &=\sum_{(x_n,w_n)\in C} w_n\sca{x_n-\mu(C)+\mu(C)-\mu}{x_n-\mu(C)+\mu(C)-\mu} \tag{scalar product}\\ 
    &=\sum_{(x_n,w_n)\in C} w_n \left(\sca{x_n-\mu(C)}{x_n-\mu(C)}+2\sca{x_n-\mu(C)}{\mu(C)-\mu}+\sca{\mu(C)-\mu}{\mu(C)-\mu}\right)\tag{bilinearity and symmetry}\\
    &=\sum_{(x_n,w_n)\in C} w_n \left(\norm{x_n-\mu(C)}^2+2\sca{x_n-\mu(C)}{\mu(C)-\mu}+\norm{\mu(C)-\mu}^2\right)\\
    &=\sum_{(x_n,w_n)\in C} w_n\norm{x_n-\mu(C)}^2 
    + \sum_{(x_n,w_n)\in C} w_n \sca{x_n-\mu(C)}{\mu(C)-\mu}
 + w(C) \norm{\mu(C) - \mu}^2
 \end{align*}
 where due to the bilinarity of the scalar product and by the definition of $\mu(C)$
 \begin{align*}
    \sum_{(x_n,w_n)\in C} w_n \sca{x_n-\mu(C)}{\mu(C)-\mu}
    &=  \sca{\sum_{(x_n,w_n)\in C}w_n( x_n-\mu(C))}{\mu(C)-\mu}\\
    &=  \sca{\left(\sum_{(x_n,w_n)\in C} w_n x_n\right)- w(C)\cdot\mu(C)}{\mu(C)-\mu}\\
    &=  \sca{\vec 0}{\mu(C)-\mu} = 0
 \end{align*}
 which yields the claim.
\end{proof}

Furthermore, the cluster costs $\km(C)$ can be expressed via pairwise distances.
\begin{lemma}\label{lem:opt-kmeans-cost-via-distances-between-points}
  Let $C\subset \R^D\times \R$ be a weighted point set.
 Then,
 \[ \km(C)
 = \frac{1}{2\sum_{(x_n,w_n)\in C} w_n}\sum_{(x_n,w_n)\in C} \sum_{(x_l,w_l)\in C}  w_n w_l\norm{x_n-x_l}^2 \enspace .  \]
\end{lemma}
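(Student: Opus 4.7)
The plan is to deduce the identity as a one-step consequence of the just-established Lemma~\ref{lem:magic-formula}, by iterating that formula over the points of $C$ itself. Write $W := w(C) = \sum_{(x_n,w_n)\in C} w_n$ for brevity. First, for each fixed $(x_l,w_l)\in C$, I would apply Lemma~\ref{lem:magic-formula} with the auxiliary point $\mu = x_l$ to obtain
\[ \sum_{(x_n,w_n)\in C} w_n \norm{x_n - x_l}^2 \;=\; \km(C) \,+\, W \cdot \norm{x_l - \mu(C)}^2. \]

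The next step is to multiply this identity by $w_l$ and sum over all $(x_l,w_l)\in C$. On the left, one obtains exactly the doubly-indexed sum $\sum_{n,l} w_n w_l \norm{x_n - x_l}^2$ that appears in the statement. On the right, one obtains $W\cdot \km(C) + W \cdot \sum_{(x_l,w_l)\in C} w_l \norm{x_l - \mu(C)}^2$. The key observation is that the trailing sum is \emph{itself} $\km(C)$ by Definition~\ref{def:hard-clusters}, so the right-hand side collapses to $2W\cdot\km(C)$. Dividing by $2W$ yields the claim.

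I do not expect any genuine obstacle: the proof is essentially a bookkeeping step leveraging the symmetry of the pairwise-distance sum and the magic formula. The one small point worth flagging is the appearance of the factor $\tfrac{1}{2}$, which arises precisely because the sum over $l$ reproduces $\km(C)$ a second time on the right-hand side. As a sanity check, one may alternatively verify the identity by direct expansion, writing $\norm{x_n-x_l}^2 = \norm{x_n}^2 - 2\sca{x_n}{x_l} + \norm{x_l}^2$ and using the definition $\mu(C)=\tfrac{1}{W}\sum_n w_n x_n$ to recognise $\sum_{n,l} w_n w_l \sca{x_n}{x_l} = W^2\norm{\mu(C)}^2$; both sides then reduce to $\sum_n w_n \norm{x_n}^2 - W\norm{\mu(C)}^2$, matching the standard expansion of $\km(C)$. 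The route via Lemma~\ref{lem:magic-formula} is preferable because it avoids scalar-product expansion entirely and highlights where the factor $2W$ comes from.
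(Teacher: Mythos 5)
Your argument is correct: applying Lemma~\ref{lem:magic-formula} with $\mu=x_l$, multiplying by $w_l$, summing over $C$, and recognising the second $\km(C)$ on the right is exactly the bookkeeping needed, and it cleanly explains the factor $\tfrac{1}{2}$. The paper itself states this lemma without proof, so there is no authorial argument to compare against; your route via Lemma~\ref{lem:magic-formula} (and the direct-expansion sanity check) is the standard derivation one would expect here. The only implicit assumption, $w(C)\neq 0$ so that $\mu(C)$ and the division by $2\,w(C)$ make sense, is already required by the statement itself, so nothing is missing.
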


\begin{definition}[$K$-Means Costs (unweighted)]\label{def:kmeans}
For unweighted point sets $X\subset\R^D$, $x\in\R^D$, and finite sets $M\subset\R^D$ we let 
\begin{align*}
 d(x,M) &\coloneqq \min_{m\in M}\norm{x-m}\ ,\\
 \km_X(M) &\coloneqq \sum_{x\in X} d(x,M)^2 = \sum_{x\in X} \min_{m\in M}\norm{x-m}^2\ \mbox{ and }\\
 \km_{X,K} &\coloneqq \min_{\subalign{M&\subset R^D\\ \abs{M}&=K}} \km_X(M)\ .
\end{align*}
\end{definition}

\begin{definition}[Induced Partition (unweighted)]\label{def:induced-kmeans-partition}
 We say $\{C_k\}_{k=1}^K$ is a partition of $X$ induced by $C\subset\R^D$ if
 $C_k \subseteq \{ x_n\in X\; |\; \forall l\neq k: \norm{x_n-\mu_k}\leq \norm{x_n-\mu_l}\}$ 
 and $X=\dot\cup_{k=1}^K C_k$. 
\end{definition}

\subsubsection{Some Useful Technical Lemmas}

Besides, we make extensive use of the following simple technical lemmata.

\begin{lemma}\label{lem:abc}
 For all $a,b,c\in R^D$ we have
 \[  \norm{c-a}^2 -  \norm{c-b}^2 \leq \norm{a-b}^2 + 2 \norm{a-b} \norm{c-b} \]
\end{lemma}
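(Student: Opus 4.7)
The plan is to expand the difference of squared norms via the identity for the Euclidean norm and then invoke Cauchy--Schwarz to bound the resulting inner product. Concretely, I would write $c-a = (c-b) + (b-a)$ and expand
\[
\norm{c-a}^2 = \norm{c-b}^2 + 2\sca{c-b}{b-a} + \norm{b-a}^2,
\]
so that
\[
\norm{c-a}^2 - \norm{c-b}^2 = \norm{a-b}^2 + 2\sca{c-b}{b-a}.
\]

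Next, I would apply the Cauchy--Schwarz inequality to the inner product:
\[
\sca{c-b}{b-a} \leq \norm{c-b}\cdot\norm{b-a} = \norm{c-b}\cdot\norm{a-b}.
\]
Substituting this bound into the previous identity yields exactly the claim:
\[
\norm{c-a}^2 - \norm{c-b}^2 \leq \norm{a-b}^2 + 2\norm{a-b}\norm{c-b}.
\]

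There is essentially no obstacle here; the only thing to be mindful of is the sign. The factor $2\sca{c-b}{b-a}$ could a priori be negative (which only helps the upper bound), but bounding it above by its absolute value $2\norm{c-b}\norm{a-b}$ via Cauchy--Schwarz gives a valid one-sided inequality in every case. No assumption on $a,b,c$ beyond their living in $\R^D$ with the Euclidean norm is needed, matching the statement of the lemma.
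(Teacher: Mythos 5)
Your proof is correct and follows essentially the same route as the paper: decompose via the intermediate point $b$, expand the squared norm, and bound the cross term $2\sca{c-b}{b-a}$ by Cauchy--Schwarz. The paper merely detours through the absolute value of the difference before applying Cauchy--Schwarz, which changes nothing of substance.
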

\begin{proof}
\begin{align*}
   \norm{a-c}^2 -  \norm{b-c}^2
   &\leq \left\vert \norm{a-c}^2 -  \norm{b-c}^2 \right\vert \\
   &= \left\vert \norm{a-b+b-c}^2 -  \norm{b-c}^2 \right\vert \\
    &= \left\vert  \norm{a-b}^2 + 2\sca{a-b}{b-c} \right\vert \\
   &\leq \norm{a-b}^2 + 2\left\vert \sca{a-b}{b-c}\right\vert \\
   &\leq \norm{a-b}^2 + 2\norm{a-b}\norm{b-c} \tag{Cauchy-Schwarz}
\end{align*}
\end{proof}

\begin{lemma}\label{lem:stuff:1+eps}
 Let $\epsilon\in[0,1]$, $c>1$, and $m\in\N$.
 Then, for all $i\OneTo{m}$ it holds
 \[ \left(1+\frac{\epsilon}{2 mc}\right)^i\leq 1+i\cdot \frac{\epsilon}{mc} \ . \]
\end{lemma}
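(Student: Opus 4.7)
The plan is to proceed by induction on $i \in [m]$, after first abstracting the quantity $x \coloneqq \frac{\epsilon}{2mc}$. In these terms the inequality reads $(1+x)^i \leq 1 + 2ix$, and the bound we need on $x$ that will make the induction go through is
\[ mx \;=\; \frac{\epsilon}{2c} \;\leq\; \frac{1}{2}, \]
using $\epsilon \leq 1$ and $c > 1$. I would record this bound up front, because every step of the induction relies on it.

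The base case $i = 1$ is immediate since $1 + x \leq 1 + 2x$ for $x \geq 0$. For the inductive step, assume $(1+x)^i \leq 1 + 2ix$ for some $i$ with $1 \leq i \leq m-1$. Then
\[
(1+x)^{i+1} \;\leq\; (1+x)(1 + 2ix) \;=\; 1 + (2i+1)x + 2ix^2,
\]
so it remains to verify that $2ix^2 \leq x$, i.e.\ $2ix \leq 1$. This follows from the bound established above, because $2ix \leq 2mx \leq 1$. Consequently $(1+x)^{i+1} \leq 1 + (2i+2)x = 1 + 2(i+1)x$, completing the induction.

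There is really no obstacle here; the only thing to watch is that the inductive step genuinely needs $2ix \leq 1$ (rather than merely $ix \leq 1$), and this is exactly where the factor of $2$ in the denominator of $x = \epsilon/(2mc)$ pays off. Substituting back $x = \epsilon/(2mc)$ then yields the stated inequality $\bigl(1 + \epsilon/(2mc)\bigr)^i \leq 1 + i \cdot \epsilon/(mc)$ for all $i \in [m]$.
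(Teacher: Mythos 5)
Your proof is correct. The paper itself states Lemma~\ref{lem:stuff:1+eps} without proof, treating it as a routine technical fact, so there is no official argument to compare against; your induction on $i$ with $x=\frac{\epsilon}{2mc}$ is a perfectly valid and self-contained way to establish it. The key step is exactly as you identify: the inductive step needs $2ix\leq 1$, which follows from $2ix\leq 2mx=\frac{\epsilon}{c}\leq 1$ thanks to the factor $2$ in the denominator. The only microscopic point worth noting is that passing from $2ix^2\leq x$ to $2ix\leq 1$ implicitly divides by $x$, so one should remark that the case $x=0$ (i.e.\ $\epsilon=0$) is trivial; this does not affect the validity of the argument.
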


\begin{lemma}\label{lem:squares}
	For all $a,b\in\R$ we have
	\begin{enumerate}
		\item $2ab \leq a^2+b^2$,
		\item $(a+b)^2 \leq 2(a^2+b^2)$ and
		\item $(a+b+c)^2 \leq 3(a^2+b^2+c^2)$.
	\end{enumerate}
\end{lemma}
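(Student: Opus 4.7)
The plan is to derive all three inequalities from the single observation that squares of real numbers are non-negative, with parts (2) and (3) following as corollaries of (1).

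For part (1), I would start from the trivial inequality $(a-b)^2 \geq 0$, expand to $a^2 - 2ab + b^2 \geq 0$, and rearrange. This is the only genuine content of the lemma; everything else is bookkeeping.

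For part (2), I would expand $(a+b)^2 = a^2 + 2ab + b^2$ and then apply part (1) to the middle term to obtain $(a+b)^2 \leq a^2 + (a^2 + b^2) + b^2 = 2(a^2+b^2)$.

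For part (3), I would expand $(a+b+c)^2 = a^2 + b^2 + c^2 + 2ab + 2ac + 2bc$ and apply part (1) to each of the three cross terms, giving $2ab + 2ac + 2bc \leq (a^2+b^2) + (a^2+c^2) + (b^2+c^2) = 2(a^2+b^2+c^2)$, which combines with the square terms to yield $3(a^2+b^2+c^2)$. Alternatively, one can invoke the Cauchy--Schwarz inequality applied to the vectors $(1,1,1)$ and $(a,b,c)$, but the direct expansion is shorter and self-contained. There is no real obstacle here; the entire lemma is a routine consequence of $(x-y)^2\geq 0$, and the only thing to watch is that the three-term expansion in part (3) has exactly three cross terms, each of which is handled by a separate application of part (1).
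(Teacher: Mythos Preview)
Your proof is correct. The paper states this lemma without proof (it is a standard elementary fact placed in the preliminaries), so there is nothing to compare against; your derivation of all three parts from $(x-y)^2\geq 0$ is exactly the expected argument.
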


\section{Stochastic Fuzzy Clustering (Proof of Theorem~\ref{thm:hard-clustering})}\label{sec:stochastic-fuzzy-k-means}

In this section, we first describe a random process that, given some fuzzy $K$-means clusters, creates $K$ hard clusters. 
We define different quantities that describe the different clusters and derive probabilistic bounds on the similarity between them with respect to these quantities.

\subsection{Setting and Random Process}\label{sec:stochastic-fuzzy-k-means:rand-process}
In the following we consider arbitrary but fixed memberships $\{\rnk\}_{n,k}$.
These membership values induce \emph{fuzzy clusters}.
We say that the $k^{th}$ fuzzy cluster has weight $R_k$, mean $\mu_k$, and cost $\phiXmk(\{\rnk\}_n)$, where the $\mu_k$ are the optimal means with respect to the given memberships. 
Recall that by Equation~\eqref{eq:opt-means}, Equation~\eqref{eq:Rk}, and Definition~\ref{def:partials} we have
\begin{align}
 R_k &= \sum_{n=1}^N r_{nk}^m w_n \ , \nonumber\\
 \mu_k  &=  \frac{\sum_{n=1}^N r_{nk}^m w_n x_n}{\sum_{n=1}^N \rnk^m w_n }\ , \mbox{ and } \nonumber\\
 \phiXmk(\{\rnk\}_n) &=  \sum_{n=1}^N r_{nk}^m w_n \norm{x_n - \mu_k}^2\ .\label{eq:fuzzy-quantities}
\end{align}

We consider the following random process that aims to imitate the fuzzy clustering.
Given the fixed memberships $\{\rnk\}_{n,k}$, the process samples an assignment for each $(x_n,w_n)\in X$ independently at random. 
Formally, we describe these assignments by random variables $(z_{nk})\in\{0,1\}^{K}$ with $\sum_{k=1}^K \znk \in\{0,1\}$. 
They are sampled according to the following distribution:
\begin{enumerate}
  \item With probability $r_{nk}^m$, the process assigns $x_n$ to the $k^{th}$ cluster. 
  That is, $\Pr(\znk=1) = \rnk^m$. 
  \item The process does not assign $x_n$ to any cluster at all with probability $1 - \sum_{k=1}^K r_{nk}^m$. 
    That is,  $\Pr(\forall k\OneTo{K}:\  \znk=0) = 1-\sum_{k=1}^K r_{nk}^m$. 
\end{enumerate}
This  process constructs \emph{hard clusters} $\{C_k\}_{k\OneTo{K}}$ with $C_k=\{(x_n,w_n)\in X|\znk=1\}\subseteq X$ that do not necessarily cover $X$. 
Using Definition~\ref{def:hard-clusters}, we can conclude that
\begin{align*}
  w(C_k) &=  \sum_{n=1}^N \znk w_n  \  ,  \\
 \mu(C_k) &=  \frac{\sum_{n=1}^N \znk w_n x_n}{w(C_k)} \ \mbox{, and } \\
 \km(C_k) &=  \sum_{n=1}^N \znk w_n \norm{x_n - \mu(C_k)}^2\ .
\end{align*}
Note that these quantities are random variables defined by the random process.
All of them depend on the binary random variables $\znk$.

\subsection{Proximity}

By definition, the binary random variables $z_{nk}$ have the property
\[ \E\left[ z_{nk}\right] = \Pr\left( z_{nk} = 1\right) = r_{nk}^m \enspace . \]

In the following, we use Chebyshev's and Markov's inequality to give concentration bounds on the difference between weights, means, and costs of the fuzzy clusters and the hard clusters constructed by the random process, respectively.
One might suspect that Chernoff bounds yield better results. 
Unfortunately, these bounds do not directly measure the differences between the means and costs in terms of the fuzzy $K$-means costs, respectively. 
Hence, it is not clear how Chernoff bounds can be applied here.

Let $\lambda, \nu \in \R_{>1}$ be constants.

\begin{lemma}[Weights]\label{lem:chebychev:weights}
	For all $k \OneTo{K}$ we have
	\[ \Pr\left( \abs{w(C_k) - R_k} \geq \lambda \eta_k \right) \leq \frac{1}{\lambda^2} \enspace , \]
	where
	\begin{align} \label{eq:eta_k}
	    \eta_k = \sqrt{\sum_{n=1}^N r_{nk}^m (1-r_{nk}^m) w_n^2} \enspace . 
	\end{align}
\end{lemma}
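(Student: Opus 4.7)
The plan is a textbook application of Chebyshev's inequality, so the argument should be short. I would first identify $w(C_k) = \sum_{n=1}^N z_{nk} w_n$ as a linear combination of the indicator random variables $z_{nk}$, with $k$ fixed and $n$ ranging over $[N]$. The crucial structural point from the description of the random process in Section~\ref{sec:stochastic-fuzzy-k-means:rand-process} is that the assignments are sampled \emph{independently} for different points $x_n$. Hence, for fixed $k$, the random variables $\{z_{nk}\}_{n\OneTo{N}}$ are mutually independent (even though, for fixed $n$, the $z_{nk}$ across $k$ are dependent since $\sum_k z_{nk} \in \{0,1\}$ — but we do not need independence across $k$ here).

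Next I would compute the first two moments. By construction $\Pr(z_{nk}=1) = r_{nk}^m$ and $z_{nk}$ is a Bernoulli variable, so
\[
\E[z_{nk}] = r_{nk}^m, \qquad \Var(z_{nk}) = r_{nk}^m(1-r_{nk}^m).
\]
By linearity of expectation, $\E[w(C_k)] = \sum_{n=1}^N r_{nk}^m w_n = R_k$ (using the definition of $R_k$ from Equation~\eqref{eq:Rk}). By independence of the $z_{nk}$ across $n$,
\[
\Var(w(C_k)) = \sum_{n=1}^N w_n^2 \Var(z_{nk}) = \sum_{n=1}^N r_{nk}^m(1-r_{nk}^m) w_n^2 = \eta_k^2,
\]
matching the definition of $\eta_k$ in Equation~\eqref{eq:eta_k}.

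Finally, I would invoke Chebyshev's inequality: for any random variable $Y$ with finite variance $\sigma^2$ and any $\lambda>0$, $\Pr(|Y-\E[Y]|\geq \lambda\sigma)\leq 1/\lambda^2$. Applying this to $Y = w(C_k)$, with $\sigma = \eta_k$ and $\E[Y] = R_k$, yields the claimed bound
\[
\Pr\bigl(|w(C_k) - R_k| \geq \lambda\eta_k\bigr) \leq \frac{1}{\lambda^2}.
\]
There is no real obstacle in this proof — the only subtle point worth being explicit about is the independence direction (across $n$, not across $k$), which is precisely what lets the variance decompose as a sum. The analogous lemmas for means and $\km$-costs that presumably follow will likely be the genuinely hard ones, since there $w(C_k)$ appears in the denominator (for $\mu(C_k)$) and the quantity of interest is no longer linear in the $z_{nk}$.
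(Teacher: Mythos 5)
Your proposal is correct and follows the same route as the paper's proof: compute $\E[w(C_k)]=R_k$ by linearity, use independence of the $z_{nk}$ across $n$ to get $\Var(w(C_k))=\eta_k^2$, and apply Chebyshev's inequality. Your explicit remark that independence is only needed across $n$ (not across $k$) is a correct clarification that the paper leaves implicit, but it does not change the argument.
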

\begin{proof}
	Since $w(C_k) = \sum_{n=1}^N z_{nk} w_n$, we have
	$\E\left[ w(C_k) \right] = \sum_{n=1}^N \E \left[ z_{nk} \right] w_n = \sum_{n=1}^N r_{nk}^m w_n = R_k$.
	Furthermore, since $\{z_{nk}\}_n$ is a set of independent random variables, we have 
	\[ \Var\left( w(C_k) \right) = \sum_{n=1}^N \Var \left( z_{nk} \right) w_n^2=\sum_{n=1}^N  r_{nk}^m \left( 1-r_{nk}^m \right) w_n^2=\eta_k^2\enspace.\]
	The claim is a direct consequence of Chebyshev's inequality.
	
\end{proof}
Note that numerator and denominator of
\begin{align}
	\norm{\mu(C_k) - \mu_k}^2 
	= \frac{\norm{\sum_{n=1}^N z_{nk} w_n (x_n - \mu_k)}^2}{w(C_k)^2} \label{eq:diff-means}
\end{align}
are both random variables depending on the same random variables $z_{nk}$.
Lemma~\ref{lem:chebychev:weights} already gives a bound on the denominator $w(C_k)$. 
Next, we bound the numerator.
\begin{lemma}[Means]\label{lem:markov:means}
	\[ \Pr \left(\norm{\textstyle{\sum_{n=1}^N} z_{nk} w_n (x_n - \mu_k)}^2  \geq \nu \tau_k \right) \leq \frac{1}{\nu} \enspace , \]
	where
	\begin{align} \label{eq:tau_k}
	  \tau_k = \sum_{n=1}^N r_{nk}^m \left( 1-r_{nk}^m\right) w_n^2 \norm{x_n - \mu_k}^2\enspace . 
	\end{align}
\end{lemma}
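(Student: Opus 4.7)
The plan is to apply Markov's inequality to the nonnegative random variable
\[ Y_k \coloneqq \norm{\textstyle \sum_{n=1}^N z_{nk} w_n (x_n - \mu_k)}^2, \]
so the only real work is computing $\E[Y_k]$ and showing it equals $\tau_k$. Writing $v_n \coloneqq w_n(x_n-\mu_k)$ and expanding the squared norm via the scalar product yields
\[ Y_k = \sum_{n=1}^N \sum_{l=1}^N z_{nk} z_{lk} \sca{v_n}{v_l}. \]

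Next I would take expectations term by term. The key point is that the random variables $\{z_{nk}\}_{n\OneTo{N}}$ (for the fixed index $k$) are independent across $n$, and each $z_{nk}\in\{0,1\}$, so $\E[z_{nk}^2]=\E[z_{nk}]=\rnk^m$, while for $n\neq l$ we have $\E[z_{nk}z_{lk}] = \rnk^m \cdot r_{lk}^m$. Splitting the diagonal from the off-diagonal terms gives
\[ \E[Y_k] = \sum_{n=1}^N \rnk^m \norm{v_n}^2 + \sum_{n\neq l} \rnk^m r_{lk}^m \sca{v_n}{v_l}. \]
Adding and subtracting the missing diagonal terms $(\rnk^m)^2\norm{v_n}^2$ in the double sum, this becomes
\[ \E[Y_k] = \sum_{n=1}^N \rnk^m(1-\rnk^m)\norm{v_n}^2 + \norm{\textstyle \sum_{n=1}^N \rnk^m v_n}^2. \]

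The second summand vanishes: by the definition of $\mu_k$ in Equation~\eqref{eq:opt-means},
\[ \sum_{n=1}^N \rnk^m v_n = \sum_{n=1}^N \rnk^m w_n x_n - \mu_k \sum_{n=1}^N \rnk^m w_n = R_k\mu_k - R_k\mu_k = 0. \]
Substituting $\norm{v_n}^2 = w_n^2\norm{x_n-\mu_k}^2$ in the remaining sum, I obtain $\E[Y_k] = \tau_k$. Markov's inequality then gives $\Pr(Y_k \geq \nu\tau_k) \leq \E[Y_k]/(\nu\tau_k) = 1/\nu$, which is the claim.

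I do not anticipate any real obstacle. The only subtlety worth flagging is that the random variables $\znk$ are \emph{not} independent across $k$ (for a fixed $n$ they are coupled by the constraint $\sum_k \znk \in\{0,1\}$), but we only ever need independence across $n$ for a \emph{single} fixed $k$, which the random process does provide. The cancellation of the off-diagonal contribution via $\sum_n \rnk^m w_n(x_n-\mu_k)=0$ is what makes the bound cleanly reduce to the second-moment quantity $\tau_k$.
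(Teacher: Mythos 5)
Your proof is correct and follows essentially the same route as the paper: expand the squared norm into a double sum, use independence of $\{z_{nk}\}_n$ for fixed $k$ to compute the expectation, complete the square so that the cross terms collapse to $\norm{\sum_n r_{nk}^m w_n(x_n-\mu_k)}^2 = 0$ by optimality of $\mu_k$, and finish with Markov's inequality. Your remark that independence is only needed across $n$ for a fixed $k$ (not across $k$) is accurate and consistent with the paper's random process.
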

\begin{proof} 
	Let $M_k = \norm{\sum_{n=1}^N z_{nk} w_n (x_n - \mu_k)}^2$.
	Observe that
	\begin{align*}
 		M_k = &\sca{\sum_{n=1}^N z_{nk} w_n (x_n - \mu_k)}{\sum_{n=1}^N z_{nk} w_n (x_n - \mu_k)} \\
 		= &\sum_{n=1}^N \sum_{o=1}^N z_{nk} z_{ok} w_n w_o \sca{x_n - \mu_k}{x_o - \mu_k} \enspace .
	\end{align*}
	Because the expectation is linear, we obtain
	\begin{align*}
		E[M_k] = &\sum_{n=1}^N \sum_{o=1}^N E\left[z_{nk} z_{ok}\right]  w_n w_o\sca{(x_n - \mu_k)}{(x_o - \mu_k)} \\
		= &\sum_{n=1}^N E\left[z_{nk}^2\right]  w_n^2 \norm{x_n - \mu_k}^2 + 
		\sum_{o\neq n} E\left[z_{nk} z_{ok}\right]  w_n w_o\sca{(x_n - \mu_k)}{(x_o - \mu_k)} \enspace .
	\end{align*}
	Recall that the $z_{nk}$ are independent binary random variables, hence for all $n,o\OneTo{N}$, $n\neq o$,
	\begin{align*}
	E\left[z_{nk}^2\right] = &\Pr\left(z_{nk} = 1 \right) = r_{nk}^m \\
	E\left[z_{nk} z_{ok} \right] = &\Pr\left(z_{nk}z_{ok} = 1\right) = r_{nk}^m r_{ok}^m \enspace . 
	\end{align*}
	Thus,
	\begin{align*}
		E[M_k] = &\sum_{n=1}^N r_{nk}^m  w_n^2 \norm{x_n - \mu_k}^2 + 
		\sum_{o\neq n} r_{nk}^m r_{ok}^m  w_n  w_o\sca{(x_n - \mu_k)}{(x_o - \mu_k)} \\
		= &\sum_{n=1}^N r_{nk}^m w_n^2 \norm{x_n - \mu_k}^2 - r_{nk}^{2m} \norm{x_n - \mu_k}^2 
		+\sum_{o=1}^N r_{nk}^m r_{ok}^m  w_n w_o\sca{(x_n - \mu_k)}{(x_o - \mu_k)} \\
		= &\sum_{n=1}^N r_{nk}^m (1-r_{nk}^m)  w_n^2 \norm{x_n - \mu_k}^2 + 
		r_{nk}^m w_n\sca{(x_n - \mu_k)}{\underbrace{\sum_{o=1}^N r_{ok}^m  w_o (x_o - \mu_k)}_{=0}} \\
		= &\sum_{n=1}^N r_{nk}^m (1-r_{nk}^m)  w_n^2 \norm{x_n - \mu_k}^2 = \tau_k \enspace.
	\end{align*}
	Applying Markov's inequality yields the claim.
%
\end{proof}
Finally, we can bound the cluster-wise cost as follows.
\begin{lemma}[Cost]\label{lem:markov:covar}
	For all $k \OneTo{K}$ we have
	\[ 
		\Pr \left( \km(C_k)   \geq  \nu \cdot \phiXmk(\{\rnk\}_n)\right) \leq \frac{1}{\nu}
	\]
\end{lemma}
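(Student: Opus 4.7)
The plan is to apply Markov's inequality, which only requires an upper bound on $\E[\km(C_k)]$ in terms of $\phiXmk(\rnSet)$. Unlike the previous two lemmas, the variance plays no role; the whole argument rests on the optimality of the centroid.

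The key observation is that $\mu(C_k)$ is by definition the weighted centroid of the points in $C_k$, and therefore minimizes the weighted sum of squared distances over all choices of center. In particular, substituting $\mu_k$ in place of $\mu(C_k)$ cannot decrease the cost, so
\[
\km(C_k) \;=\; \sum_{n=1}^N \znk w_n \norm{x_n - \mu(C_k)}^2 \;\leq\; \sum_{n=1}^N \znk w_n \norm{x_n - \mu_k}^2.
\]
This deterministic inequality (which follows from Lemma~\ref{lem:magic-formula} applied to the random cluster $C_k$ with center $\mu_k$) removes the coupling between $\mu(C_k)$ and the $\znk$'s, so that only the random indicators remain in the bound.

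Taking expectations and using linearity together with $\E[\znk]=\rnk^m$, I get
\[
\E[\km(C_k)] \;\leq\; \sum_{n=1}^N \E[\znk]\, w_n \norm{x_n - \mu_k}^2 \;=\; \sum_{n=1}^N \rnk^m w_n \norm{x_n - \mu_k}^2 \;=\; \phiXmk(\rnSet).
\]
Since $\km(C_k)\geq 0$, Markov's inequality then yields
\[
\Pr\bigl(\km(C_k) \geq \nu\cdot \phiXmk(\rnSet)\bigr) \;\leq\; \frac{\E[\km(C_k)]}{\nu\cdot \phiXmk(\rnSet)} \;\leq\; \frac{1}{\nu},
\]
as required.

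There is no real obstacle here: the centroid-minimization fact is a one-line invocation of Lemma~\ref{lem:magic-formula}, and the rest is a direct application of Markov. The only subtlety worth flagging is that if $\phiXmk(\rnSet)=0$ (which forces all points with $\rnk>0$ to coincide with $\mu_k$), then $\km(C_k)=0$ almost surely and the bound holds trivially, so one can assume $\phiXmk(\rnSet)>0$ when dividing.
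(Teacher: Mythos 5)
Your proof is correct and follows essentially the same route as the paper's: bound $\km(C_k)$ by $\sum_{n} \znk w_n \norm{x_n-\mu_k}^2$ via the centroid-optimality in Lemma~\ref{lem:magic-formula}, compute the expectation $\phiXmk(\rnSet)$ using $\E[\znk]=\rnk^m$, and apply Markov's inequality. The only (harmless) cosmetic difference is that you apply Markov to $\km(C_k)$ directly while the paper applies it to the upper-bounding random variable; your remark on the degenerate case $\phiXmk(\rnSet)=0$ is a fine extra precaution.
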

\begin{proof}

	Observe that, by definition of $\mu(C_k)$ and Lemma~\ref{lem:magic-formula},
	\begin{align*}
		\km(C_k)  
		= \sum_{n=1}^N z_{nk} w_n\norm{x_n-\mu(C_k)}^2
		&\leq \sum_{n=1}^N z_{nk} w_n\norm{x_n-\mu_k}^2
	\end{align*}
	The expectation of the upper bound evaluates to
	\[ \E\left[  \sum_{n=1}^N z_{nk}  w_n \norm{x_n-\mu_k}^2  \right]  
	=  \sum_{n=1}^N r_{nk}^m  w_n\norm{x_n-\mu_k}^2 = \phiXmk(\{\rnk\}_n) \enspace .\]
	Applying Markovs's inequality yields the claim. 
	
\end{proof}

Now, we can formally prove the existence of hard clusters imitating given fuzzy clusters.

\begin{corollary}\label{cor:hard-clustering:without-assumption}
	Let $X=\{(x_n,w_n)\}_{n\OneTo{N}}\subset \R^D\times \R_{\geq 0}$ be a weighted point set. 
	Let $\{r_{nk}\}_{n,k}$ be the memberships of a fuzzy $K$-means solution for $X$ with corresponding optimal means $\{\mu_k\}_k$.
	Then, there exist pairwise disjoint subsets $\{C_k\}_{k\OneTo{K}}$ of $X$ such that for all $k\OneTo{K}$
	\begin{align}
		\abs{w(C_k) - R_k} &\leq \sqrt{4K} \cdot \eta_k  \label{eq:hard-cluster:wa:weights}\\
		\norm{\mu(C_k) - \mu_k}^2 &\leq \frac{4K }{(R_k - \sqrt{4K} \eta_k)^2} \cdot \tau_k \label{eq:hard-cluster:wa:means}\\
		\km(C_k)
		&\leq 4K \cdot \phiXmk(\{\rnk\}_n) \enspace ,\label{eq:hard-cluster:wa:costs}
	\end{align}
	where $\eta_k = \sqrt{\sum_{n=1}^N r_{nk}^m (1-r_{nk}^m) w_n^2}$ 
	and 
	$\tau_k = \sum_{n=1}^N r_{nk}^m \left( 1-r_{nk}^m\right) w_n^2 \norm{x_n - \mu_k}^2 $.

\end{corollary}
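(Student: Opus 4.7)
The plan is to apply the probabilistic method to the random process described in Section~\ref{sec:stochastic-fuzzy-k-means:rand-process}, combining the three concentration bounds from Lemmas~\ref{lem:chebychev:weights}, \ref{lem:markov:means}, and \ref{lem:markov:covar} via a union bound. Recall that the random process samples assignment indicators $z_{nk}\in\{0,1\}$ for each point independently with $\sum_{k=1}^K z_{nk}\in\{0,1\}$; this immediately guarantees that the resulting hard clusters $C_k=\{(x_n,w_n)\in X\mid z_{nk}=1\}$ are pairwise disjoint, which handles the disjointness requirement of the corollary for free.

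For each $k\OneTo{K}$ I would consider three bad events: (i) $\lvert w(C_k)-R_k\rvert\geq\sqrt{4K}\,\eta_k$; (ii) $\norm{\sum_{n=1}^N z_{nk}w_n(x_n-\mu_k)}^2\geq 4K\,\tau_k$; and (iii) $\km(C_k)\geq 4K\cdot\phiXmk(\{\rnk\}_n)$. Applying the three lemmas with $\lambda^2=4K$ and $\nu=4K$, each of these bad events has probability at most $1/(4K)$. A union bound over all $3K$ bad events then gives total bad probability at most $3K\cdot\tfrac{1}{4K}=\tfrac{3}{4}<1$. Hence with probability at least $1/4$ there is a realization of the $z_{nk}$ under which all $3K$ good events hold simultaneously, and we fix the resulting clusters $\{C_k\}_{k\OneTo{K}}$.

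Under this realization, \eqref{eq:hard-cluster:wa:weights} and \eqref{eq:hard-cluster:wa:costs} are direct restatements of (i) and (iii). To obtain \eqref{eq:hard-cluster:wa:means}, I would use identity \eqref{eq:diff-means},
\[
\norm{\mu(C_k)-\mu_k}^2=\frac{\norm{\sum_{n=1}^N z_{nk}w_n(x_n-\mu_k)}^2}{w(C_k)^2},
\]
bound the numerator by $4K\,\tau_k$ via (ii), and bound $w(C_k)^2$ from below by $(R_k-\sqrt{4K}\eta_k)^2$ via (i); combining yields the stated inequality.

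The main subtlety I expect is the denominator in \eqref{eq:hard-cluster:wa:means}: the bound is vacuous (or meaningless) when $R_k-\sqrt{4K}\,\eta_k\leq 0$, so strictly speaking the estimate only becomes informative once $R_k$ is large enough compared to the noise term $\eta_k$. This is exactly the regime that the hypothesis $\min_k R_k\geq 16K\wmax/\epsilon$ of Theorem~\ref{thm:hard-clustering} will enforce when the corollary is invoked there, and one should also observe that in the degenerate case the weight bound \eqref{eq:hard-cluster:wa:weights} still guarantees $w(C_k)\geq R_k-\sqrt{4K}\,\eta_k$ so no division-by-zero ambiguity arises when this quantity is positive. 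Aside from this, the proof is just a clean union bound over the three concentration inequalities already established.
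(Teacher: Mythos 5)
Your proposal is correct and follows essentially the same route as the paper's proof: apply Lemmas~\ref{lem:chebychev:weights}--\ref{lem:markov:covar} with $\lambda=\sqrt{4K}$ and $\nu=4K$, take a union bound over the $3K$ bad events (total failure probability at most $3/4$), and convert the numerator and weight bounds into \eqref{eq:hard-cluster:wa:means} via Equation~\eqref{eq:diff-means}. Your added remarks on disjointness coming for free from the random process and on the vacuity of the means bound when $R_k-\sqrt{4K}\,\eta_k\leq 0$ are accurate and consistent with how the corollary is later used in Theorem~\ref{thm:hard-clustering}.
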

\begin{proof}
	Recall that the binary random variables $\znk$ indicate hard clusters $\{C_k\}_{k\OneTo{K}}$ by means of $C_k=\{(x_n,w_n)\in X|\znk=1\}$. 
	If we apply Lemma~\ref{lem:chebychev:weights} through \ref{lem:markov:covar} with $\lambda = \sqrt{4K}$ and $\nu=\lambda^2$, we can take the union bound and obtain that the inequalities stated in the lemmata hold simultaneously with probability strictly larger than $0$. 
	Finally, using Equation~\eqref{eq:diff-means} yields the claim.
\end{proof}

\subsection{Proof of Theorem~\ref{thm:hard-clustering}}

	We apply Corollary~\ref{cor:hard-clustering:without-assumption} to the given membership values $\rnkSet$ and the point set 
	\[ \hat X=\left\{\left(x_n,\frac{w_n}{\wmax}\right)\;\middle\vert \; (x_n,w_n)\in X\right\}\ .\]
	Let $k\OneTo{K}$. 
	Note that the cluster weight $\hat R_k$ with respect to $\hat X$ is given by 
	\begin{align*} 
	 \hat R_k = {\sum_{n=1}^N  r_{nk}^m \left(\frac{w_n}{\wmax}\right)} = \frac{1}{\wmax} R_k \ . 
	\end{align*}
	Using $\min_{k\OneTo{K}} R_k\geq 16K\wmax/\epsilon$, we can conclude
	\begin{align} 
	  \epsilon \cdot \hat R_k \geq 16K \ . \label{eq:R_k-geq-24Kwmax/eps}
	\end{align}

	Observe that
	\begin{align} \eta_k = \sqrt{\sum_{n=1}^N  r_{nk}^m (1-r_{nk}^m) \left(\frac{w_n}{\wmax}\right)^2 } 
	\leq \sqrt{\sum_{n=1}^N  r_{nk}^m \frac{w_n}{\wmax} }
	= \sqrt{\hat R_k} 
	\enspace . \label{eq:eta_k-geq-sqrt-R_k}
	\end{align}
	Due to Inequality~\eqref{eq:R_k-geq-24Kwmax/eps}, we have
	\begin{align}
	 \sqrt{4K} 
	 \leq \frac{\sqrt{16K}}{2} 
	 < \frac{\sqrt{\epsilon\cdot \hat R_k}}{2} 
	 < \frac{\sqrt{\hat R_k}}{2} \label{eq:Rk-geq-6clambda2}\ .
	\end{align}
	Using Inequality~\eqref{eq:Rk-geq-6clambda2} and \eqref{eq:eta_k-geq-sqrt-R_k}, we can conclude
	\begin{align} 
	 \sqrt{4K}\cdot \eta_k < \frac{\hat R_k}{2} \enspace . \label{eq:lambda-etak}
	\end{align}
	Hence, Inequality~\eqref{eq:hard-cluster:wa:weights} of Corollary~\ref{cor:hard-clustering:without-assumption} yields Inequality~\eqref{eq:proof:r} of Theorem~\ref{thm:hard-clustering}. 
			
	Note that the optimal mean vectors $\mu_k$ with respect to $X$ and $\rnkSet$ coincide with the corresponding optimal mean vectors with respect to $\hat X$ and $\rnkSet$ (cf. Equation~\eqref{eq:opt-means}).
	 
	Next, we have 
	\begin{align*} 
	\tau_k 
	= \sum_{n=1}^N  r_{nk}^m \left( 1-r_{nk}^m\right) \left(\frac{w_n}{\wmax}\right)^2 \norm{x_n - \mu_k}^2 
	\leq \frac{1}{\wmax}\phiXmk(\{\rnk\}_n)
	\enspace .  
	\end{align*}
	Using Inequality~\eqref{eq:lambda-etak}, we obtain 
	\[ \hat R_k - \sqrt{4K}\eta_k \geq  \frac{\hat R_k}{2} > 0 \ .\] 
	Due to Inequality~\eqref{eq:Rk-geq-6clambda2}, we have 
	\[ 4K\leq  \frac{\epsilon \hat R_k}{4}\ .\] 
	Hence, 
	\[ \frac{4K}{(\hat R_k - \sqrt{4K}\eta_k)^2} 
	\leq \frac{\epsilon}{\hat R_k } 
	= \frac{\epsilon\cdot \wmax}{R_k} \enspace .\]
	Therefore, Inequality~\eqref{eq:hard-cluster:wa:means} of Corollary~\ref{cor:hard-clustering:without-assumption} yields Inequality~\eqref{eq:proof:mu} of Theorem~\ref{thm:hard-clustering}. 
	
	Finally, recall that the optimal mean vectors for $X$ and $\rnkSet$ coincide with those for $\hat X$ and $\rnkSet$. 
	Thus, $\phim_{\hat X,k}(\{\rnk\}_n) = \frac{1}{\wmax}\phiXmk(\{\rnk\}_n)$. 
	Analogously, for all $C\subset X$ and $\hat C\subset \hat X$ with $\left\{\left(x,\frac{w}{\wmax}\right)\middle|(x,w)\in C\right\}=\hat C$ we have $\km(\hat C)=\frac{1}{\wmax}\km(C)$. 
	Hence, Inequality~\eqref{eq:hard-cluster:wa:costs} of Corollary~\ref{cor:hard-clustering:without-assumption} yields Inequality~\eqref{eq:proof:sigma} of Theorem~\ref{thm:hard-clustering}.
		
\qed

\subsection{Superset Sampling (Proof of Theorem~\ref{thm:superset-sampling})}\label{sec:superset-sample}

Recall that, in Section~\ref{sec:stochastic-fuzzy-k-means}, we argued on the existence of hard clusters which approximate optimal fuzzy clusters well.
In this section we consider the problem of finding a good approximation to the means of such unknown hard clusters.

First, consider the problem of finding the mean of a single unknown cluster $C$.
That is, given a set $X\subset\R^D$ and some unknown subset $C\subset X$, we want to find a good approximation to $\mu(C)$.
If we assume that $C$ contains at least a constant fraction of the points of $X$, then this problem can be solved via the \emph{superset sampling} technique \cite{inaba94}.
The main idea behind this technique is that the mean of a small uniform sample of a set is, with high probability, already a good approximation to the mean of the whole set.
Knowing that $C$ contains a constant fraction of points from $X$, we can obtain a uniform sample of $C$ by sampling a uniform multiset $S$ from $X$ and inspecting all subsets of $S$.
Thereby, we obtain a set of candidate means, i.e. the means of all subsets of $S$, including (with certain probability) one candidate that is a good approximation to the mean of $C$. 
Formally, using \cite{ackermann10}, we directly obtain the following lemma.

\begin{lemma}[Superset Sampling \cite{ackermann10}]
 Let $X\subset\R^D$, $\alpha\in(0,1]$, and $\epsilon\in(0,1]$.
 Let $S\subset X$ be a uniform sample multiset of size at least $4/(\alpha\epsilon)$.
 
 Consider an arbitrary but fixed (unknown) subset $C\subset X$ with $\abs{C}\geq\alpha \abs{X}$.
 With probability at least $1/10$ there exists a subset $C'\subset S$ with $\abs{C'}= \lceil 2/\epsilon \rceil$ satisfying
 \begin{align*} 
 \norm{\mu(C) - \mu(C')}^2 \leq \frac{\epsilon}{\abs{C}}\sum_{x\in C} \norm{x-\mu(C)}^2\enspace.
 \end{align*}
 where for any finite set $S\subset \R^D$ we set $ \mu(S)\coloneqq \frac{\sum_{x\in S} x}{\abs{S}}$.
\end{lemma}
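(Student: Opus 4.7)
The plan is to combine two standard probabilistic ingredients: a concentration bound on the sample overlap $\abs{S \cap C}$, and the classical Inaba-style mean-estimation identity that underlies every superset-sampling argument.

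First, I would introduce $Y = \abs{S \cap C}$, the number of i.i.d.\ uniform draws in $S$ that happen to land inside $C$. Since $\abs{S} \geq 4/(\alpha\epsilon)$ and $\abs{C}/\abs{X} \geq \alpha$, the expectation satisfies $\E[Y] \geq 4/\epsilon$, and as $Y$ is a sum of independent Bernoullis we have $\Var(Y) \leq \E[Y]$. Applying Cantelli's one-sided Chebyshev bound at threshold $\E[Y]/2$ gives
\[ \Pr\!\left[ Y \leq \E[Y]/2 \right] \;\leq\; \frac{\Var(Y)}{\Var(Y) + \E[Y]^2/4} \;\leq\; \frac{1}{1+\E[Y]/4} \;\leq\; \frac{1}{2}, \]
so with probability at least $1/2$ the sample already contains strictly more than $2/\epsilon$ points of $C$, i.e.\ at least $\lceil 2/\epsilon \rceil$ of them.

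Next, I would condition on this event and take $C' \subset S$ to be any $\lceil 2/\epsilon\rceil$ of the sample points lying in $C$. Because $S$ is i.i.d.\ uniform on $X$, exchangeability of the draws implies $C'$ is itself a uniform random multiset of $C$ of the required size. The Inaba--Katoh--Imai variance identity (a one-line computation that expands $\norm{\mu(C')-\mu(C)}^2$ using independence of the draws and the definition $\mu(C) = \E[x]$ for $x$ uniform on $C$) then gives
\[ \E\!\left[ \norm{\mu(C')-\mu(C)}^2 \right] \;=\; \frac{1}{\abs{C'}\,\abs{C}} \sum_{x \in C} \norm{x-\mu(C)}^2. \]
A single application of Markov's inequality now yields that with conditional probability at least $1/2$,
\[ \norm{\mu(C')-\mu(C)}^2 \;\leq\; \frac{2}{\abs{C'}\,\abs{C}} \sum_{x\in C} \norm{x-\mu(C)}^2 \;\leq\; \frac{\epsilon}{\abs{C}} \sum_{x\in C} \norm{x-\mu(C)}^2, \]
using $\abs{C'} = \lceil 2/\epsilon \rceil \geq 2/\epsilon$. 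The chain rule then gives a joint success probability of at least $1/2 \cdot 1/2 = 1/4 \geq 1/10$.

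The main obstacle I foresee is not technical depth but bookkeeping. One has to justify carefully that conditioning on the cardinality event $\{Y \geq \lceil 2/\epsilon \rceil\}$ preserves the uniformity of the sub-multiset $C'$ on $C$ (this follows from the exchangeability of the i.i.d.\ draws comprising $S$), and one has to check that the two constants arising from Cantelli and from Markov indeed compose to at least the claimed $1/10$. The underlying variance identity is a routine computation, and the overlap concentration is a one-line Cantelli bound, so I do not expect any deeper difficulty.
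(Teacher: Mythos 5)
Your argument is correct, but note that the paper does not prove this lemma at all: it is imported by citation from \cite{ackermann10} (the technique going back to \cite{inaba94} and \cite{kumar04}), so there is no in-paper proof to compare against. What you have written is essentially a self-contained reconstruction of the standard superset-sampling argument behind the cited result: (i) the overlap $\abs{S\cap C}$ has expectation at least $4/\epsilon$, so a Cantelli/Chebyshev bound gives at least $\lceil 2/\epsilon\rceil$ samples inside $C$ with probability at least $1/2$; (ii) conditioned on which draws land in $C$, those draws are i.i.d.\ uniform on $C$, so the sample-mean variance identity $\E\left[\norm{\mu(C')-\mu(C)}^2\right]=\frac{1}{\abs{C'}\abs{C}}\sum_{x\in C}\norm{x-\mu(C)}^2$ plus Markov yields the distance bound with conditional probability at least $1/2$, giving overall success probability $1/4\geq 1/10$. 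The one step you flag, uniformity of $C'$ after conditioning, is indeed the only delicate point; the clean way to write it is to condition on the index set $I=\{i: x_i\in C\}$ (not merely on its cardinality) and to select $C'$ by a rule depending only on $I$, e.g.\ the first $\lceil 2/\epsilon\rceil$ indices of $I$; given $I$, the corresponding points are i.i.d.\ uniform on $C$, the conditional Markov bound holds for every such $I$ with $\abs{I}\geq\lceil 2/\epsilon\rceil$, and averaging over these $I$ preserves the bound. With that phrasing your proof is complete and even improves the stated constant from $1/10$ to $1/4$.
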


As a consequence, for weighted point sets $X$ with weights in $\Q$, we obtain a good approximation of the mean of $X$ by sampling each (unweighted) point with probability proportional to its weight.

\begin{corollary}[Weighted Superset Sampling]\label{lem:superset-sampling}
 Let $X=\{(x_n,w_n)\}_{n\OneTo{N}}\subset\R^D\times \Q_{\geq 0}$,  $\alpha\in(0,1]$, and $\epsilon\in(0,1]$.
 Let $W=\sum_{n=1}^N w_n$. 
 Let $S\subset \{(x_n,1)\}_{n\OneTo{N}}$ be a sample multiset of size at least $4/(\alpha\epsilon)$,
 where each point $x_n\in X$ is sampled with probability $w_n/W$.
 
 Consider an arbitrary but fixed (unknown) subset $C\subset X$ with $w(C)\geq\alpha W$.
 With probability at least $1/10$ there exists a subset $C'\subset S$ with $\abs{C'}=\lceil 2/\epsilon \rceil$ satisfying
 \begin{align*} 
 \norm{\mu(C) - \mu(C')}^2 \leq \frac{\epsilon}{w(C)}\km(C)\enspace.
 \end{align*}
\end{corollary}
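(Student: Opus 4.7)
The plan is to reduce the weighted statement to the unweighted superset sampling lemma stated immediately above by exploiting that the weights are rational and therefore admit a common denominator. First, write each $w_n = p_n/q$ with common denominator $q\in\N_{>0}$ and numerators $p_n\in\N$, and define the unweighted multiset $X' \coloneqq \{x_n \text{ with multiplicity } p_n : n\OneTo{N}\}\subset\R^D$, so that $\abs{X'} = \sum_{n=1}^N p_n = qW$. Sampling a point $x_n$ from $X$ with probability $w_n/W = p_n/(qW)$ is exactly the same as drawing a uniformly random element of $X'$. Consequently, the sample multiset $S$ of size $\lceil 4/(\alpha\epsilon)\rceil$ in the statement has the same distribution as a uniform sample multiset of the same size drawn from $X'$.

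Next I would translate the geometric quantities. For the fixed subset $C\subseteq X$ with $w(C)\geq \alpha W$, let $C''\subseteq X'$ be the corresponding unweighted multiset that contains $p_n$ copies of $x_n$ for each $(x_n,w_n)\in C$. Then $\abs{C''} = q\cdot w(C) \geq \alpha\, qW = \alpha\,\abs{X'}$, so $C''$ fulfils the density hypothesis of the unweighted lemma with the same parameter $\alpha$. A direct computation shows $\mu(C'')=\mu(C)$ and
\[ \sum_{x\in C''}\norm{x-\mu(C'')}^2 \;=\; \sum_{(x_n,w_n)\in C} p_n\norm{x_n-\mu(C)}^2 \;=\; q\cdot\km(C). \]

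Finally, I would apply the unweighted superset sampling lemma to $X'$, the unknown subset $C''$, and the sample $S$, with parameters $\alpha$ and $\epsilon$; the sample size $\lceil 4/(\alpha\epsilon)\rceil$ matches exactly what the lemma requires. With probability at least $1/10$ there is a multiset $C'\subseteq S$ of size $\lceil 2/\epsilon\rceil$ with
\[ \norm{\mu(C'')-\mu(C')}^2 \;\leq\; \frac{\epsilon}{\abs{C''}}\sum_{x\in C''}\norm{x-\mu(C'')}^2 \;=\; \frac{\epsilon}{q\cdot w(C)}\cdot q\cdot\km(C) \;=\; \frac{\epsilon}{w(C)}\km(C). \]
Since $\mu(C'')=\mu(C)$, this is precisely the claimed bound. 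I do not expect a genuine obstacle here; the only mild subtlety is the bookkeeping around the common denominator $q$, namely verifying that the two factors of $q$ introduced by passing from $w(C)$ to $\abs{C''}$ and from $\km(C)$ to the unweighted sum cancel, and that the probabilistic sampling distributions really coincide.
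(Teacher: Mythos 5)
Your proposal is correct and follows essentially the same route as the paper: replace each weighted point by a number of unweighted copies determined by a common denominator of the rational weights, observe that weighted sampling coincides with uniform sampling from the duplicated multiset and that the density, mean, and cost quantities transfer (with the denominator cancelling), and then invoke the unweighted superset sampling lemma. Your write-up is in fact slightly more explicit than the paper's, which states the reduction and concludes immediately.
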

\begin{proof}
 Let $\omega$ be a common denominator of all $\{w_n\}_{n\OneTo{N}}$. 
 Let $\hat X\subset \R^D$ be the multiset containing $w_n\cdot \omega$ copies of each $x_n$ with $(x_n,w_n)\in X$, and 
 let $\hat C\subset \R^D$ be the multiset containing $w_n\cdot \omega$ copies of each $x_n$ with $(x_n,w_n)\in C$. 
 Note that sampling a point uniformly at random from $\hat X$ yields the same distribution on the points as sampling a point from $X$ with probability $w_n/W$.
 Furthermore, if $C\subset X$ with $w(C)\geq\alpha W$, then $\hat{\abs{C}} = w(C) \geq \alpha W = \alpha \hat{\abs{X}}$. 
 Hence, applying the previous lemma directly yields the claim. 
\end{proof}

Given Corollary~\ref{lem:superset-sampling}, we can finally prove Theorem~\ref{thm:superset-sampling}. 

\begin{proof}[Proof of Theorem~\ref{thm:superset-sampling}]
  
	Let $R= \lceil 10\log(2K)\rceil$ and let $W=\sum_{n=1}^N w_n$. 
	For each $r\OneTo{R}$, sample an unweighted multiset $S_{r}\subset X$ of size at least $4/(\alpha\epsilon)$ by choosing a point $x_n\in X$ with probability $w_n/W$. 
	Define the candidate set $T$ as
	\[  T \coloneqq \left\{ \mu(S')\ \bigg|\ S'\subset S_{r},\ \abs{S'}=\lceil 2/\epsilon \rceil,\ r\OneTo{R}\right\}^K \enspace . \]
	
	Next, we prove that $T$ and its construction have the desired properties. 
	
	Let $M = \left\{ \mu(S')\ |\ S'\subset S_{r},\ \abs{S'}=\lceil 2/\epsilon\rceil,\ r\OneTo{R}\right\}$ and fix an arbitrary $k\OneTo{K}$ with $w(C_k)\geq \alpha W$. 
	By Lemma~\ref{lem:superset-sampling}, there is, with probability $p\coloneqq 1-(9/10)^R$, a candidate $\tmu_k\in M$ satisfying 
	\[ \norm{\mu(C_k) - \tmu_k}^2 \leq \frac{\epsilon}{w(C_k)}\km(C_k)\enspace  .\]
	Since $ R \geq 10\ln(2K)$, we have that $(9/10)^R \leq 1/(2K)$.
	Hence, $p\geq 1-1/(2K)$.
	
	By taking the union bound, we obtain that,  with probability at least $1/2$,  $T=M^K$ contains a tuple $(\tmu_k)_{k\OneTo{K}}$ where for each $k\OneTo{K}$ the vector $\tmu_k$ is close to $\mu(C_k)$ if $w(C_k)\geq \alpha W$.
	
	Finally, we analyze the time needed to construct $T$.
	We have to sample $R$ multisets of size $\lceil 4/(\alpha\epsilon)\rceil$ from $X$, which needs running time $\mathcal{O}(R\cdot (1/(\alpha\epsilon))\cdot N)$.
	Each of the multisets contains at most $\left(4/(\alpha\epsilon)\right)^{2/\epsilon} $ subsets of size $\lceil 2/\epsilon\rceil$.
	We compute the the means of all these subsets, which needs time $\mathcal{O}(1/\epsilon)$ per subset.
	Then, by combining all these means we obtain the candidate set of tuples $T$.
	Consequently, this set has size
	\[\abs{T} \leq \left( R \cdot \left(\frac{4}{\alpha\epsilon}\right)^{1/\epsilon} \right)^K
	          = 2^{\frac{1}{\epsilon}\cdot\log \left(\frac{4}{\alpha\epsilon}\right)\cdot \log(R)}\enspace .\]  
	Hence, we can bound the running time of our construction by
	\[ \mathcal{O} \left( R\cdot \frac{1}{\epsilon} \left(N 
	+ 2^{\frac{K}{\epsilon}\cdot\log \left(\frac{1}{\alpha\epsilon}\right)\cdot \log(R)} \right)\right) \enspace .\]

\end{proof}

\subsection{Candidate Set of Means (Proof of Theorem~\ref{thm:candidate-means})}\label{sec:proof:candidates}

The following we use the coreset construction used by \cite{Chen09}, \cite{HarPeled03}, and \cite{Matousek00} to obtain a candidate set that contains a set of means which induce an  $(1+\epsilon)$-approximation to the fuzzy $K$-means problem. 

\subsubsection{Construction}\label{sec:proof:candidates:construction}

We are given $X=\{x_n\}_{n\OneTo{N}}\subset\R^D$ and $K\in\N$.
Let $A=\{a_k\}_{k\OneTo{K}}\subset\R^D$ be an $\alpha$-approximation of the $K$-means problem with respect to $X$, i.e. 
\begin{align}\label{eq:candidate:A-approx}
 \km_X(A) \leq \alpha\cdot\km_{X,K}\ .
\end{align}
Furthermore, let
\begin{align} 
 R 
 &\coloneqq \sqrt{\frac{\km_X(A)}{\alpha N}}\ \label{eq:candidate:R}\ ,\\
 \ball(x,r)
 &\coloneqq \left\{ y\in\R^D \;|\; \norm{x-y} \leq r \right\} \ ,\\
 \Phi 
 &\coloneqq \left\lceil \frac{1}{2}\left( \log(\alpha N) + m\cdot\log\left( \frac{64\alpha m K^2}{\epsilon}\right) \right) \right\rceil\ , \label{eq:candidate:Phi}\\
 \cU
 &\coloneqq\bigcup_{k=1}^K \ball(a_k,2^\Phi R) \ ,\\
 \Lkj
 &\coloneqq \begin{cases}
     \ball(a_k,R), & \mbox{ if }j=0\\
     \ball(a_k,2^jR)\setminus \ball(a_k,2^{j-1}R) &\mbox{ if }j\geq 1
    \end{cases}\ , \\
 \kappa
 &\coloneqq \alpha K^{m-1} \ , \mbox{ and}\\
 b
 &\coloneqq 1208
\end{align}
for all $k\OneTo{K}$, $j\in\{0,\ldots,\Phi\}$, $x\in\R^D$, and $r\in\R$. 

Construct an axis-parallel grid with side length 
\[ \rho_j= \frac{2^j\epsilon R }{b\kappa \sqrt{D}} \] 
to partition $\Lkj$ into cells. 
Inside each $\Lkj$ pick the center of the cell as its representative point. 
Denote by $\cG$ be the set of all representative points. 

\subsubsection{Proof of Theorem~\ref{thm:candidate-means}}

In Section~\ref{sec:proof:candidates:existence}, we show that there exists $\tC=\{\tmu_l\}_{l\OneTo{L}}\subseteq\cU$ with $L\OneTo{K}$ and 
\[ \phiXm(\tC)\leq \left(1+\frac{\epsilon}{2}\right) \phiopt\ .\]
Since $\tC=\{\tmu_l\}_{l\OneTo{L}}\subseteq\cU$, there our construction from Section~\ref{sec:proof:candidates:construction} defines a representative $\tmu_l'\in\cG$ for each $\tmu_l\in \tC$. 

In Section~\ref{sec:proof:candidates:kmeans-approx}, we consider arbitrary sets $C=\cU$ with representatives $C'\subseteq\cG$. 
We show that $K$-means costs of $C$ and $C'$ are similar. 
Given this result, in Section~\ref{sec:proof:candidates:fuzzy-approx}, we show that the fuzzy $K$-means costs of $C$ and $C'$ are similar.
More precisely, we show that if $\alpha \leq 2$, then
\[ \abs{\phiXm(C)-\phiXm(C')}\leq \frac{\epsilon}{4}\phiXm(C)\ . \]
In particular, this holds for $\tC$ and its representatives $\tC'\subset \cG$. 

From these results we can conclude that
\[ 
\phiXm(\tC')
\leq \left(1+\frac{\epsilon}{4}\right)\phiXm(\tC) 
\leq \left(1+\frac{\epsilon}{2}\right)\left(1+\frac{\epsilon}{4}\right) \phiopt 
\leq (1+\epsilon)\phiopt\ .\]
Finally, observe that for all $M\subset\R^D$ it holds $\phiXm(\tC'\cup M)\leq \phiXm(\tC')$. 
Thus, by testing all subsets $\{\mu_k'\}_{k\OneTo{K}}\subset\cG$, we find a solution that is at least as good as $\tC'$ and hence a $(1+\epsilon)$-approximation.

\paragraph{Preliminaries}

In the following proof, we make extensive use of the following lemmata, as well as the lemmata given in Section~\ref{sec:stuff}. 

\begin{corollary}\label{cor:kmeans}
   Let $X\subset \R^D\times \R_{\geq 0}$ and $C\subset\R^D$ with $\abs{C}=K$. 
   If $\km_X(C)\leq \gamma \km_{X,K}$, then $\phiXm(C)\leq (\gamma\cdot K^{m-1})\phiopt$. 
   If $\phiXm(C)\leq \gamma \phiopt$, then $\km_X(C)\leq (\gamma\cdot K^{m-1}) \km_{X,K}$	
\end{corollary}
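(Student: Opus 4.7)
The plan is to derive both implications as direct consequences of Lemma~\ref{lem:kmeans}, applied twice: once to the given candidate $C$ and once to an optimizer of the ``other'' objective. Since Lemma~\ref{lem:kmeans} states the two-sided inequality $\tfrac{1}{K^{m-1}}\km_X(C) \leq \phiXm(C) \leq \km_X(C)$ for every $C$ with $\abs{C}=K$, the corollary is essentially a combinatorial bookkeeping exercise.

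For the first implication, I would fix an optimal fuzzy $K$-means solution with means $\Copt$ of size $K$, so that $\phiXm(\Copt)=\phiopt$. Applying the lower half of Lemma~\ref{lem:kmeans} to $\Copt$ yields $\km_X(\Copt) \leq K^{m-1}\phiopt$, and since $\Copt$ is a feasible $K$-means candidate, $\km_{X,K} \leq \km_X(\Copt)$. Combining with the hypothesis $\km_X(C) \leq \gamma\,\km_{X,K}$ and the upper half of Lemma~\ref{lem:kmeans} applied to $C$ gives
\[
\phiXm(C) \leq \km_X(C) \leq \gamma\,\km_{X,K} \leq \gamma\,\km_X(\Copt) \leq \gamma\cdot K^{m-1}\,\phiopt.
\]

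The second implication is symmetric: fix an optimal $K$-means solution $C^*$ of size $K$ so that $\km_X(C^*)=\km_{X,K}$. Then $\phiopt \leq \phiXm(C^*)$ since $C^*$ is a feasible fuzzy $K$-means candidate, and the upper half of Lemma~\ref{lem:kmeans} applied to $C^*$ gives $\phiXm(C^*) \leq \km_X(C^*) = \km_{X,K}$. Combining with the hypothesis $\phiXm(C) \leq \gamma\,\phiopt$ and the lower half of Lemma~\ref{lem:kmeans} applied to $C$ gives
\[
\km_X(C) \leq K^{m-1}\,\phiXm(C) \leq \gamma\cdot K^{m-1}\,\phiopt \leq \gamma\cdot K^{m-1}\,\km_{X,K}.
\]

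There is no genuine obstacle here; the only subtlety is to remember that both the hard and the fuzzy $K$-means optima are attained by $K$-element configurations, so each optimum is a feasible candidate for the other objective. Once that is noted, the corollary reduces to chaining the two inequalities of Lemma~\ref{lem:kmeans} with the minimality of the respective optima.
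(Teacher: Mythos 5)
Your proof is correct and follows exactly the route the paper intends: its entire proof of the corollary is the single line ``Use Lemma~\ref{lem:kmeans}'', and your argument just spells out the implicit chaining of that lemma's two inequalities with the minimality of the fuzzy and hard $K$-means optima. No substantive difference, so nothing further to compare.
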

\begin{proof}
 Use Lemma~\ref{lem:kmeans}.
\end{proof}

\begin{lemma}\label{lem:candidate:cK}
For all $C\subset\R^D$, $\abs{C}=K$, we have
\begin{align*}
 N\cdot R^2 
 = \sum_{n=1}^N  d(x_n,A)^2
 =\km_X(A)
 \leq \alpha \cdot \km_{X,K}
 \leq \kappa \cdot \phiopt
 \leq \kappa \cdot \phiXm(C) \ .
\end{align*}
\end{lemma}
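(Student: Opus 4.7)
The plan is to verify the chain of (in)equalities in order; each link is a direct consequence of a definition or an earlier result, so the lemma is essentially a bookkeeping statement packaging together the approximation guarantee of $A$, the definition of $R$, and the coarse relation between $K$-means and fuzzy $K$-means cost.

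First I would unwind the leftmost equality from the definition of the $K$-means cost on unweighted point sets: $\km_X(A)=\sum_{n=1}^N d(x_n,A)^2$ is immediate from Definition~\ref{def:kmeans}. The equality $N\cdot R^2=\km_X(A)$ then follows (up to the factor $\alpha$ that appears in the definition $R=\sqrt{\km_X(A)/(\alpha N)}$) by squaring and multiplying through by $N$. Next, the inequality $\km_X(A)\leq\alpha\cdot\km_{X,K}$ is exactly the hypothesis~\eqref{eq:candidate:A-approx}, i.e.\ that $A$ is an $\alpha$-approximation of the optimal $K$-means clustering of $X$.

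The main substantive step is $\alpha\cdot\km_{X,K}\leq\kappa\cdot\phiopt$. Here I would invoke Lemma~\ref{lem:kmeans}. Let $C^{opt}$ denote the means of an optimal fuzzy $K$-means solution, so $\phiXm(C^{opt})\leq\phiopt$ (and in fact equal, since $C^{opt}$ induces optimal memberships). Then Lemma~\ref{lem:kmeans} applied to $C^{opt}$ gives $\km_X(C^{opt})\leq K^{m-1}\phiXm(C^{opt})\leq K^{m-1}\phiopt$. Since $\km_{X,K}$ is the minimum over all size-$K$ centre sets, $\km_{X,K}\leq\km_X(C^{opt})\leq K^{m-1}\phiopt$, and multiplying by $\alpha$ and using $\kappa=\alpha K^{m-1}$ yields the claim. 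Finally, $\phiopt\leq\phiXm(C)$ holds trivially for any size-$K$ set $C$, because $\phiopt$ is the infimum of $\phiXm(\cdot,\cdot)$ over all admissible pairs and $\phiXm(C)$ is the cost of the particular induced solution for $C$.

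There is no real obstacle in this proof; the only subtle point is making sure that in the step $\km_{X,K}\leq K^{m-1}\phiopt$ one applies Lemma~\ref{lem:kmeans} to the \emph{fuzzy-optimal} centre set rather than to $A$ itself, which is crucial because it is the fuzzy optimum $\phiopt$ that we wish to bound the $K$-means cost by. Everything else is substitution of the definitions of $R$, $\kappa$, and $\km_X$.
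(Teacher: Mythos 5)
Your proposal is correct and follows essentially the same route as the paper, whose one-line proof simply cites Definition~\ref{def:kmeans}, the definition of $R$ in Equation~\eqref{eq:candidate:R}, the $\alpha$-approximation guarantee~\eqref{eq:candidate:A-approx}, and Corollary~\ref{cor:kmeans} (which is itself just Lemma~\ref{lem:kmeans} applied to the fuzzy-optimal centre set, exactly as you do). Your parenthetical observation is also right: by Equation~\eqref{eq:candidate:R} one has $N\cdot R^2 = \km_X(A)/\alpha \leq \km_X(A)$ rather than equality, but since $\alpha\geq 1$ this does not affect the chain of bounds or any of its later uses.
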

\begin{proof}
 Use Definition~\ref{def:kmeans}, Equation~\eqref{eq:candidate:R}, Equation~\eqref{eq:candidate:A-approx}, and Corollary~\ref{cor:kmeans}.
\end{proof}

\begin{lemma}\label{lem:candidates:mu-mu'}
 For each $\mu\in\cU$ with representative $\mu'\in \cG$ it holds
 \[ \norm{\mu-\mu'} \leq  
\frac{2\epsilon}{b\kappa} \left( d\left(\tmu,A\right) + R \right) 
\leq \frac{2\epsilon}{b\kappa} \left( \norm{x-\tmu}+ d(x,A) + R \right) \ . \] 
and
\begin{align*}\norm{\mu-\mu'}^2 
 \leq \frac{12\epsilon^2}{b^2\kappa^2} \left( \norm{x-\tmu}^2 + d(x,A)^2 + R^2 \right) 
\end{align*}
for all $x\in \R^D$ and $\tmu\in\{\mu,\mu'\}$.  
\end{lemma}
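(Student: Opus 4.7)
The plan is to first identify a cell of the grid construction that contains $\mu$, bound $\norm{\mu - \mu'}$ by a small multiple of $2^j R$ using the geometry of the grid, and then translate $2^j R$ into $d(\mu, A) + R$ by choosing the right annulus. Concretely, I would pick $k\OneTo{K}$ achieving $\norm{\mu - a_k} = d(\mu, A)$ and let $j \in \{0,\ldots,\Phi\}$ be the unique index with $\mu \in \Lkj$; such a $j$ exists because $\mu \in \cU$. This nearest-center choice of $k$ is the crucial step: for $j \geq 1$ it guarantees $d(\mu, A) = \norm{\mu - a_k} \geq 2^{j-1} R$, while for $j = 0$ one trivially has $2^0 R = R$, so in both cases $2^j R \leq 2(d(\mu, A) + R)$.

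Since $\mu'$ is the center of the axis-parallel cube of side length $\rho_j = 2^j \epsilon R/(b\kappa\sqrt{D})$ containing $\mu$, I get $\norm{\mu - \mu'} \leq \frac{\sqrt{D}}{2}\rho_j = 2^{j-1}\epsilon R/(b\kappa)$, which combines with the previous estimate to yield the $\tmu = \mu$ case
\[ \norm{\mu - \mu'} \leq \frac{\epsilon}{b\kappa}\bigl(d(\mu, A) + R\bigr) \leq \frac{2\epsilon}{b\kappa}\bigl(d(\mu, A) + R\bigr). \]
For $\tmu = \mu'$, I apply the triangle inequality $d(\mu, A) \leq d(\mu', A) + \norm{\mu - \mu'}$ to the bound above and then absorb the $\norm{\mu - \mu'}$ term on the left, using that $\epsilon/(b\kappa) \leq 1/b < 1/2$ (since $b = 1208$, $\kappa \geq 1$, $\epsilon \leq 1$); this produces exactly the factor $2$ in the stated constant. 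The continuation $d(\tmu, A) + R \leq \norm{x - \tmu} + d(x, A) + R$ is immediate from $d(\tmu, A) \leq \norm{x - \tmu} + d(x, A)$, completing the first displayed inequality for all $x \in \R^D$ and both choices of $\tmu$. The squared bound then follows by squaring and applying the three-term estimate $(a + b + c)^2 \leq 3(a^2 + b^2 + c^2)$ from Lemma~\ref{lem:squares}, producing the factor $12 = 4\cdot 3$.

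\textbf{Main obstacle.} The only subtle point is the choice of the annulus: the $\Lkj$ may overlap, so $\mu \in \cU$ can correspond to several representatives in $\cG$, and only by picking $k$ as the nearest $A$-center and $\mu'$ as the corresponding cell-center is the lower bound $d(\mu, A) \geq 2^{j-1} R$ available to convert the $2^j R$ factor into the required $d(\mu, A) + R$. Once this preferred representative is fixed, everything else is a short chain of triangle-inequality estimates and a routine absorption step that relies only on the (harmless) largeness of the constant $b$.
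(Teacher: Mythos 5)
Your proof is correct and takes essentially the same route as the paper: bound $\norm{\mu-\mu'}$ by the diameter of a grid cell of $\Lkj$, use the annulus structure to get the lower bound $2^{j-1}R$ on the distance to the centers for $j\geq 1$, pass to arbitrary $x$ via $d(\tmu,A)\leq\norm{x-\tmu}+d(x,A)$, and square using Lemma~\ref{lem:squares}. Your explicit choice of $k$ as the nearest center of $A$ and the absorption step for the case $\tmu=\mu'$ are slightly more careful than the paper's terse assertion that $\min\{d(\mu,A),d(\mu',A)\}\geq 2^{j-1}R$ on $\Lkj$, but the substance of the argument is the same.
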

\begin{proof}
 By construction, some $\Lkj$ contains $\mu$ and its representative $\mu'$. 
 Moreover, $\mu$ and $\mu'$ are contained in the same grid cell with side length $\rho_j$.
 Hence, $\norm{\mu-\mu'}\leq \frac{2^j\epsilon R }{b\kappa}$. 
 By construction of $\Lkj$, we have $\min\{d(\mu,A),d(\mu',A)\}\geq 2^{j-1}R $ for all $x_n\in\Lkj$ with  $j\geq 1$.
 For $j=0$ we know that $\norm{\mu-\mu'}\leq \frac{2^0\epsilon R }{b\alpha}= \frac{\epsilon R }{b\kappa}$.
 Applying Lemma~\ref{lem:squares} and observing that for all $x,y \in\R^D$ and $C\subset\R^D$ we have $d(x,C)\leq \norm{x-y}+d(y,C)$, yields the claim. 
\end{proof}

\paragraph{Existence of an  \texorpdfstring{$(1+\frac{\epsilon}{2})$}{(1+eps/2)}-Approximation in \texorpdfstring{$\cU$}{U}}\label{sec:proof:candidates:existence}

\begin{claim}\label{claim:candidate:exist-approx-in-cU}
There exists  $\tC=\tmuSet\subset \cU$ with 
 \[ \phiXm(\tC)\leq \left(1+\frac{\epsilon}{2}\right)\phiopt\ .\] 
\end{claim}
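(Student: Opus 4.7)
The plan is to take $\tC$ to be the set of means induced by the balanced memberships guaranteed by Lemma~\ref{lem:existence-of-Rmin-balanced-approx}. First I would apply that lemma with parameter $\epsilon/2$ in place of $\epsilon$ (and $\wmin=1$, since $X$ is unweighted in Theorem~\ref{thm:candidate-means}) to obtain $(\Rmin,K)$-balanced memberships $R=\rNLSet$ with $L\leq K$, cost $\phiXm(R)\leq (1+\epsilon/2)\phiopt$, and every $R_l\geq \Rmin \coloneqq \left(\epsilon/(8mK^2)\right)^{m}$. Defining $\tC = \{\tilde\mu_l\}_{l\OneTo{L}}$ to be the means induced by $R$ via Equation~\eqref{eq:opt-means}, Equation~\eqref{eq:preliminary:only_rnk_or_mu} gives $\phiXm(\tC)\leq \phiXm(R)\leq(1+\epsilon/2)\phiopt$ for free, so it only remains to prove $\tC \subseteq \cU$, i.e.\ that $d(\tilde\mu_l,A)\leq 2^\Phi R$ for every $l\OneTo{L}$.

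I would split this verification into two cases. Since $\max_n d(x_n,A)^2\leq \km_X(A)=\alpha N R^2$, if $d(\tilde\mu_l,A)\leq 2\sqrt{\alpha N}\,R$ we are done because the definition of $\Phi$ easily implies $2^{2\Phi}\geq 4\alpha N$. Otherwise $d(\tilde\mu_l,A)>2\max_n d(x_n,A)$, and the triangle inequality gives $\norm{x_n-\tilde\mu_l}\geq d(\tilde\mu_l,A)-d(x_n,A)\geq d(\tilde\mu_l,A)/2$ for every $n$. Hence
\[ \phiXml(\rnSet) = \sum_{n=1}^N r_{nl}^m \norm{x_n-\tilde\mu_l}^2 \geq \frac{R_l\cdot d(\tilde\mu_l,A)^2}{4}. \]
Combining this with $\phiXml(\rnSet)\leq \phiXm(R)\leq 2\phiopt\leq 2\km_X(A) = 2\alpha NR^2$ and $R_l\geq \Rmin$ gives $d(\tilde\mu_l,A)^2\leq 8\alpha N R^2 (8mK^2/\epsilon)^m$, which a direct calculation shows is at most $(2^\Phi R)^2=\alpha N(64\alpha m K^2/\epsilon)^m R^2$, using only $\alpha\geq 1$ and $m\geq 1$.

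The main obstacle I expect is the bookkeeping: one has to confirm that the constants $64$ and the extra factor of $\alpha$ appearing inside the logarithm defining $\Phi$ align precisely with the loss factors arising above (the $8$ from halving $\norm{x_n-\tilde\mu_l}$, the $2$ from $(1+\epsilon/2)\leq 2$, and the $(8mK^2/\epsilon)^m$ from the reciprocal of $\Rmin$). Apart from checking these constants, the argument is routine triangle-inequality manipulation, and the crucial conceptual step — invoking Lemma~\ref{lem:existence-of-Rmin-balanced-approx} so that $R_l$ is bounded below — is what rules out optimal fuzzy means being arbitrarily far from $A$ while still carrying significant weight.
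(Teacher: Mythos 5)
Your proof is correct, but it takes a genuinely different route from the paper's. The paper works directly with an optimal solution and prunes means via Claims~\ref{claim:candidate:subseteqU}--\ref{claim:candidate:no-muk-notin-cU-and-Ck-empty}: a mean outside $\cU$ whose induced \emph{hard} cluster (in the sense of Definition~\ref{def:induced-kmeans-partition}) is empty carries total membership mass at most $(\epsilon/(8mK^2))^m$ and can be dropped at a $(1+\frac{\epsilon}{4K})$ cost factor; after at most $K-1$ removals every remaining mean outside $\cU$ owns at least one data point, and that single point, lying at distance at least $r$ (Claim~\ref{claim:candidate:subseteqU}), already contributes $r^2/K^{m-1}$ to the cost via Lemma~\ref{lem:kmeans}, a contradiction. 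You instead invoke Lemma~\ref{lem:existence-of-Rmin-balanced-approx} (which the paper proves for its sampling-based algorithms but does not reuse here) to get an $(\Rmin,K)$-balanced near-optimal solution, and then bound $d(\tmu_l,A)$ by an \emph{aggregate} argument: a far-away mean with fuzzy weight $R_l\geq\Rmin$ forces cluster cost at least $\Rmin\, d(\tmu_l,A)^2/4$, which against $\phiXm(R)\leq 2\km_X(A)=2\alpha NR^2$ yields exactly the radius $2^\Phi R$ built into the construction (I checked your constants: the requirement reduces to $8\cdot 8^m\leq 64^m\alpha^m$, which holds for $m\geq 1$, $\alpha\geq 1$, and the near-case needs $2^{2\Phi}\geq 4\alpha N$, which also holds). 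The two arguments share the same skeleton --- discard negligible clusters, then show surviving means cannot be far --- but your version is arguably cleaner because it reuses an existing lemma and replaces the paper's case analysis on empty/nonempty induced partitions by a single weight lower bound; the paper's version is self-contained within the candidate-set section and avoids re-deriving the pruning step with a different parameterization. One small point of care: your $\tC$ has $L\leq K$ elements rather than exactly $K$, but the paper's own proof produces the same and pads with arbitrary extra candidates later, so this is consistent with how the claim is used.
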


In the following, we prove Claim~\ref{claim:candidate:exist-approx-in-cU}.

\begin{claim}\label{claim:candidate:subseteqU}
  \[ \bigcup_{x\in X} \ball(x,r) \subseteq  \bigcup_{k=1}^K \ball(a_k,2^\Phi R) = \cU  \]
  where
  \[ r = \sqrt{\left(1+\frac{\epsilon}{2}\right) \left(\frac{8mK^2}{\epsilon} \right)^m \km_X(A)}\ .\]
\end{claim}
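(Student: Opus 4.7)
The plan is to reduce the claim to a purely numerical inequality between the radii $r$, $2^{\Phi}R$, and $\sqrt{\km_X(A)}$, and then verify that inequality using the definitions of $R$ and $\Phi$.

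Pick any $x\in X$ and any $y\in\ball(x,r)$. Let $a_k\in A$ realize $d(x,A)$, so that
\[ \norm{x-a_k}^2 = d(x,A)^2 \leq \sum_{n=1}^N d(x_n,A)^2 = \km_X(A). \]
By the triangle inequality,
\[ \norm{y-a_k} \leq \norm{y-x} + \norm{x-a_k} \leq r + \sqrt{\km_X(A)}. \]
Thus it suffices to show $r+\sqrt{\km_X(A)} \leq 2^{\Phi}R$, since this places $y$ in $\ball(a_k, 2^\Phi R)\subseteq \cU$.

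To verify the inequality, I square and use Lemma~\ref{lem:squares} to get $(r+\sqrt{\km_X(A)})^2 \leq 2r^2 + 2\km_X(A)$. By the definition of $r$ and the fact that $\epsilon\leq 1$,
\[ 2r^2 + 2\km_X(A) \leq \left( 2(1+\tfrac{\epsilon}{2})\left(\tfrac{8mK^2}{\epsilon}\right)^m + 2\right)\km_X(A) \leq 4\left(\tfrac{8mK^2}{\epsilon}\right)^m\km_X(A) + 2\km_X(A). \]
On the other hand, from $R^2=\km_X(A)/(\alpha N)$ (Equation~\eqref{eq:candidate:R}) and the definition of $\Phi$ in Equation~\eqref{eq:candidate:Phi}, which yields $2^{2\Phi}\geq \alpha N \cdot (64\alpha m K^2/\epsilon)^m$, I obtain
\[ (2^\Phi R)^2 \geq \km_X(A) \cdot \left(\tfrac{64\alpha m K^2}{\epsilon}\right)^m. \]
Since $\alpha\geq 1$ (any reasonable $K$-means approximation) and $m\geq 2$, we have $(64\alpha m K^2/\epsilon)^m \geq 8^m (8mK^2/\epsilon)^m \geq 64(8mK^2/\epsilon)^m$, which comfortably dominates the upper bound $4(8mK^2/\epsilon)^m + 2$ above, establishing the required inequality.

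The argument is essentially a routine triangle-inequality-plus-definitions calculation, so the only mild obstacle is bookkeeping: one must be careful to use that $\alpha\geq 1$ and $m\geq 2$ (otherwise the factor $(64\alpha/8)^m$ that absorbs the $(1+\epsilon/2)$-blowup could be smaller than needed), and that $\Phi$ is defined as a ceiling so the dyadic bound $2^{2\Phi}\geq\alpha N \cdot (64\alpha m K^2/\epsilon)^m$ holds without slack.
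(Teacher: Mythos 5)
Your proof is correct and follows essentially the same route as the paper: both arguments reduce the containment, via the triangle inequality and the bound $d(x,A)^2\leq\km_X(A)$ for $x\in X$, to the numerical inequality that $2^\Phi R$ exceeds $r$ plus (a multiple of) $\sqrt{\km_X(A)}$, verified from the definitions of $R$ and $\Phi$. The only differences are cosmetic bookkeeping: the paper argues by contradiction and manipulates square roots in the exponent of $\Phi$, while you argue directly and square, absorbing the slack through $\alpha\geq 1$ and $8^m\geq 64$.
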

\begin{proof}
 Towards a contradiction, assume that there exists an $x\in X$ with $\ball(x,r) \nsubseteq \cU$. 
 By definition of $\cU$, this implies that for all $k\OneTo{K}$ we have $\ball(x,r)\nsubseteq \ball(a_k,2^\Phi R)$. 
 Hence, 
 \begin{align*} 
 &d(x,A)\\
 &\geq  2^\Phi R-r \\
 &=  2^{\Phi} \sqrt{\frac{\km_X(A)}{\alpha N}} -  \sqrt{\left(1+\frac{\epsilon}{2}\right) \left(\frac{8mK^2}{\epsilon} \right)^m \km_X(A)} \tag{by Equation~\eqref{eq:candidate:R}}\\
 &=  \sqrt{\km_X(A)}
 \left( 
 2^{\Phi} \sqrt{\frac{1}{\alpha N}} 
 -  \sqrt{\left(1+\frac{\epsilon}{2}\right) \left(\frac{8mK^2}{\epsilon} \right)^m }
 \right)\ .
 \end{align*}
 Observe that by Equation~\eqref{eq:candidate:Phi}, we have
 \begin{align*}
 \Phi
 &= \frac{1}{2}\left( \log(\alpha N) + m\cdot\log\left( \frac{64\alpha m K^2}{\epsilon}\right) \right)\\
 &=  
 \log\left(\sqrt{\alpha N}\right) 
 + \log\left( \sqrt{ \left(\frac{64\alpha m K^2}{\epsilon} \right)^m}\right) \\
 &\geq 
 \log\left(\sqrt{\alpha N}\right) 
 + \log\left( \sqrt{ 2\cdot 2\alpha \cdot 2\left(\frac{8 m K^2}{\epsilon} \right)^m}\right) \\
 &= 
 \log\left(\sqrt{\alpha N} \cdot \left( \sqrt{ 2\cdot 2\alpha \cdot 2\left(\frac{8 m K^2}{\epsilon} \right)^m}\right)\right) \\
 &\geq \log\left(\sqrt{\alpha N}\cdot\left( \sqrt{2\alpha} + \sqrt{2 \left(\frac{8mK^2}{\epsilon} \right)^m } \right)\right)\tag{since $\sqrt{a+b}\leq 2\sqrt{ab}$ for all $a,b\geq 1$}\\
 &\geq \log\left(\sqrt{\alpha N}\cdot\left( \sqrt{2\alpha} + \sqrt{\left(1+\frac{\epsilon}{2}\right) \left(\frac{8mK^2}{\epsilon} \right)^m } \right)\right) \ . \tag{since $\epsilon<1$}
 \end{align*}
 Hence, 
 \[ d(x,A) \leq  \sqrt{\km_X(A)}\sqrt{2\alpha}\ .\]
 
 Using Definition~\ref{def:kmeans}, we can conclude
 \begin{align*}
  \km_X(A) 
  \geq d(x,A)^2 
  \geq 2\alpha \km_X(A)
  \geq 2\alpha \km_{X,K}
  > \alpha \km_{X,K}\ ,
 \end{align*}
 which contradicts Equation~\eqref{eq:candidate:A-approx}. 
\end{proof}

\begin{claim}\label{claim:candidate:muk-notin-cU-and-Ck-empty}
 Let $\tC=\{\tmu_k\}_{k\OneTo{K}}\subset \R^D$ such that
 \[ \phiXm(\tC) \leq \left(1+\frac{\epsilon}{4K}\right)^{\ell} \phiopt\ .\]
 Let $\{\tC_k\}_{k\OneTo{K}}$ be a partition of $X$ induced by $\tC$. 
 For all $k\OneTo{K}$ we have 
 \[ \left( \mu_k\notin \cU\ \wedge\ C_k=\emptyset \right)\ \Rightarrow\  \phiXm(\tC\setminus\{\mu_k\})\leq \left(1+\frac{\epsilon}{4K}\right)^{\ell+1}\phiXm(\tC)\ .\]
\end{claim}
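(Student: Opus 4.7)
The plan is to show that the two hypotheses force the optimal fuzzy membership $r_{nk}$ of every $x_n$ to the ``bad'' center $\tmu_k$ to be negligible compared to $r_{n,l^*(n)}$, where $\tmu_{l^*(n)}\in\tC\setminus\{\tmu_k\}$ denotes a nearest center of $\tC$ to $x_n$. A membership-redistribution argument in the spirit of the proof of Lemma~\ref{lem:existence-of-Rmin-balanced-approx} will then allow dropping $\tmu_k$ at a cost blow-up of only $(1+\epsilon/(4K))$. Throughout, write $d_l(n) := \norm{x_n-\tmu_l}^2$.

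First I combine the hypotheses. Since $\tmu_k\notin\cU$ and Claim~\ref{claim:candidate:subseteqU} gives $\ball(x_n,r)\subseteq\cU$ for every $x_n\in X$, I obtain $d_k(n)>r^2$ for all $n$. Since $\tC_k=\emptyset$, for every $n$ some $l^*(n)\neq k$ satisfies $d_{l^*(n)}(n)=\min_l d_l(n)\leq d_k(n)$. The crucial intermediate step is the \emph{gap lemma}: $d_k(n)\geq (8mK/\epsilon)^{m-1}d_{l^*(n)}(n)$ for every $n$. I prove it by contradiction: if it fails at some $n_0$, then $d_{l^*(n_0)}(n_0)>r^2/(8mK/\epsilon)^{m-1}=(1+\epsilon/2)(8mK^2/\epsilon)K^{m-1}\km_X(A)$. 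On the other hand, $d_{l^*(n_0)}(n_0)\leq\km_X(\tC)\leq 2K^{m-1}\km_X(A)$ by Lemma~\ref{lem:kmeans}, the hypothesis on $\phiXm(\tC)$, the bound $\phiopt\leq\km_X(A)$ (Lemma~\ref{lem:kmeans} applied to $A$), and $(1+\epsilon/(4K))^\ell\leq e^{\epsilon/4}<2$ (valid since $\ell\leq K$ in the outer induction). Combining these forces $2\geq 8mK^2/\epsilon$, which is false for $\epsilon\leq 1$ and $m,K\geq 1$. By Equation~\eqref{eq:opt-membership} the gap lemma is equivalent to $r_{nk}\leq(\epsilon/(8mK))\cdot r_{n,l^*(n)}$ for every $n$.

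Now I redistribute: set $r'_{n,l^*(n)}:=r_{n,l^*(n)}+r_{nk}$, $r'_{nk}:=0$, and $r'_{nl}:=r_{nl}$ otherwise. By Lemma~\ref{lem:stuff:1+eps} (with $c=4K$, $i=m$), $(1+\epsilon/(8mK))^m\leq 1+\epsilon/(4K)$, whence $(r_{n,l^*(n)}+r_{nk})^m\leq(1+\epsilon/(4K))\,r_{n,l^*(n)}^m$. Using $d_{l^*(n)}(n)\leq d_k(n)$ and summing over $n$,
\begin{align*}
\phiXm(\tC\setminus\{\tmu_k\})
&\leq \sum_{n=1}^N w_n\sum_{l\neq k}(r'_{nl})^m d_l(n)\\
&\leq \phiXm(\tC)+\frac{\epsilon}{4K}\sum_{n=1}^N w_n r_{n,l^*(n)}^m d_{l^*(n)}(n)\\
&\leq (1+\epsilon/(4K))\,\phiXm(\tC)\\
&\leq (1+\epsilon/(4K))^{\ell+1}\phiopt,
\end{align*}
which implies the bound stated in the claim.

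The main obstacle is the gap lemma: pointwise there is no obvious reason why $d_k(n)$ should so strongly dominate $d_{l^*(n)}(n)$, but the global $K$-means budget provided by Lemma~\ref{lem:kmeans} forces this gap whenever $\tmu_k$ lies outside $\cU$; the value of $\Phi$ in Equation~\eqref{eq:candidate:Phi} is calibrated precisely to support this step.
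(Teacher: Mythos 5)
Your proof is correct and follows the same overall redistribution strategy as the paper's: both arguments culminate in the key pointwise inequality $\rnk\leq\frac{\epsilon}{8mK}\,r_{n,l(n)}$ and then shift the mass $\rnk$ onto cluster $l(n)$, invoking Lemma~\ref{lem:stuff:1+eps} to absorb the increase into a factor $1+\frac{\epsilon}{4K}$. Where you differ is in how that inequality is obtained. The paper argues globally: since every point is at squared distance at least $r^2$ from $\tmu_k$, the cost hypothesis forces $\sum_n\rnk^m\leq(\epsilon/(8mK^2))^m$, hence $\rnk\leq\epsilon/(8mK^2)$ for each $n$; emptiness of $\tC_k$ then supplies some $l(n)$ with $r_{n,l(n)}\geq 1/K$, and the ratio bound follows. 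You instead prove a pointwise distance gap $\norm{x_n-\tmu_k}^2\geq(8mK/\epsilon)^{m-1}\norm{x_n-\tmu_{l^*(n)}}^2$ by playing the lower bound $r^2$ off against the upper bound $\km_X(\tC)\leq 2K^{m-1}\km_X(A)$ on the distance to the nearest center, and convert it into the membership ratio via the closed form of Equation~\eqref{eq:opt-membership}; the arithmetic ($r^2/(8mK/\epsilon)^{m-1}=(1+\epsilon/2)(8mK^2/\epsilon)K^{m-1}\km_X(A)$ and the resulting contradiction $2\geq 8mK^2/\epsilon$) checks out. Your route is self-contained and makes the calibration of $\Phi$ transparent, but it leans on the explicit membership formula, so the degenerate case $x_n=\tmu_{l^*(n)}$ (where \eqref{eq:opt-membership} is undefined but $\rnk=0$ anyway, since $\tmu_k\notin\cU\ni x_n$) deserves a sentence; the paper's global budget argument sidesteps this. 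Both proofs use $\ell\leq K$ without it appearing in the claim's hypotheses (you flag this explicitly; the paper substitutes $(1+\epsilon/(4K))^K$ silently), which is justified by how the claim is iterated in Claim~\ref{claim:candidate:no-muk-notin-cU-and-Ck-empty}.
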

\begin{proof}
 Let $\rnkSet$ be the optimal responsibilities induced by $\tC$. 
 
 Using Claim~\ref{claim:candidate:subseteqU} and $\mu_k\notin\cU$, we obtain
 \begin{align*} 
    \left(1+\frac{\epsilon}{4K}\right)^\ell \phiopt
    \geq \phiXm(\tC) 
    \geq \phiXmk(\tC)
    = \sum_{n=1}^N \rnk^m \norm{x_n-\mu_k}^2 
    \geq \left( \sum_{n=1}^N \rnk^m \right) r^2 \ ,
 \end{align*}
 where $r$ is defined as in Claim~\ref{claim:candidate:subseteqU}. 
 Hence, 
 \begin{align*} 
 \sum_{n=1}^N \rnk^m 
 &\leq \frac{\left(1+\frac{\epsilon}{4K}\right)^\ell  \phiopt}{r^2 } \\
 &\leq \frac{\left(1+\frac{\epsilon}{4K}\right)^\ell  \km_X(A)}{r^2 }\tag{Lemma~\ref{lem:kmeans}} \\
 &= \frac{\left(1+\frac{\epsilon}{4K}\right)^K \km_X(A)}{ \left(1+\frac{\epsilon}{2}\right) \left(\frac{8mK^2}{\epsilon} \right)^m \km_X(A) } \tag{Claim~\ref{claim:candidate:subseteqU}} \\
 &\leq \frac{\left(1+\frac{\epsilon}{2}\right) }{ \left(1+\frac{\epsilon}{2}\right) \left(\frac{8mK^2}{\epsilon} \right)^m  } \tag{Lemma~\ref{lem:stuff:1+eps}}\\
 &\leq \left(\frac{\epsilon}{8mK^2} \right)^m \ .
 \end{align*}
 Consequently, 
 \begin{align} 
 \rnk\leq \frac{\epsilon}{8mK^2} \ . \label{eq:candidates:rnk-geq-eps-div-2mK}
 \end{align}
  
 Since $C_k=\emptyset$, for all $n\OneTo{N}$ 
 there exists an $l(n)\OneTo{K}$ with $l(n)\neq k$ and 
 \[ r_{n,l(n)}\geq \frac{1}{K}\ .\]
 Using Equation~\eqref{eq:candidates:rnk-geq-eps-div-2mK}, we can conclude
 \[ \rnk \leq \frac{\epsilon}{8mK^2} \leq \frac{\epsilon}{8mK} r_{n,l(n)}\ .\]
 Using Lemma~\ref{lem:stuff:1+eps}, we obtain
 \begin{align*} 
 \left(\rnk+r_{n,l(n)}\right)^m 
 \leq \left(\left(1+\frac{\epsilon}{8Km}\right) r_{n,l(n)} \right)^m 
 \leq \left(1+\frac{\epsilon}{4K}\right) r_{n,l(n)}^m\ . 
 \end{align*}
 Hence, we have 
 \begin{align*} 
   \phiXm(\tC\setminus\{\tmu_k\})
   &\leq  \sum_{n=1}^N \sum_{l\neq l(n),k}\rnk^m \norm{x_n-\tmu_l}^2 +
    \left(r_{n,l(n)}+\rnk\right)^m \norm{x_n-\tmu_{l(n)}}^2 \\
   &\leq  \sum_{n=1}^N \sum_{l\neq l(n),k}\rnk^m \norm{x_n-\tmu_l}^2 +
   \left(1+\frac{\epsilon}{4K}\right) r_{n,l(n)}^m \norm{x_n-\tmu_{l(n)}}^2 \\
   &\leq \left(1+\frac{\epsilon}{4K}\right) \phiXm(\tC) \\
   &\leq \left(1+\frac{\epsilon}{4K}\right)^{\ell+1} \phiopt 
 \end{align*}
 which yields the claim.
\end{proof}

\begin{claim}\label{claim:candidate:no-muk-notin-cU-and-Ck-empty}
 There exists $L\OneTo{K}$ and $\tC=\{\tmu_l\}_{l\OneTo{L}}\subset \R^D$ satisfying the following properties:
 \begin{enumerate}
  \item $\phiXm(\tC)\leq \left(1+\frac{\epsilon}{2}\right)\phiopt$
  \item Let $\{\tC_l\}_{l\OneTo{L}}$ be a partition of $X$ induced by $C$.   
  For all $l\OneTo{L}$ we have
 \[  \mu_l\notin \cU \ \Rightarrow\  \tC_l\neq\emptyset\ .\]
 \end{enumerate}

\end{claim}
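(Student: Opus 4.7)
The plan is to construct $\tC$ by iterative removal, starting from an optimal solution $\Copt = \{\mu_k\}_{k\OneTo{K}}$. Set $\tC^{(0)} \coloneqq \Copt$, maintaining the invariant $\phiXm(\tC^{(i)}) \leq (1 + \epsilon/(4K))^i \phiopt$, which holds trivially at $i=0$. At each step $i$, I would check whether there is a center $\mu_l \in \tC^{(i)}$ with $\mu_l \notin \cU$ such that some induced partition $\{\tC^{(i)}_k\}_k$ of $X$ with respect to $\tC^{(i)}$ assigns no point to $\tC^{(i)}_l$. If such an $l$ exists, apply Claim~\ref{claim:candidate:muk-notin-cU-and-Ck-empty} to that partition and set $\tC^{(i+1)} \coloneqq \tC^{(i)} \setminus \{\mu_l\}$, which preserves the invariant with the exponent incremented. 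Otherwise, terminate and output $\tC \coloneqq \tC^{(i)}$.

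The procedure terminates within at most $K-1$ iterations because $|\tC^{(i)}|$ strictly decreases. It cannot remove all centers, since any induced partition covers $X$, so at least one cluster is non-empty and its center is never eligible for removal; hence $L \coloneqq |\tC| \in [K]$. Property (1) then follows from Lemma~\ref{lem:stuff:1+eps} applied with $m = K$, $c = 2$, and $i = K$, yielding $(1 + \epsilon/(4K))^K \leq 1 + \epsilon/2$ and thus $\phiXm(\tC) \leq (1+\epsilon/2)\phiopt$. Property (2) follows from the stopping criterion: no center $\mu_l \notin \cU$ admits any induced partition with $\tC_l = \emptyset$; equivalently, at least one point of $X$ is strictly closest to each such $\mu_l$, so $\tC_l \neq \emptyset$ in \emph{every} induced partition.

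The only delicate point is that induced partitions are not unique, since points equidistant from multiple centers can be assigned either way. I would resolve this by phrasing the removal criterion to trigger whenever \emph{some} induced partition empties a center's cluster, which is exactly the setting in which Claim~\ref{claim:candidate:muk-notin-cU-and-Ck-empty} applies. Beyond this bookkeeping, no substantive obstacle remains; both properties follow directly from the removal invariant and Lemma~\ref{lem:stuff:1+eps}.
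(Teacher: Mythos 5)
Your proposal is correct and follows essentially the same route as the paper: start from an optimal solution and repeatedly apply Claim~\ref{claim:candidate:muk-notin-cU-and-Ck-empty} to delete centers outside $\cU$ with empty induced clusters (at most $K-1$ deletions), then bound the accumulated factor $\left(1+\frac{\epsilon}{4K}\right)^K\leq 1+\frac{\epsilon}{2}$ via Lemma~\ref{lem:stuff:1+eps}. Your explicit handling of the non-uniqueness of induced partitions (triggering removal when \emph{some} induced partition empties a cluster) is a minor refinement the paper glosses over, but it does not change the argument.
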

\begin{proof}
  Consider an arbitrary but fixed optimal fuzzy $K$-means solution $O=\{o_k\}_{k=1}^K$. 
  There are at most $K-1$ means in $O$ that are not in $\cU$ \emph{and} where the corresponding clusters $O_k$ are empty. 
  By repeatedly applying Claim~\ref{claim:candidate:muk-notin-cU-and-Ck-empty}, we obtain a solution $\tilde O$ with $\abs{\tilde O}\leq K$ satisfying the second property in the claim. 
  Furthermore, we have
  $\phiXm(\tilde O) \leq (1+\frac{\epsilon}{4K})^K \phiopt \leq \left(1+\frac{\epsilon}{2}\right)\phiopt$, where the last inequality is due to Lemma~\ref{lem:stuff:1+eps}. 
  This yields the claim. 
\end{proof}

\begin{proof}[Proof of Claim~\ref{claim:candidate:exist-approx-in-cU}]
  Consider the solution $\tC=\{\tmu_l\}_{l\OneTo{L}}$ from Claim~\ref{claim:candidate:no-muk-notin-cU-and-Ck-empty}. 
  Assume $\tmu_l\notin \cU$. 
  Then, by the second property of Claim~\ref{claim:candidate:no-muk-notin-cU-and-Ck-empty}, the corresponding cluster $\tC_l$ is not empty. 
  Hence,
  \begin{align*}
   \phiXm(\tC) 
   &\geq \frac{1}{K^{m-1}}\cdot \km(\tC)\tag{Lemma~\ref{lem:kmeans} and $L\geq K$} \\
   &\geq \frac{1}{K^{m-1}}\cdot \sum_{x\in \tC_k} \norm{x_n-\tmu_k}^2 \\
   &\geq \frac{1}{K^{m-1}}\cdot \sum_{x\in \tC_k} r^2 \tag{by Claim~\ref{claim:candidate:subseteqU}}\\
   &\geq \frac{1}{K^{m-1}}\cdot  r^2 \tag{$C_k\neq\emptyset$}\\
   &\geq \frac{1}{K^{m-1}}\cdot  \left(1+\frac{\epsilon}{2}\right) \left(\frac{2mK^2}{\epsilon} \right)^m \km_X(A) \\
   &> \left(1+\frac{\epsilon}{2}\right) \phiopt \ ,\tag{by Lemma~\ref{lem:kmeans}} 
  \end{align*}
  which is a contradiction to the first property of Claim~\ref{claim:candidate:no-muk-notin-cU-and-Ck-empty}. 
  Hence, from the properties of Claim~\ref{claim:candidate:no-muk-notin-cU-and-Ck-empty}, we can conclude that $\tC\subset\cU$. 
\end{proof}

\paragraph{Notation}

For the following proofs fix an arbitrary solution
\[ C=\{\mu_k\}_{k\OneTo{K}}\subset \cU\subset \R^D\ . \]
Let $\rnkSet$ be the optimal responsibilities induced by $C$. 
We denote the representative of $\mu_k\in C$ by $\mu_k'\in\cG$, and the set of all these representatives by 
\[ C'\coloneqq\{\mu_1',\ldots,\mu_K'\ |\ \mu_k' \mbox{ is representative of }\mu_k\in C\mbox{ for all }k\OneTo{K}\}\ .\]

\paragraph{Closeness with respect to the \texorpdfstring{$K$}{K}-Means Problem}\label{sec:proof:candidates:kmeans-approx}

\begin{claim}\label{claim:candidates:kmeans-approx}
\[ \km_X(C')\leq \gamma \km_X(C)\ , \]
 where 
 \[ \gamma = 1+\frac{72\epsilon}{bK^{m-1}}\ .\]
\end{claim}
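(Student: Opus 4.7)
\medskip

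\noindent\textbf{Proof plan.} The plan is to bound $\km_X(C')$ pointwise by comparing, for each $x_n$, its distance to the nearest mean in $C'$ with its distance to the nearest mean in $C$. For each $n$, let $k(n)=\argmin_{k\OneTo{K}}\norm{x_n-\mu_k}$, so that $d(x_n,C)=\norm{x_n-\mu_{k(n)}}$, and since $\mu_{k(n)}'\in C'$, also $d(x_n,C')^2\leq\norm{x_n-\mu_{k(n)}'}^2$. Applying Lemma~\ref{lem:abc} with $a=\mu_{k(n)}'$, $b=\mu_{k(n)}$, $c=x_n$ gives
\[
d(x_n,C')^2-d(x_n,C)^2\leq \norm{\mu_{k(n)}-\mu_{k(n)}'}^2+2\norm{\mu_{k(n)}-\mu_{k(n)}'}\,d(x_n,C).
\]

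\noindent Next I would substitute the bounds from Lemma~\ref{lem:candidates:mu-mu'}, applied with $x=x_n$ and $\tmu=\mu_{k(n)}$ (which is legal since $\mu_{k(n)}\in C\subseteq\cU$). This yields
\[
\norm{\mu_{k(n)}-\mu_{k(n)}'}\leq\tfrac{2\epsilon}{b\kappa}\bigl(d(x_n,C)+d(x_n,A)+R\bigr),
\]
and the analogous squared version with constant $12$. The cross term $2\norm{\mu_{k(n)}-\mu_{k(n)}'}\,d(x_n,C)$ then expands into $d(x_n,C)^2$, $d(x_n,A)\,d(x_n,C)$, and $R\,d(x_n,C)$; the last two are reduced to pure squares by the elementary inequality $2ab\leq a^2+b^2$ from Lemma~\ref{lem:squares}. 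Since $\epsilon\leq 1$ and $b\kappa\geq 1$, the quadratic-in-$\epsilon/(b\kappa)$ contribution is absorbed into the linear one, leaving a bound of the shape
\[
d(x_n,C')^2-d(x_n,C)^2\leq\tfrac{\epsilon}{b\kappa}\bigl(c_1\,d(x_n,C)^2+c_2\,d(x_n,A)^2+c_2\,R^2\bigr)
\]
for explicit small constants $c_1,c_2$.

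\noindent Summing over $n$ gives $\sum_n d(x_n,C)^2=\km_X(C)$, $\sum_n d(x_n,A)^2=\km_X(A)$, and $\sum_n R^2=N R^2$. By Lemma~\ref{lem:candidate:cK} we have $N R^2=\km_X(A)\leq\alpha\,\km_{X,K}\leq\alpha\,\km_X(C)$, so both error sums are at most $\alpha\,\km_X(C)$. After substituting $\kappa=\alpha K^{m-1}$, the $\alpha$ in the numerator cancels the $\alpha$ in $\kappa$, and the residual $c_1/\alpha$ from the $d(x_n,C)^2$ term is bounded using the fact that $\alpha\geq 1$ for any sensible $K$-means approximation. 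Collecting the constants and comparing to the choice $b=1208$ yields the claimed factor $\gamma=1+72\epsilon/(b K^{m-1})$.

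\noindent The only genuine obstacle is bookkeeping: one has to keep track of the factor $\kappa=\alpha K^{m-1}$ correctly through the substitutions from Lemma~\ref{lem:candidates:mu-mu'} and verify that the constants produced by splitting the cross term add up to something comfortably below $72$, so that the constant $b=1208$ in the construction can absorb them. There is no conceptual hurdle: once one uses Lemma~\ref{lem:abc} to avoid the wasteful $(a+b)^2\leq 2a^2+2b^2$ expansion (which would have produced a prefactor of $2$ rather than $1+o(1)$), the estimate is tight enough to give a $(1+O(\epsilon/K^{m-1}))$-factor and not a $(2+O(\epsilon))$-factor.
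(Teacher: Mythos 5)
Your proposal is correct and follows essentially the same route as the paper: the paper's proof also compares the two clusterings on the partition induced by $C$ (your pointwise choice $k(n)$ is exactly that partition), applies Lemma~\ref{lem:abc} to the difference of squared distances, substitutes the grid bounds from Lemma~\ref{lem:candidates:mu-mu'}, and sums using Lemma~\ref{lem:candidate:cK} with $\kappa=\alpha K^{m-1}$ and $\alpha\geq 1$ to land below the constant $72$. The only cosmetic difference is that the paper additionally treats the case $\km_X(C)>\km_X(C')$ (via the partition induced by $C'$), which yields a two-sided closeness statement but is not needed for the inequality $\km_X(C')\leq\gamma\,\km_X(C)$ as claimed, so your one-directional argument suffices.
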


In the following, we prove Claim~\ref{claim:candidates:kmeans-approx}
To this end, denote by $\dot\bigcup_{k\OneTo{K}} C_k=X$ and $\dot\bigcup_{k\OneTo{K}} C_k'=X$ the partitions (ties broken arbitrarily) induced by $C$ and $C'$, respectively. 

\begin{claim}\label{claim:candidates:kmeans-approx:1}
  If $\km_X(C)> \km_X(C')$, then
  \begin{align*}
    \abs{\km_X(C)-\km_X(C')} \leq 
    \sum_{k=1}^K  \abs{C_k'} \norm{\mu_k-\mu_k'}^2 + 2 \sum_{x_n\in C_k'} \norm{\mu_k-\mu_k'} \norm{x_n-\mu_k'}^2\ ,
  \end{align*}
  otherwise
  \begin{align*}
    \abs{\km_X(C)-\km_X(C')} \leq 
    \sum_{k=1}^K  \abs{C_k} \norm{\mu_k-\mu_k'}^2 + 2 \sum_{x_n\in C_k} \norm{\mu_k-\mu_k'} \norm{x_n-\mu_k}^2 \ .
  \end{align*}
\end{claim}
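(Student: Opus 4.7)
\medskip

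The plan is to handle the two cases symmetrically, and in each case exploit the fact that the partitions $\{C_k\}$ and $\{C_k'\}$ are the \emph{optimal} (Voronoi) partitions for $C$ and $C'$. This means on the one hand that $\km_X(C)=\sum_{k=1}^K\sum_{x_n\in C_k}\norm{x_n-\mu_k}^2$ (and similarly for $C'$), and on the other that $\km_X(C)\leq \sum_{k=1}^K\sum_{x_n\in C_k'}\norm{x_n-\mu_k}^2$, since reassigning the points according to $\{C_k'\}$ is a suboptimal choice of partition for $C$. A symmetric statement holds with the roles of $C$ and $C'$ swapped.

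First I would treat the case $\km_X(C)>\km_X(C')$. Combining the identity for $\km_X(C')$ with the suboptimal-partition upper bound on $\km_X(C)$ gives
\[
\abs{\km_X(C)-\km_X(C')}
=\km_X(C)-\km_X(C')
\leq \sum_{k=1}^K\sum_{x_n\in C_k'}\bigl(\norm{x_n-\mu_k}^2-\norm{x_n-\mu_k'}^2\bigr).
\]
To each summand I apply Lemma~\ref{lem:abc} with $a=\mu_k$, $b=\mu_k'$, $c=x_n$, which yields
\[
\norm{x_n-\mu_k}^2-\norm{x_n-\mu_k'}^2
\leq \norm{\mu_k-\mu_k'}^2+2\norm{\mu_k-\mu_k'}\norm{x_n-\mu_k'}.
\]
Summing over $x_n\in C_k'$ and $k\OneTo{K}$ gives exactly the first bound in the claim (with the caveat that the exponent $2$ on $\norm{x_n-\mu_k'}$ in the statement should be read as $1$, matching the output of Lemma~\ref{lem:abc}).

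The case $\km_X(C)\leq\km_X(C')$ is completely symmetric: I use $\km_X(C)=\sum_k\sum_{x_n\in C_k}\norm{x_n-\mu_k}^2$ together with $\km_X(C')\leq\sum_k\sum_{x_n\in C_k}\norm{x_n-\mu_k'}^2$, then apply Lemma~\ref{lem:abc} with the roles of $a$ and $b$ exchanged (now $a=\mu_k'$, $b=\mu_k$, $c=x_n$) to bound each summand $\norm{x_n-\mu_k'}^2-\norm{x_n-\mu_k}^2$ by $\norm{\mu_k-\mu_k'}^2+2\norm{\mu_k-\mu_k'}\norm{x_n-\mu_k}$. Summing produces the second inequality.

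I do not expect any serious obstacle here; the only subtle point is to be careful that the suboptimal-partition bound is used in the correct direction in each case (one reassigns the points of the set whose cost is being \emph{upper}-bounded), and the rest is a direct invocation of Lemma~\ref{lem:abc}.
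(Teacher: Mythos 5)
Your proof is correct and takes essentially the same route as the paper's: write each cost via its induced partition, upper-bound the larger of the two costs by reassigning its points according to the other partition (valid precisely because $\{C_k\}_k$, resp.\ $\{C_k'\}_k$, is induced by $C$, resp.\ $C'$), and then apply Lemma~\ref{lem:abc} termwise. You are also right that the exponent $2$ on the last factor in the claim is a typo: Lemma~\ref{lem:abc} yields $\norm{x_n-\mu_k'}$ (resp.\ $\norm{x_n-\mu_k}$) to the first power, which is the form actually used in the subsequent Claims~\ref{claim:candidates:kmeans-approx:4} and \ref{claim:candidates:kmeans-approx:5}.
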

\begin{proof}
 If $\km_X(C)\geq\km_X(C')$, then
 \begin{align*}
 \abs{\km_X(C)-\km_X(C')}
 &= \sum_{k=1}^K \sum_{x_n\in C_k} \norm{x_n-\mu_k}^2 - \sum_{x_n\in C_k'} \norm{x_n-\mu_k'}^2\\
 &\leq \sum_{k=1}^K \sum_{x_n\in C_k'} \norm{x_n-\mu_k}^2 - \norm{x_n-\mu_k'}^2 \tag{$\{C_k\}_k$ induced by $C$}\\
 &\leq  \sum_{k=1}^K \sum_{x_n\in C_k'} \norm{\mu_k-\mu_k'}^2 + 2 \norm{\mu_k-\mu_k'} \norm{x_n-\mu_k'}^2 \tag{Lemma~\ref{lem:abc}}\\
 \end{align*}
 If $\km_X(C)<\km_X(C')$, then the term can be bounded analogously. 
 This yields the claim. 
\end{proof}

\begin{claim}\label{claim:candidates:kmeans-approx:2}
\[ \sum_{k=1}^K \abs{C_k} \norm{\mu_k-\mu_k'}^2 
\leq \frac{36\epsilon^2}{b^2\kappa}  \km_X(C)\]
\end{claim}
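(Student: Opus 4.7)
The plan is to apply Lemma~\ref{lem:candidates:mu-mu'} pointwise to each $x_n$ in the cluster $C_k$ induced by $C$, then sum the resulting bounds. Recall that the hypothesis of the current claim is that $\{C_k\}_k$ is the partition induced by $C$, so for every $x_n \in C_k$ we have $\norm{x_n-\mu_k} = d(x_n, C)$. Apply Lemma~\ref{lem:candidates:mu-mu'} with the point $x = x_n$ and the choice $\tmu = \mu_k \in C \subset \cU$ to obtain
\[ \norm{\mu_k - \mu_k'}^2 \leq \frac{12\epsilon^2}{b^2\kappa^2}\bigl(\norm{x_n-\mu_k}^2 + d(x_n,A)^2 + R^2\bigr). \]
This is the key one-point estimate; since the left-hand side does not depend on $n$, averaging it over the $\abs{C_k}$ points of $C_k$ and multiplying by $\abs{C_k}$ is equivalent to summing the right-hand side over $x_n \in C_k$.

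Next, I would sum over $k\OneTo{K}$. Since the $C_k$ partition $X$, the sum $\sum_k \sum_{x_n \in C_k} \norm{x_n - \mu_k}^2$ is exactly $\km_X(C)$, while $\sum_k \sum_{x_n \in C_k} d(x_n,A)^2 = \km_X(A)$ and $\sum_k \sum_{x_n \in C_k} R^2 = N\cdot R^2$. Thus
\[ \sum_{k=1}^K \abs{C_k} \norm{\mu_k-\mu_k'}^2 \leq \frac{12\epsilon^2}{b^2\kappa^2}\bigl(\km_X(C) + \km_X(A) + N\cdot R^2\bigr). \]

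The final step is to control the last two terms via Lemma~\ref{lem:candidate:cK}, which gives $\km_X(A) = N\cdot R^2 \leq \kappa \cdot \phiXm(C) \leq \kappa \cdot \km_X(C)$ (the second inequality is Lemma~\ref{lem:kmeans}). Plugging these bounds in yields
\[ \sum_{k=1}^K \abs{C_k} \norm{\mu_k-\mu_k'}^2 \leq \frac{12\epsilon^2}{b^2\kappa^2}(1 + 2\kappa)\km_X(C). \]
Since $\kappa = \alpha K^{m-1} \geq 1$, we have $1 + 2\kappa \leq 3\kappa$, which collapses the expression to $\frac{36\epsilon^2}{b^2\kappa}\km_X(C)$, as required.

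I do not expect any real obstacle here: the claim is essentially a direct consequence of Lemma~\ref{lem:candidates:mu-mu'} once one recognizes that the right-hand side of that lemma is tailor-made to be summed against the $K$-means partition induced by $C$. The only mild subtlety is remembering that $\mu_k$ (not $\mu_k'$) must be chosen as $\tmu$ so that $\norm{x_n - \tmu}^2$ aggregates to $\km_X(C)$ rather than to an expression involving the unknown quantity $\km_X(C')$, and then using $\kappa \geq 1$ to absorb the constant $1$ into the $\kappa$ in the denominator.
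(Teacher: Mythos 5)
Your proposal is correct and follows essentially the same route as the paper: apply Lemma~\ref{lem:candidates:mu-mu'} pointwise with $\tmu=\mu_k$ over the partition induced by $C$, identify the aggregated terms as $\km_X(C)$, $\km_X(A)$, and $NR^2$, and then absorb the latter two via Lemma~\ref{lem:candidate:cK} and $\kappa\geq 1$ (the paper phrases this last step through $2\alpha\km_{X,K}\leq 2\alpha\km_X(C)$ and $\alpha\geq 1$, which is the same chain of inequalities).
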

\begin{proof}
  \begin{align*}
  \sum_{k=1}^K \abs{C_k} \norm{\mu_k-\mu_k'}^2 
  &\leq  \frac{12\epsilon^2}{b^2\kappa}  \sum_{k=1}^K \sum_{x_n\in C_k}  \left( \norm{x_n-\mu_k}^2 + d(x_n,A)^2 + R^2 \right) \tag{Lemma~\ref{lem:candidates:mu-mu'}}\\
  &=  \frac{12\epsilon^2}{b^2\kappa^2}  \left( \sum_{k=1}^K \sum_{x_n\in C_k}\norm{x_n-\mu_k}^2 +   \sum_{n=1}^N  d(x_n,A)^2 + N R^2 \right) \\
  &\leq  \frac{12\epsilon^2}{b^2\kappa^2}  \left( \km_X(C) +   \km_X(A) + N R^2 \right) \\
  &\leq  \frac{12\epsilon^2}{b^2\kappa^2}  \left( \km_X(C) +   2\alpha\km_{X,K}  \right) \tag{Lemma~\ref{lem:candidate:cK}}\\
  &\leq  \frac{36\epsilon^2}{b^2\kappa}  \km_X(C) \tag{since $\alpha \geq 1$}
\end{align*}
\end{proof}

\begin{claim}\label{claim:candidates:kmeans-approx:3}
\[ \sum_{k=1}^K \abs{C_k'} \norm{\mu_k-\mu_k'}^2 
\leq \frac{12\epsilon^2}{b^2\kappa^2}  \left(\km_X\left(C'\right) +   2\alpha \cdot\km_X(C) \right)\]
\end{claim}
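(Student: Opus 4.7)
The plan is to mirror the proof of Claim~\ref{claim:candidates:kmeans-approx:2} almost verbatim, exploiting the flexibility in Lemma~\ref{lem:candidates:mu-mu'} that allows the bound on $\norm{\mu-\mu'}^2$ to be stated with either $\tmu=\mu_k$ or $\tmu=\mu_k'$. In Claim~\ref{claim:candidates:kmeans-approx:2} the partition $\{C_k\}$ is induced by $C$, so for $x_n\in C_k$ the quantity $\norm{x_n-\mu_k}$ equals $d(x_n,C)$ and the double sum collapses to $\km_X(C)$; now the partition $\{C_k'\}$ is induced by $C'$, so for $x_n\in C_k'$ the relevant distance is $\norm{x_n-\mu_k'}=d(x_n,C')$. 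Therefore I will invoke Lemma~\ref{lem:candidates:mu-mu'} with $x=x_n$ and $\tmu=\mu_k'$ (the second admissible choice) to obtain
\[
\norm{\mu_k-\mu_k'}^2 \;\leq\; \frac{12\epsilon^2}{b^2\kappa^2}\bigl(\norm{x_n-\mu_k'}^2 + d(x_n,A)^2 + R^2\bigr).
\]

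Summing this estimate over $x_n\in C_k'$ and then over $k\OneTo{K}$, the three contributions separate: the first double sum is exactly $\sum_{k=1}^K\sum_{x_n\in C_k'}\norm{x_n-\mu_k'}^2=\km_X(C')$ because $\{C_k'\}$ is the $C'$-induced partition; the second collapses to $\sum_{n=1}^N d(x_n,A)^2=\km_X(A)$; and the third is $NR^2$. Lemma~\ref{lem:candidate:cK} gives $NR^2=\km_X(A)\leq \alpha\,\km_{X,K}$, so the last two contributions together are bounded by $2\alpha\,\km_{X,K}\leq 2\alpha\,\km_X(C)$. Assembling these yields
\[
\sum_{k=1}^K |C_k'|\,\norm{\mu_k-\mu_k'}^2 \;\leq\; \frac{12\epsilon^2}{b^2\kappa^2}\Bigl(\km_X(C') + 2\alpha\,\km_X(C)\Bigr),
\]
which is the claimed inequality.

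There is essentially no obstacle here; the only subtlety is making sure one applies the correct variant of Lemma~\ref{lem:candidates:mu-mu'}, namely the one with $\tmu=\mu_k'$, because the summation is indexed by the $C'$-induced partition. If one mistakenly used $\tmu=\mu_k$, the first contribution would not collapse to $\km_X(C')$ and the estimate would be off. Note also that, in contrast to Claim~\ref{claim:candidates:kmeans-approx:2}, we do not further simplify using $\km_X(C')\leq \km_X(C)$ or similar, because at this stage of the larger argument we do not yet know how $\km_X(C')$ compares to $\km_X(C)$; both terms are kept separately so that the estimate can be combined with Claim~\ref{claim:candidates:kmeans-approx:1} to close the two-sided comparison between $\km_X(C)$ and $\km_X(C')$.
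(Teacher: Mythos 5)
Your proof is correct and matches the paper's argument essentially verbatim: apply Lemma~\ref{lem:candidates:mu-mu'} with $\tmu=\mu_k'$, sum over the $C'$-induced partition so the first term gives $\km_X(C')$, and bound $\km_X(A)+NR^2\leq 2\alpha\km_{X,K}\leq 2\alpha\km_X(C)$ via Lemma~\ref{lem:candidate:cK}. Nothing is missing.
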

\begin{proof}
  \begin{align*}
  \sum_{k=1}^K \abs{C_k'} \norm{\mu_k-\mu_k'}^2 
  &\leq  \frac{12\epsilon^2}{b^2\kappa^2}  \sum_{k=1}^K \sum_{x_n\in C_k'}  \left( \norm{x_n-\mu_k'}^2 + d(x_n,A)^2 + R^2 \right) \tag{Lemma~\ref{lem:candidates:mu-mu'}}\\
  &=  \frac{12\epsilon^2}{b^2\kappa^2}  \left( \sum_{k=1}^K \sum_{x_n\in C_k'}\norm{x_n-\mu_k'}^2 +   \sum_{n=1}^N  d(x_n,A)^2 + N R^2 \right) \\
  &\leq  \frac{12\epsilon^2}{b^2\kappa^2}  \left( \km_X(C') +   \km_X(A) + N R^2 \right) \\
  &\leq  \frac{12\epsilon^2}{b^2\kappa^2}  \left(\km_X(C') +   2\alpha \cdot\km_{X,K} \right) \tag{Lemma~\ref{lem:candidate:cK}}\\
  &\leq  \frac{12\epsilon^2}{b^2\kappa^2}  \left(\km_X(C') +   2\alpha \cdot\km_X(C) \right)
\end{align*}
\end{proof}

\begin{claim}\label{claim:candidates:kmeans-approx:4}
 \begin{align*}
 2\sum_{k=1}^K \sum_{x_n\in C_k}    \norm{\mu_k-\mu_k'} \norm{x_n-\mu_k}
 \leq  \frac{24\epsilon}{bK^{m-1}} \km_X(C) 
 \end{align*}
\end{claim}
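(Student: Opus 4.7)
The plan is to bound the inner product $\norm{\mu_k-\mu_k'} \norm{x_n-\mu_k}$ pointwise, then sum. The tools are already in place: Lemma~\ref{lem:candidates:mu-mu'} gives a pointwise bound on $\norm{\mu_k-\mu_k'}$ using any $x \in \R^D$, and Lemma~\ref{lem:candidate:cK} converts the ``extra'' terms involving $\km_X(A)$ and $NR^2$ into multiples of $\km_X(C)$.

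First, I would instantiate Lemma~\ref{lem:candidates:mu-mu'} with $\tmu = \mu_k$ and $x = x_n$ to obtain
\[ \norm{\mu_k - \mu_k'} \leq \frac{2\epsilon}{b\kappa}\bigl(\norm{x_n - \mu_k} + d(x_n,A) + R\bigr), \]
so that
\[ 2\sum_{k=1}^K \sum_{x_n \in C_k} \norm{\mu_k-\mu_k'} \norm{x_n-\mu_k} \leq \frac{4\epsilon}{b\kappa} \sum_{k=1}^K \sum_{x_n \in C_k} \bigl(\norm{x_n-\mu_k} + d(x_n,A) + R\bigr)\norm{x_n-\mu_k}. \]
Next, I would split the right-hand sum into three pieces and apply Lemma~\ref{lem:squares}(1) ($2ab \leq a^2 + b^2$) to each of the cross-terms $d(x_n,A)\norm{x_n-\mu_k}$ and $R\norm{x_n-\mu_k}$. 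Because $\{C_k\}_k$ is the $K$-means partition induced by $C$, we have $\sum_k \sum_{x_n \in C_k} \norm{x_n-\mu_k}^2 = \km_X(C)$, while $\sum_n d(x_n,A)^2 = \km_X(A)$ and $\sum_n R^2 = NR^2$. After collecting terms this gives an upper bound of the form $(2\km_X(C) + \tfrac{1}{2}\km_X(A) + \tfrac{1}{2}NR^2)$.

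Finally, Lemma~\ref{lem:candidate:cK} gives $NR^2 = \km_X(A) \leq \alpha \km_X(C)$, so the bracket is at most $(2+\alpha)\km_X(C) \leq 3\alpha \km_X(C)$ (using $\alpha \geq 1$). Substituting and using $\kappa = \alpha K^{m-1}$ the factor $\alpha$ cancels and yields a bound of $\tfrac{12\epsilon}{bK^{m-1}}\km_X(C)$, which is within the claimed $\tfrac{24\epsilon}{bK^{m-1}}\km_X(C)$.

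I don't anticipate any real obstacle: the argument is a routine application of the two preceding lemmas combined with the basic AM-GM inequality, and the slack in the claimed constant ($24$ versus the $12$ one naturally obtains) absorbs any minor looseness in how one chooses to split products. The only thing to watch is using the $K$-means partition $\{C_k\}_k$ induced by $C$ (so that $\sum_{x_n \in C_k} \norm{x_n-\mu_k}^2$ actually assembles into $\km_X(C)$), which is exactly the setup of the surrounding proof of Claim~\ref{claim:candidates:kmeans-approx}.
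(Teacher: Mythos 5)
Your proposal is correct and follows essentially the same route as the paper's proof: instantiate Lemma~\ref{lem:candidates:mu-mu'} pointwise with $x=x_n$, $\tmu=\mu_k$, split the cross-terms via Lemma~\ref{lem:squares}, assemble $\km_X(C)$, $\km_X(A)$, and $NR^2$, and finish with Lemma~\ref{lem:candidate:cK}, $\km_{X,K}\leq\km_X(C)$, $\kappa=\alpha K^{m-1}$, and $\alpha\geq 1$. The only difference is slightly tighter bookkeeping of constants (you get $\tfrac{12\epsilon}{bK^{m-1}}$ where the paper settles for $\tfrac{24\epsilon}{bK^{m-1}}$), which is fine since it is within the claimed bound.
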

\begin{proof}
\begin{align*} 
&2\sum_{k=1}^K \sum_{x_n\in C_k}    \norm{\mu_k-\mu_k'} \norm{x_n-\mu_k} \\
&\leq \frac{2\epsilon}{b\kappa}  \sum_{k=1}^K \sum_{x_n\in C_k}    2\left( \norm{x-\mu_k}+ d(x_n,A) + R \right) \norm{x_n-\mu_k} \tag{Lemma~\ref{lem:candidates:mu-mu'}} \\
&\leq \frac{6\epsilon}{b\kappa}  \sum_{k=1}^K \sum_{x_n\in C_k}    \left( \norm{x-\mu_k}^2+ d(x_n,A)^2 + R^2 + \norm{x_n-\mu_k}^2 \right) \tag{Lemma~\ref{lem:squares}} \\
&\leq \frac{6\epsilon}{b\kappa} \left(  2\km_X(C) + \km_X(A) + NR^2  \right) \\
&\leq \frac{6\epsilon}{b\kappa} \left(  2\km_X(C) + 2\alpha\km_{X,K} \right) \tag{Lemma~\ref{lem:candidate:cK}}\\
&\leq \frac{24\epsilon}{b K^{m-1}} \km_X(C) \tag{since $\alpha \geq 1$}
\end{align*}
\end{proof}

\begin{claim}\label{claim:candidates:kmeans-approx:5}
 \begin{align*}
 2\sum_{k=1}^K \sum_{x_n\in C_k'}   \norm{\mu_k-\mu_k'} \norm{x_n-\mu_k'}
 \leq \frac{12\epsilon}{b\kappa} \left( \km_X(C')  + \alpha\cdot \km_X(C)\right)
 \end{align*}
\end{claim}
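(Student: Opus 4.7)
The plan is to mirror the proof of Claim~\ref{claim:candidates:kmeans-approx:4} almost verbatim, but to invoke Lemma~\ref{lem:candidates:mu-mu'} with the alternative choice $\tmu = \mu_k'$ rather than $\tmu = \mu_k$. This choice is dictated by the inner summation range: here we sum over $x_n\in C_k'$, so to later collapse the inner squared distances into $\km_X(C')$, we need each $\norm{\mu_k-\mu_k'}$ to be bounded by an expression featuring $\norm{x_n-\mu_k'}$, not $\norm{x_n-\mu_k}$. Concretely, Lemma~\ref{lem:candidates:mu-mu'} then yields
\[ \norm{\mu_k-\mu_k'} \leq \frac{2\epsilon}{b\kappa}\bigl(\norm{x_n-\mu_k'}+d(x_n,A)+R\bigr) \]
for every $x_n\in C_k'$.

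Next I would substitute this bound into the left-hand side and expand $2(\norm{x_n-\mu_k'}+d(x_n,A)+R)\norm{x_n-\mu_k'}$ into three cross-terms. Applying the inequality $2ab\leq a^2+b^2$ from Lemma~\ref{lem:squares} to each cross-term converts them into a sum of the squares $\norm{x_n-\mu_k'}^2$ (appearing twice), $d(x_n,A)^2$, and $R^2$, all carried by a prefactor of $\frac{6\epsilon}{b\kappa}$, exactly as in the corresponding step of Claim~\ref{claim:candidates:kmeans-approx:4}.

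Then I would sum over $k\OneTo{K}$ and $x_n\in C_k'$. Because $\{C_k'\}_{k\OneTo{K}}$ is the partition of $X$ induced by $C'$, we have $\sum_{k,n}\norm{x_n-\mu_k'}^2=\km_X(C')$, while $\sum_n d(x_n,A)^2=\km_X(A)$ and $NR^2=\km_X(A)$ by Lemma~\ref{lem:candidate:cK}. Hence the right-hand side becomes $\frac{6\epsilon}{b\kappa}\bigl(2\km_X(C')+\km_X(A)+NR^2\bigr)\leq \frac{6\epsilon}{b\kappa}\bigl(2\km_X(C')+2\alpha\km_{X,K}\bigr)$, again via Lemma~\ref{lem:candidate:cK}. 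Finally, since $\abs{C}=K$, we have $\km_{X,K}\leq \km_X(C)$, and combining yields the desired $\frac{12\epsilon}{b\kappa}\bigl(\km_X(C')+\alpha\cdot\km_X(C)\bigr)$.

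No step poses any real obstacle; the computation is entirely parallel to Claim~\ref{claim:candidates:kmeans-approx:4}. The only subtle point worth flagging in the write-up is the deliberate choice of $\tmu=\mu_k'$ in Lemma~\ref{lem:candidates:mu-mu'}, which is what makes the right-hand side collapse to $\km_X(C')$ rather than something that would require yet another detour to bound in terms of $\km_X(C)$.
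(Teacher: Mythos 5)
Your proposal is correct and follows essentially the same route as the paper's proof: apply Lemma~\ref{lem:candidates:mu-mu'} with the bound featuring $\norm{x_n-\mu_k'}$, split the products via Lemma~\ref{lem:squares}, collapse the sums to $\km_X(C')$, $\km_X(A)$, and $NR^2$, and finish with Lemma~\ref{lem:candidate:cK} and $\km_{X,K}\leq\km_X(C)$. The choice of $\tmu=\mu_k'$ that you flag is exactly the point where the paper's argument also diverges from Claim~\ref{claim:candidates:kmeans-approx:4}.
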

\begin{proof}
\begin{align*} 
&2\sum_{k=1}^K \sum_{x_n\in C_k'}    \norm{\mu_k-\mu_k'} \norm{x_n-\mu_k'} \\
&\leq \frac{2\epsilon}{b\kappa }  \sum_{k=1}^K \sum_{x_n\in C_k'}    2\left( \norm{x-\mu_k'}+ d(x_n,A) + R \right) \norm{x_n-\mu_k'} \tag{Lemma~\ref{lem:candidates:mu-mu'}} \\
&\leq \frac{6\epsilon}{b\kappa}  \sum_{k=1}^K \sum_{x_n\in C_k'}    \left( \norm{x-\mu_k'}^2+ d(x_n,A)^2 + R^2 + \norm{x_n-\mu_k'}^2 \right) \tag{Lemma~\ref{lem:squares}}\\
&\leq \frac{6\epsilon}{b\kappa} \left(  2\km_X(C') + \km_X(A) + NR^2  \right) \\
&\leq \frac{6\epsilon}{b\kappa} \left(  2\km_X(C') + 2\alpha\km_{X,K} \right) \tag{Lemma~\ref{lem:candidate:cK}}\\
&\leq \frac{12\epsilon}{b\kappa} \left( \km_X(C')  + \alpha\km_X(C)\right)
\end{align*}
\end{proof}

\begin{proof}[Proof of Claim~\ref{claim:candidates:kmeans-approx}]  
   
If $\km_X(C)> \km_X(C')$, then by Claim~\ref{claim:candidates:kmeans-approx:1}, \ref{claim:candidates:kmeans-approx:3}, and \ref{claim:candidates:kmeans-approx:5} we have
  \begin{align*}
    0
    &\leq \km_X(C)-\km_X(C')\\ 
    &\leq  \sum_{k=1}^K  \abs{C_k'} \norm{\mu_k-\mu_k'}^2 + 2 \sum_{x_n\in C_k'} \norm{\mu_k-\mu_k'} \norm{x_n-\mu_k'}^2\ \\
    &\leq  
    \frac{12\epsilon^2}{b^2\kappa^2}  \left(\km_X\left(C'\right) +   2\alpha \cdot\km_X(C)  \right) 
    +  \frac{12\epsilon}{b\kappa} \left( \km_X(C')  + 2\cdot \km_X(C)\right) \\
     &\leq  \frac{24\epsilon}{b\kappa}  \left( \km_X\left(C'\right) +   2\alpha \cdot\km_X(C) \right)
  \end{align*}
  Hence,
  \begin{align*}
     \left(1-\frac{48\alpha\epsilon}{b \kappa}\right) \km_X(C)- \left( 1+\frac{24\epsilon}{b\kappa}\right) \km_X(C')
     &\leq 0\\ 
      \left( 1+\frac{24\epsilon}{b\kappa }\right) \left( \km_X(C)- \km_X(C') \right)
     &\leq  \left(\frac{24\epsilon}{b\kappa} + \frac{48\alpha\epsilon}{b \kappa }\right)  \km_X(C) \\
       \km_X(C)- \km_X(C') 
     &\leq  \left(\frac{24\epsilon}{b\kappa} + \frac{48\alpha\epsilon}{b \kappa }\right)/ \left( 1+\frac{24\epsilon}{b\kappa }\right)  \km_X(C) \\
     &\leq  \frac{72\epsilon}{b K^{m-1} } \km_X(C) \ .
  \end{align*}

  If $\km(C')>\km(C)$, then by Claim~\ref{claim:candidates:kmeans-approx:1}, \ref{claim:candidates:kmeans-approx:2} and \ref{claim:candidates:kmeans-approx:4} we obtain   
  \begin{align*}
    0&\leq \km_X(C)-\km_X(C')\\
    &\leq \sum_{k=1}^K  \abs{C_k} \norm{\mu_k-\mu_k'}^2 + 2 \sum_{x_n\in C_k} \norm{\mu_k-\mu_k'} \norm{x_n-\mu_k}^2 \\
    &\leq \frac{36\epsilon^2}{b^2\kappa}\km_X(C) + \frac{24\epsilon}{bK^{m-1}}\km_X(C) \\
    &\leq \frac{60\epsilon}{b K^{m-1}}\km_X(C)
  \end{align*}
  
\end{proof}

\paragraph{Closeness with respect to the Fuzzy \texorpdfstring{$K$}{K}-Means Problem}\label{sec:proof:candidates:fuzzy-approx}

\begin{claim}\label{claim:candidate:fuzzy-approx}
	If $\alpha \leq 2$, then
 \[ \abs{\phiXm(C)-\phi_{X}^{(m)}(C')} \leq \frac{\epsilon}{4}\phiXm(C)\]
\end{claim}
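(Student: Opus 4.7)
The plan is to compare $\phiXm(C)$ and $\phiXm(C')$ by evaluating each induced cost on the optimal memberships of the \emph{other} mean set, mirroring the structure already used in the proof of Claim~\ref{claim:candidates:kmeans-approx}. Let $R=\rnkSet$ denote the optimal memberships induced by $C$ and $R'=\{r_{nk}'\}$ those induced by $C'$. By the induced-cost inequality~\eqref{eq:preliminary:only_rnk_or_mu}, $\phiXm(C')\le\phiXm(C',R)$ and $\phiXm(C)\le\phiXm(C,R')$, so
\[ \phiXm(C')-\phiXm(C)\le \sum_{n,k}\rnk^m\bigl(\norm{x_n-\mu_k'}^2-\norm{x_n-\mu_k}^2\bigr),\]
and an analogous inequality holds for $\phiXm(C)-\phiXm(C')$ with $R'$ in place of $R$ and the roles of $\mu_k,\mu_k'$ swapped. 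Applying Lemma~\ref{lem:abc} pointwise yields, for each term, the bound $\norm{\mu_k-\mu_k'}^2+2\norm{\mu_k-\mu_k'}\norm{x_n-\mu_k}$.

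For the direction $\phiXm(C')-\phiXm(C)$, the squared-displacement sum is $\sum_{n,k}\rnk^m\norm{\mu_k-\mu_k'}^2=\sum_k R_k\norm{\mu_k-\mu_k'}^2$. I would bound this by invoking Lemma~\ref{lem:candidates:mu-mu'} with $\tilde\mu=\mu_k$ pointwise in $x_n$, using $\sum_k\rnk^m\le\sum_k\rnk=1$ (valid for $m\ge1$) and Lemma~\ref{lem:candidate:cK} (which gives $NR^2=\km_X(A)\le\kappa\cdot\phiXm(C)$) to conclude a bound of the form $\tfrac{36\epsilon^2}{b^2\kappa}\phiXm(C)$. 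For the cross term $\sum_{n,k}\rnk^m\cdot 2\norm{\mu_k-\mu_k'}\norm{x_n-\mu_k}$, Cauchy--Schwarz in the $(n,k)$ index gives $2\sqrt{\sum_k R_k\norm{\mu_k-\mu_k'}^2}\sqrt{\phiXm(C)}\le\tfrac{12\epsilon}{b\sqrt\kappa}\phiXm(C)$. With $b=1208$ and $\kappa=\alpha K^{m-1}\ge 1$, the sum of these two contributions is at most $\tfrac{\epsilon}{4}\phiXm(C)$.

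For the reverse direction, I would run the same argument with $R'$ in place of $R$ and $\mu_k'$ in place of $\mu_k$, invoking Lemma~\ref{lem:candidates:mu-mu'} with $\tilde\mu=\mu_k'$ so that the pointwise bound involves $\norm{x_n-\mu_k'}^2$. The residual $\km_X(A)$ and $NR^2$ terms are still bounded by $\kappa\cdot\phiopt$ (Lemma~\ref{lem:candidate:cK}) and hence by $\kappa\cdot\phiXm(C)$. It suffices to treat the case $\phiXm(C)\ge\phiXm(C')$ (otherwise the required inequality is vacuous for this direction), so any occurrence of $\phiXm(C')$ on the right-hand side may be replaced by $\phiXm(C)$; after combining the two sums one again obtains $\tfrac{\epsilon}{4}\phiXm(C)$. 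The assumption $\alpha\le 2$ is used precisely to absorb the factor $2\alpha$ arising from $\km_X(A)\le\alpha\km_{X,K}$ into a constant multiple of $\kappa$, keeping the prefactors of the form $\tfrac{O(\epsilon)}{b}$ and $\tfrac{O(\epsilon^2)}{b^2}$.

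The main obstacle is bookkeeping rather than any new idea: each invocation of Lemma~\ref{lem:candidates:mu-mu'} must choose the ``correct'' $\tilde\mu\in\{\mu_k,\mu_k'\}$ so that the residual term involves the distance one is actually summing against, and the final bound must be expressed uniformly in $\phiXm(C)$ (as the statement demands), which forces the case split on which of $\phiXm(C),\phiXm(C')$ is larger. Once the numerics with $b=1208$ are checked, the proof closes along exactly the same lines as Claim~\ref{claim:candidates:kmeans-approx}.
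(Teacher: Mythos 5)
Your proposal is correct, and its skeleton coincides with the paper's: you cross-evaluate the induced costs (using that $\phiXm(C)\leq\phiXm(C,R')$ and $\phiXm(C')\leq\phiXm(C',R)$, exactly the step in Claim~\ref{claim:candidates:bound}), apply Lemma~\ref{lem:abc} pointwise, and control the displacement terms via Lemma~\ref{lem:candidates:mu-mu'}, $\sum_k\rnk^m\leq 1$, and Lemma~\ref{lem:candidate:cK}. You deviate in two ways that are worth noting. First, you handle the cross terms by Cauchy--Schwarz over the index pairs $(n,k)$, yielding $\frac{12\epsilon}{b\sqrt{\kappa}}\phiXm(C)$, whereas the paper expands $2ab\leq a^2+b^2$ termwise (Claims~\ref{claim:candidates:mixed-sum} and \ref{claim:candidates:mixed-sum'}); this is a cosmetic difference. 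Second, and more substantively, in the branch where the upper bound involves $(\rnk')^m$ and $\norm{x_n-\mu_k'}$ you replace every occurrence of $\phiXm(C')$ by $\phiXm(C)$ using the case assumption $\phiXm(C)\geq\phiXm(C')$ (the only case where that branch is non-vacuous), whereas the paper bounds both branches unconditionally and therefore needs $\phiXm(C')\leq\gamma K^{m-1}\phiXm(C)$ from Claim~\ref{claim:candidates:kmeans-approx:corollary}, which in turn rests on the $K$-means closeness result Claim~\ref{claim:candidates:kmeans-approx}. Your observation decouples the fuzzy-cost comparison from the $K$-means comparison entirely and eliminates the constants $\gamma$ and $\alpha$ from the final bound; indeed, in your argument the hypothesis $\alpha\leq 2$ is not actually needed (only $\kappa=\alpha K^{m-1}\geq 1$ is), so your closing remark that $\alpha\leq 2$ is used to absorb a $2\alpha$ factor describes the paper's bookkeeping rather than your own, where the constants $\frac{36\epsilon^2}{b^2\kappa}+\frac{12\epsilon}{b\sqrt{\kappa}}$ with $b=1208$ are comfortably below $\frac{\epsilon}{4}$. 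What the paper's route buys in exchange is a statement about both branches that does not reference which of the two costs is larger, which is what lets it quote the $K$-means closeness machinery it has already built; your route buys a shorter, self-contained proof of this particular claim.
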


In the following we prove Claim~\ref{claim:candidate:fuzzy-approx}. 
To this end, let $\{\rnk\}_{n,k}$ and $\{\rnk'\}_{n,k}$ be the optimal responsibilities with respect to $C$ and $C'$, respectively. 
Then, let
\begin{align*}
 \cE 
 \coloneqq \abs{\phiXm(C)-\phi_{X}^{(m)}(C')} 
 = \left\vert \sum_{n=1}^N \sum_{k=1}^K \rnk^m \norm{x_n-\mu_k}^2- (\rnk')^m \norm{x_n-\mu_k'}^2 \right\vert\ .
\end{align*}

\begin{claim}\label{claim:candidates:bound}
\begin{align*}
 \cE 
 &\leq \max\bigg\{ \sum_{n=1}^N \sum_{k=1}^K \rnk^m \norm{\mu_k-\mu_k'}^2+2\sum_{n=1}^N  \sum_{k=1}^K \rnk^m \norm{\mu_k-\mu_k'}  \norm{x_n-\mu_k}\ ,\\ 
 &\phantom{\leq \max\bigg\{}  \sum_{n=1}^N \sum_{k=1}^K (\rnk')^m \norm{\mu_k-\mu_k'}^2+2\sum_{n=1}^N \sum_{k=1}^K (\rnk')^m  \norm{\mu_k-\mu_k'}   \norm{x_n-\mu_k'}  \bigg\}\ .
\end{align*}
\end{claim}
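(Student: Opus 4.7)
The plan is to split on the sign of $\phiXm(C)-\phiXm(C')$, then use the optimality of the induced memberships to ``swap'' the membership values so that in each term we compare distances to $\mu_k$ and $\mu_k'$ with a common coefficient, and finally apply Lemma~\ref{lem:abc} to bound each difference of squared distances.

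More concretely, suppose first that $\phiXm(C) \geq \phiXm(C')$. Since $\{\rnk'\}_{n,k}$ are optimal memberships for $C'$, we have $\phiXm(C') \leq \sum_{n,k} \rnk^m \norm{x_n - \mu_k'}^2$ (any other memberships, in particular $\rnk$, yield at least $\phiXm(C')$). Therefore
\[
\cE = \phiXm(C) - \phiXm(C') \leq \sum_{n=1}^N\sum_{k=1}^K \rnk^m \bigl( \norm{x_n-\mu_k}^2 - \norm{x_n - \mu_k'}^2 \bigr).
\]
Applying Lemma~\ref{lem:abc} with $a = \mu_k$, $b = \mu_k'$, $c = x_n$ yields $\norm{x_n-\mu_k}^2 - \norm{x_n-\mu_k'}^2 \leq \norm{\mu_k-\mu_k'}^2 + 2\norm{\mu_k-\mu_k'}\norm{x_n-\mu_k'}$. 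This produces the first expression inside the max. Actually, to match the first expression of the claim (which involves $\norm{x_n-\mu_k}$, not $\norm{x_n-\mu_k'}$), we apply Lemma~\ref{lem:abc} symmetrically with $b = \mu_k$ in the symmetric case below; in the current case we instead use it with $b = \mu_k'$ to obtain the second expression of the max, which still upper bounds $\cE$.

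Symmetrically, if $\phiXm(C') > \phiXm(C)$, then by the optimality of $\{\rnk\}$ for $C$ we have $\phiXm(C) \leq \sum_{n,k} (\rnk')^m \norm{x_n-\mu_k}^2$, so
\[
\cE = \phiXm(C') - \phiXm(C) \leq \sum_{n=1}^N\sum_{k=1}^K (\rnk')^m \bigl( \norm{x_n-\mu_k'}^2 - \norm{x_n-\mu_k}^2 \bigr).
\]
Applying Lemma~\ref{lem:abc} with $a = \mu_k'$, $b = \mu_k$, $c = x_n$ gives $\norm{x_n-\mu_k'}^2 - \norm{x_n-\mu_k}^2 \leq \norm{\mu_k-\mu_k'}^2 + 2\norm{\mu_k-\mu_k'}\norm{x_n-\mu_k}$, producing an expression of the form of the first element of the max but with $(\rnk')^m$ as coefficients in place of $\rnk^m$. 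Pairing the two cases (and assigning the Lemma~\ref{lem:abc} applications so each case yields the bound with the membership values whose sum is preserved after the optimality swap), we conclude that $\cE$ is at most the maximum of the two claimed expressions.

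The only delicate point is to apply the optimality of memberships in the correct direction in each case so that the remaining sum has a single, fixed coefficient $\rnk^m$ (respectively $(\rnk')^m$) per term, enabling a clean application of Lemma~\ref{lem:abc}; beyond that, the argument is a direct computation.
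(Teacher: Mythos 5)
Your overall plan (split on the sign of $\phiXm(C)-\phiXm(C')$, swap memberships via first-order optimality, then apply Lemma~\ref{lem:abc}) is the paper's plan, but you apply the optimality swap in the wrong direction in both cases, so the intermediate inequalities you actually write down do not hold in general. Take your first case, $\phiXm(C)\geq\phiXm(C')$. From the optimality of $\{\rnk'\}_{n,k}$ for $C'$ you correctly note $\phiXm(C')\leq \sum_{n,k}\rnk^m\norm{x_n-\mu_k'}^2$, but $\phiXm(C')$ enters $\cE$ with a minus sign, so this gives $\cE \geq \sum_{n,k}\rnk^m\left(\norm{x_n-\mu_k}^2-\norm{x_n-\mu_k'}^2\right)$, a \emph{lower} bound; the ``$\leq$'' you assert would require $\phiXm(C')\geq\sum_{n,k}\rnk^m\norm{x_n-\mu_k'}^2$, the reverse of optimality. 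The correct swap in this case is on the other term: by optimality of $\{\rnk\}_{n,k}$ for $C$, $\phiXm(C)=\sum_{n,k}\rnk^m\norm{x_n-\mu_k}^2\leq\sum_{n,k}(\rnk')^m\norm{x_n-\mu_k}^2$, hence $\cE\leq\sum_{n,k}(\rnk')^m\left(\norm{x_n-\mu_k}^2-\norm{x_n-\mu_k'}^2\right)$, and Lemma~\ref{lem:abc} (necessarily with $b=\mu_k'$, since that is the subtracted distance) yields the \emph{second} expression of the max, with coefficients $(\rnk')^m$ paired with distances $\norm{x_n-\mu_k'}$. Your second case has the same reversal; there the swap must use optimality of $\{\rnk'\}_{n,k}$ for $C'$ and lands on the first expression (Lemma~\ref{lem:abc} with $b=\mu_k$).

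This is not a cosmetic slip: with your (invalid) swap the coefficients $\rnk^m$ end up paired with the cross-term distance $\norm{x_n-\mu_k'}$, an expression matching neither term of the claimed max, and your attempted repair --- asserting that this sum ``obtains the second expression of the max'' --- is unfounded, since there is no general comparison between $\sum_{n,k}\rnk^m(\cdot)$ and $\sum_{n,k}(\rnk')^m(\cdot)$ over the same nonnegative summands. You even flag the direction of the optimality swap as the delicate point, but then resolve it backwards. With the pairing corrected as above (when $\phiXm(C)$ dominates, replace $\rnk$ by $\rnk'$ inside $\phiXm(C)$; when $\phiXm(C')$ dominates, replace $\rnk'$ by $\rnk$ inside $\phiXm(C')$), your argument becomes exactly the paper's proof.
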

\begin{proof}
  If the first term in $\cE$ is larger than the second, then 
\begin{align*}
 \cE 
 &= \sum_{n=1}^N \sum_{k=1}^K \rnk^m \norm{x_n-\mu_k}^2- (\rnk')^m \norm{x_n-\mu_k'}^2 \\
 &\leq \sum_{n=1}^N \sum_{k=1}^K (\rnk')^m \left( \norm{x_n-\mu_k}^2-  \norm{x_n-\mu_k'}^2\right) \tag{$\rnk$ optimal wrt. $C$} \\
 &\leq \sum_{n=1}^N \sum_{k=1}^K (\rnk')^m \left( \norm{\mu_k-\mu_k'}^2 +2  \norm{\mu_k-\mu_k'} \norm{x_n-\mu_k'}\right)\ , \tag{Lemma~\ref{lem:abc}}
\end{align*}
Analogously, if the second term in $\cE$ is larger than the first, then 
\begin{align*}
 \cE 
 &\leq \sum_{n=1}^N \sum_{k=1}^K \rnk^m \left( \norm{\mu_k-\mu_k'}^2 + 2     \norm{\mu_k-\mu_k'}   \norm{x_n-\mu_k} \right) \ .
\end{align*}
This yields the claim.
\end{proof}

\begin{claim}\label{claim:candidates:kmeans-approx:corollary}
 \[ \phiXm(C') \leq \gamma K^{m-1}\cdot \phiXm(C) \]
\end{claim}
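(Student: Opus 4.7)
The plan is to chain together Lemma~\ref{lem:kmeans} (which relates fuzzy $K$-means and $K$-means costs by a factor $K^{m-1}$) with Claim~\ref{claim:candidates:kmeans-approx} (which transfers the proximity of $C$ and $C'$ at the level of $K$-means costs). Since Claim~\ref{claim:candidates:kmeans-approx:corollary} is purely a consequence of these two results, I do not expect any real obstacle here; the argument is essentially a one-line sandwich.

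Concretely, I would first apply the upper bound of Lemma~\ref{lem:kmeans} to $C'$, obtaining
\[
  \phiXm(C') \;\leq\; \km_X(C')\,.
\]
Next I would invoke Claim~\ref{claim:candidates:kmeans-approx}, which we have already established for the fixed $C\subset\cU$ and its representative set $C'\subset\cG$, giving
\[
  \km_X(C') \;\leq\; \gamma\cdot \km_X(C)\,.
\]
Finally, I would apply the lower bound of Lemma~\ref{lem:kmeans} to $C$ in the form $\km_X(C) \leq K^{m-1}\,\phiXm(C)$, yielding
\[
  \phiXm(C') \;\leq\; \gamma\cdot \km_X(C) \;\leq\; \gamma\, K^{m-1}\cdot \phiXm(C)\,,
\]
which is precisely the claim.

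The main ``obstacle'' is really just bookkeeping: one must verify that the hypotheses of Lemma~\ref{lem:kmeans} are met, i.e.\ $\abs{C}=\abs{C'}=K$, which follows from the setup (each $\mu_k\in C\subset\cU$ has a well-defined representative $\mu_k'\in\cG$, and in the outer proof of Theorem~\ref{thm:candidate-means} one tests all $K$-subsets of $\cG$ so the case $\abs{C'}<K$ only decreases the cost compared with the full $K$-tuple). Because both inequalities in Lemma~\ref{lem:kmeans} and Claim~\ref{claim:candidates:kmeans-approx} are already proved, no further technical machinery is needed, and the proof fits in two or three lines.
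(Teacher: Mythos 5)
Your proof is correct and matches the paper's own argument exactly: the paper also chains $\phiXm(C') \leq \km_X(C') \leq \gamma\,\km_X(C) \leq \gamma K^{m-1}\phiXm(C)$, using Lemma~\ref{lem:kmeans} on both ends and Claim~\ref{claim:candidates:kmeans-approx} in the middle (the paper's citation of the corollary itself in that line is just a typo for Claim~\ref{claim:candidates:kmeans-approx}). No differences worth noting.
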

\begin{proof}
 Using Claim~\ref{claim:candidates:kmeans-approx:corollary} and Lemma~\ref{lem:kmeans}, we obtain
 \[ \phiXm(C') \leq \km_X(C') \leq \gamma \km_X(C)\leq \gamma K^{m-1}\cdot \phiXm(C)\ .\]
\end{proof}

\begin{claim}\label{claim:candidates:mu-mu'-sum}
 \begin{align*}
   \sum_{k=1}^K \max\left\{ \sum_{n=1}^N \rnk^m, \sum_{n=1}^N (\rnk')^m \right\} \norm{\mu_k-\mu_k'}^2 \leq \frac{ 36 (\gamma+2\alpha) \epsilon^2}{b^2\alpha} \phiXm(C) 
 \end{align*}
\end{claim}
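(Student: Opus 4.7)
The plan is to bound the $\max$ by a sum ($\max\{a,b\}\leq a+b$), split into two terms involving $\rnk^m$ and $(\rnk')^m$, and apply Lemma~\ref{lem:candidates:mu-mu'} with a carefully chosen $\tilde\mu$ for each term. Specifically, for the term $\sum_n \rnk^m \norm{\mu_k-\mu_k'}^2$ I would use $\tilde\mu = \mu_k$ (so that $\norm{x_n-\tilde\mu}^2$ collapses into $\phiXm(C)$ after summing), and for $\sum_n (\rnk')^m \norm{\mu_k-\mu_k'}^2$ I would use $\tilde\mu = \mu_k'$ (so it collapses into $\phiXm(C')$). This gives an upper bound of the form
\[
\frac{12\epsilon^2}{b^2\kappa^2}\Bigl(\sum_{n,k}\rnk^m(\norm{x_n-\mu_k}^2+d(x_n,A)^2+R^2) + \sum_{n,k}(\rnk')^m(\norm{x_n-\mu_k'}^2+d(x_n,A)^2+R^2)\Bigr).
\]

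Next, I would use $\sum_k \rnk^m \leq \sum_k \rnk = 1$ (and similarly for $\rnk'$), which follows because all memberships lie in $[0,1]$ and sum to $1$. This turns the first inner sums into $\phiXm(C)$ and $\phiXm(C')$ respectively, while the $d(x_n,A)^2$ terms collapse into $\km_X(A)$ and the $R^2$ terms into $NR^2$. Thus the expression is at most
\[
\frac{12\epsilon^2}{b^2\kappa^2}\bigl(\phiXm(C)+\phiXm(C')+2\km_X(A)+2NR^2\bigr).
\]

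Now I would apply Lemma~\ref{lem:candidate:cK}, which yields $\km_X(A)=NR^2\leq \kappa\phiXm(C)$, and Claim~\ref{claim:candidates:kmeans-approx:corollary}, which yields $\phiXm(C')\leq \gamma K^{m-1}\phiXm(C)$. Substituting gives the upper bound
\[
\frac{12\epsilon^2}{b^2\kappa^2}\bigl(1+\gamma K^{m-1}+4\kappa\bigr)\phiXm(C).
\]

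The last step is arithmetic: unfolding $\kappa = \alpha K^{m-1}$ and using $\alpha\geq 1$ and $K^{m-1}\geq 1$, I would verify termwise that $\frac{1}{\alpha^2 K^{2(m-1)}}\leq 1$, $\frac{4}{\alpha K^{m-1}}\leq 4$, and $\frac{\gamma}{\alpha^2 K^{m-1}}\leq \frac{\gamma}{\alpha}$, so that
\[
\frac{1+\gamma K^{m-1}+4\kappa}{\kappa^2} \leq 6+\frac{3\gamma}{\alpha} = \frac{3(\gamma+2\alpha)}{\alpha},
\]
yielding the claimed $\frac{36(\gamma+2\alpha)\epsilon^2}{b^2\alpha}\phiXm(C)$ bound. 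The only nontrivial choice in the argument is the selection of $\tilde\mu$ in Lemma~\ref{lem:candidates:mu-mu'}; everything else is a straightforward application of the two previously established lemmas and the $\sum_k \rnk^m\leq 1$ identity. I do not anticipate any significant obstacle.
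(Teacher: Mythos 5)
Your proposal is correct and follows essentially the same route as the paper: the same application of Lemma~\ref{lem:candidates:mu-mu'} with $\tilde\mu$ matched to the membership type, the bound $\sum_k \rnk^m \leq 1$, Lemma~\ref{lem:candidate:cK}, and Claim~\ref{claim:candidates:kmeans-approx:corollary}. The only (harmless) difference is bookkeeping: you bound the maximum by the sum of the two double sums and do the arithmetic jointly, whereas the paper bounds the $\rnk$-sum and the $\rnk'$-sum separately by $\frac{36\epsilon^2}{b^2\kappa}\phiXm(C)$ and $\frac{12(\gamma+2\alpha)\epsilon^2}{b^2\alpha^2}\phiXm(C)$, both within the stated constant.
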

\begin{proof}
Observe that 
\begin{align*} 
  \sum_{n=1}^N  \sum_{k=1}^K \rnk^m  \norm{\mu_k-\mu_k'}^2 
  &\leq  \frac{12\epsilon^2}{b^2\kappa^2} \sum_{n=1}^N \sum_{k=1}^K \rnk^m  \left( \norm{x_n-\mu_k}^2 + d\left(x_n,A\right)^2 + R^2 \right)    \tag{Lemma~\ref{lem:candidates:mu-mu'}} \\
  &\leq  \frac{12\epsilon^2}{b^2\kappa^2} \left( \phiXm(C) + \sum_{n=1}^N d\left(x_n,A\right)^2 + NR^2 \right) \\ 
  &\leq  \frac{36\epsilon^2}{b^2\kappa} \phiXm(C) \tag{Lemma~\ref{lem:candidate:cK}}
\end{align*}
Similarly, 
\begin{align*} 
  \sum_{n=1}^N  \sum_{k=1}^K (\rnk')^m  \norm{\mu_k-\mu_k'}^2 
  &\leq  \frac{12\epsilon^2}{b^2\kappa^2} \sum_{n=1}^N \sum_{k=1}^K (\rnk')^m  \left( \norm{x_n-\mu_k'}^2 + d\left(x_n,A\right)^2 + R^2 \right)    \tag{Lemma~\ref{lem:candidates:mu-mu'}} \\
  &\leq  \frac{12\epsilon^2}{b^2\kappa^2} \left( \phiXm(C') + \sum_{n=1}^N d\left(x_n,A\right)^2 + NR^2 \right) \\
  &\leq  \frac{12\epsilon^2}{b^2\kappa^2} \left( \phiXm(C') +2\kappa \phiXm(C) \right) \tag{Lemma~\ref{lem:candidate:cK}}  \\
  &\leq  \frac{12\epsilon^2}{b^2\kappa^2} \left( \gamma K^{m-1} \phiXm(C) +2\kappa \phiXm(C) \right) \tag{Claim~\ref{claim:candidates:kmeans-approx:corollary}}\\
  &\leq  \frac{12 (\gamma+2\alpha)  \epsilon^2}{b^2\alpha^2} \phiXm(C)\ .
\end{align*}
\end{proof}

\begin{claim}\label{claim:candidates:mixed-sum}
 \begin{align*}
 2\sum_{n=1}^N \sum_{k=1}^K \rnk^m  \norm{\mu_k-\mu_k'} \norm{x_n-\mu_k}
 \leq  \frac{24 \epsilon}{b} \phiXm(C)
 \end{align*}
\end{claim}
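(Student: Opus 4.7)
The plan is to mirror the proof of Claim~\ref{claim:candidates:kmeans-approx:4}, which bounds an analogous sum without the $\rnk^m$ weights. First, I apply Lemma~\ref{lem:candidates:mu-mu'} with $\tmu=\mu_k$ and $x=x_n$ to each factor $\norm{\mu_k-\mu_k'}$, giving $\norm{\mu_k-\mu_k'} \le \frac{2\epsilon}{b\kappa}\bigl(\norm{x_n-\mu_k} + d(x_n,A) + R\bigr)$. Plugging this in and using $2xy \le x^2+y^2$ termwise (i.e.\ Lemma~\ref{lem:squares}) to linearize the remaining product $\bigl(\norm{x_n-\mu_k}+d(x_n,A)+R\bigr)\norm{x_n-\mu_k}$ produces an upper bound of the form
\[
\frac{6\epsilon}{b\kappa}\sum_{n=1}^N \sum_{k=1}^K \rnk^m\Bigl(2\norm{x_n-\mu_k}^2 + d(x_n,A)^2 + R^2\Bigr),
\]
exactly as in Claim~\ref{claim:candidates:kmeans-approx:4} but now weighted by $\rnk^m$.

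The step that is new compared to the unweighted version is handling the terms in this sum that do not depend on $k$. Since $\sum_{k=1}^K \rnk = 1$ with each $\rnk\in[0,1]$ and $m\ge 1$, we have $\rnk^m\le \rnk$ and therefore $\sum_{k=1}^K \rnk^m \le 1$ for every $n$. Consequently, $\sum_{n,k}\rnk^m d(x_n,A)^2 \le \sum_n d(x_n,A)^2 = \km_X(A)$ and $\sum_{n,k}\rnk^m R^2 \le NR^2$, while $\sum_{n,k}\rnk^m\norm{x_n-\mu_k}^2 = \phiXm(C)$ by definition of the optimal memberships induced by $C$.

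Finally, Lemma~\ref{lem:candidate:cK} supplies $\km_X(A)\le \kappa\phiXm(C)$ and $NR^2\le\kappa\phiXm(C)$, so the three-term bound collapses to $\frac{6\epsilon}{b\kappa}\bigl(2 + 2\kappa\bigr)\phiXm(C) = \frac{12\epsilon}{b\kappa}(1+\kappa)\phiXm(C)$. Because $\kappa = \alpha K^{m-1}\ge 1$, we have $1+\kappa \le 2\kappa$, which yields the target $\frac{24\epsilon}{b}\phiXm(C)$. The main obstacle is purely bookkeeping of constants: every intermediate inequality is a direct analogue of a step already carried out in Claims~\ref{claim:candidates:kmeans-approx:4} and \ref{claim:candidates:mu-mu'-sum}, with the only genuinely new ingredient being the uniform bound $\sum_k \rnk^m \le 1$ used to pass the weighted sum over the $k$-independent quantities $d(x_n,A)^2$ and $R^2$.
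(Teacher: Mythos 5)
Your proposal is correct and follows essentially the same route as the paper: apply Lemma~\ref{lem:candidates:mu-mu'} to $\norm{\mu_k-\mu_k'}$, linearize the product via Lemma~\ref{lem:squares}, use $\sum_{k}\rnk^m\leq 1$ to reduce the $k$-independent terms to $\km_X(A)$ and $NR^2$, and finish with Lemma~\ref{lem:candidate:cK} and $\kappa\geq 1$. The only differences are cosmetic (termwise $2ab\leq a^2+b^2$ instead of the paper's $(a+b+c)^2\leq 3(a^2+b^2+c^2)$), and the constants work out to the same $\frac{24\epsilon}{b}\phiXm(C)$.
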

\begin{proof}
\begin{align*} 
&2\sum_{n=1}^N  \sum_{k=1}^K \rnk^m \norm{\mu_k-\mu_k'}  \norm{x_n-\mu_k} \\
&\leq  \frac{2\epsilon}{b\kappa} \sum_{n=1}^N  \sum_{k=1}^K \rnk^m  2\left( \norm{\mu_k-x_n}+ d(x_n,A) + R \right)  \norm{x_n-\mu_k} \tag{Lemma~\ref{lem:candidate:cK}}\\ 
&\leq  \frac{2\epsilon}{b\kappa} \left( \sum_{n=1}^N  \sum_{k=1}^K \rnk^m  \left( \norm{\mu_k-x_n}+ d(x_n,A) + R \right)^2 +  \sum_{n=1}^N  \sum_{k=1}^K \rnk^m \norm{x_n-\mu_k}^2\right) \tag{Lemma~\ref{lem:squares}}\\ 
&\leq  \frac{6\epsilon}{b\kappa} \left(  \sum_{n=1}^N  \sum_{k=1}^K \rnk^m  \left( \norm{\mu_k-x_n}^2+ d(x_n,A)^2 + R^2 \right) + \phiXm(C) \right) \tag{Lemma~\ref{lem:squares}}\\ 
&\leq  \frac{6\epsilon}{b\kappa} \left(    \phiXm(C) + \sum_{n=1}^Nd(x_n,A)^2 + N R^2 + \phiXm(C) \right) \\ 
&\leq  \frac{24 \epsilon}{b} \phiXm(C) \tag{Lemma~\ref{lem:candidate:cK}} 
\end{align*}
\end{proof}

\begin{claim}\label{claim:candidates:mixed-sum'}
 \begin{align*}
 2\sum_{n=1}^N \sum_{k=1}^K (\rnk')^m  \norm{\mu_k-\mu_k'} \norm{x_n-\mu_k'}
 \leq \frac{12 (\gamma+\alpha) \epsilon}{b\alpha} \phiXm(C)
 \end{align*}
\end{claim}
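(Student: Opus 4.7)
The plan is to mirror the proof of Claim~\ref{claim:candidates:mixed-sum} almost verbatim, swapping $\mu_k$ for $\mu_k'$ in the distance to $x_n$ and $\rnk$ for $\rnk'$ in the weights, and then reconcile the appearance of $\phiXm(C')$ (which will naturally show up) with $\phiXm(C)$ using Claim~\ref{claim:candidates:kmeans-approx:corollary}.

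First I would apply Lemma~\ref{lem:candidates:mu-mu'} with the specific choice $\tmu = \mu_k'$ and $x = x_n$ to obtain
\[ \norm{\mu_k - \mu_k'} \leq \frac{2\epsilon}{b\kappa}\bigl(\norm{x_n - \mu_k'} + d(x_n,A) + R\bigr). \]
Substituting this into the left-hand side and then invoking the inequality $2ab \leq a^2 + b^2$ from Lemma~\ref{lem:squares} with $a = \norm{x_n-\mu_k'}+d(x_n,A)+R$ and $b = \norm{x_n-\mu_k'}$ splits the product into a sum of squares. Expanding the first square via $(a+b+c)^2 \leq 3(a^2+b^2+c^2)$ (also Lemma~\ref{lem:squares}) and regrouping gives an upper bound of
\[ \frac{6\epsilon}{b\kappa}\Bigl(2\,\phiXm(C') + \sum_{n=1}^N d(x_n,A)^2 + NR^2\Bigr), \]
using that $\sum_{n,k}(\rnk')^m \norm{x_n-\mu_k'}^2 = \phiXm(C')$ and $\sum_k (\rnk')^m \leq 1$.

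Next I would apply Lemma~\ref{lem:candidate:cK} to absorb $\sum_n d(x_n,A)^2 + NR^2 \leq 2\kappa\,\phiXm(C)$. Unlike in Claim~\ref{claim:candidates:mixed-sum}, the factor $\phiXm(C')$ remains and must be related back to $\phiXm(C)$: this is where Claim~\ref{claim:candidates:kmeans-approx:corollary} enters, giving $\phiXm(C') \leq \gamma K^{m-1} \phiXm(C)$. Combining these two bounds yields
\[ \frac{6\epsilon}{b\kappa}\bigl(2\gamma K^{m-1} + 2\kappa\bigr)\,\phiXm(C) = \frac{12\epsilon}{b\alpha K^{m-1}}\bigl(\gamma K^{m-1} + \alpha K^{m-1}\bigr)\,\phiXm(C) = \frac{12(\gamma + \alpha)\epsilon}{b\alpha}\,\phiXm(C), \]
using $\kappa = \alpha K^{m-1}$, which is exactly the claimed bound.

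There is no substantive obstacle here: every step parallels a step already carried out in the proof of Claim~\ref{claim:candidates:mixed-sum}, and the only genuinely new ingredient is the single invocation of Claim~\ref{claim:candidates:kmeans-approx:corollary} to convert $\phiXm(C')$ to $\phiXm(C)$. The mild bookkeeping to be careful about is just ensuring that the factor of $\kappa$ in the denominator from Lemma~\ref{lem:candidates:mu-mu'} cancels correctly against the $\kappa = \alpha K^{m-1}$ appearing when $\km_{X,K}$ is bounded by $K^{m-1}\phiopt \leq K^{m-1}\phiXm(C)$, so that the final expression depends only on $(\gamma+\alpha)/\alpha$ rather than on $K$ or $m$ explicitly.
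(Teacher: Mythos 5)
Your proposal is correct and follows essentially the same route as the paper's proof: bound $\norm{\mu_k-\mu_k'}$ via Lemma~\ref{lem:candidates:mu-mu'} with $\tmu=\mu_k'$, split the product with Lemma~\ref{lem:squares}, absorb $\sum_n d(x_n,A)^2 + NR^2$ via Lemma~\ref{lem:candidate:cK}, and convert $\phiXm(C')$ to $\phiXm(C)$ with Claim~\ref{claim:candidates:kmeans-approx:corollary}, with the same final arithmetic using $\kappa=\alpha K^{m-1}$.
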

\begin{proof}
\begin{align*} 
&2\sum_{n=1}^N  \sum_{k=1}^K (\rnk')^m \norm{\mu_k-\mu_k'}  \norm{x_n-\mu_k'} \\
&\leq  \frac{2\epsilon}{b\kappa} \sum_{n=1}^N  \sum_{k=1}^K (\rnk')^m  2\left( \norm{\mu_k'-x_n}+ d(x_n,A) + R \right)  \norm{x_n-\mu_k'} \tag{Lemma~\ref{lem:candidate:cK}}\\ 
&\leq  \frac{2\epsilon}{b\kappa} \left( \sum_{n=1}^N  \sum_{k=1}^K (\rnk')^m  \left( \norm{\mu_k'-x_n}+ d(x_n,A) + R \right)^2 +  \sum_{n=1}^N  \sum_{k=1}^K (\rnk') \norm{x_n-\mu_k'}^2\right) \tag{Lemma~\ref{lem:squares}}\\ 
&\leq  \frac{6\epsilon}{b\kappa} \left(  \sum_{n=1}^N  \sum_{k=1}^K (\rnk')^m  \left( \norm{\mu_k'-x_n}^2+ d(x_n,A)^2 + R^2 \right) + \phiXm(C') \right) \tag{Lemma~\ref{lem:squares}}\\ 
&\leq  \frac{6\epsilon}{b\kappa} \left(    \phiXm(C') + \sum_{n=1}^Nd(x_n,A)^2 + N R^2 + \phiXm(C') \right)\\ 
&\leq  \frac{6 \epsilon}{b\kappa}\left(  2\phiXm(C') + 2\kappa \phiXm(C) \right)  \tag{Lemma~\ref{lem:candidate:cK}}\\ 
&\leq  \frac{12 (\gamma+\alpha) \epsilon}{b\alpha}\phiXm(C)  \tag{Claim~\ref{claim:candidates:kmeans-approx:corollary}}
\end{align*}
\end{proof}

\begin{proof}[Proof of Claim~\ref{claim:candidate:fuzzy-approx}]
\begin{align*}
 \cE 
 &\leq \max\bigg\{ \sum_{n=1}^N \sum_{k=1}^K \rnk^m \norm{\mu_k-\mu_k'}^2+2\sum_{n=1}^N  \sum_{k=1}^K \rnk^m \norm{\mu_k-\mu_k'}  \norm{x_n-\mu_k}\ ,\\ 
 &\phantom{\leq \max\bigg\{}  \sum_{n=1}^N \sum_{k=1}^K (\rnk')^m \norm{\mu_k-\mu_k'}^2+2\sum_{n=1}^N \sum_{k=1}^K (\rnk')^m  \norm{\mu_k-\mu_k'}   \norm{x_n-\mu_k'}  \bigg\} \\
 &\leq \max\left\{\sum_{n=1}^N \sum_{k=1}^K \rnk^m \norm{\mu_k-\mu_k'}^2, \sum_{n=1}^N \sum_{k=1}^K (\rnk')^m \norm{\mu_k-\mu_k'}^2\right\} + \\
 &\phantom{\leq\ } \max\left\{2\sum_{n=1}^N  \sum_{k=1}^K \rnk^m \norm{\mu_k-\mu_k'}  \norm{x_n-\mu_k} , 2\sum_{n=1}^N \sum_{k=1}^K (\rnk')^m  \norm{\mu_k-\mu_k'}   \norm{x_n-\mu_k'}\right\} \\
 &\leq \frac{ 36 (\gamma+2\alpha) \epsilon^2}{b^2\alpha} \phiXm(C) + \frac{24(\gamma+\alpha)\epsilon}{b} \phiXm(C) \tag{By Claims~\ref{claim:candidates:mu-mu'-sum}, \ref{claim:candidates:mixed-sum} and \ref{claim:candidates:mixed-sum'}} \\
 &\leq \frac{60(\gamma+2\alpha)\epsilon}{b} \phiXm(C)
\end{align*}
where $\gamma = 1+\frac{72\epsilon}{bK^{m-1}}$ (cf. Claim~\ref{claim:candidates:kmeans-approx}). 
Since $b=1208$ and $\alpha \leq 2$, we have $\cE \leq \frac{\epsilon}{4}\phiXm(C)$, which yields the claim. 
\end{proof}

\subsubsection{Upper Bound on the Size of \texorpdfstring{$\cG$}{G}}

In the following, we upper bound $\abs{\cG}$ analogously to \cite{Chen09}.

Recall from Section~\ref{sec:proof:candidates:construction} that each $L_{k,j}$ is partitioned into an axis-parallel grid with side length $\rho_j = \frac{2^j \epsilon R}{b\kappa \sqrt{D}}$.
Hence, the volume of each grid cell is 
\[ V_j = \left(\frac{2^j \epsilon R}{b\kappa \sqrt{D}}\right)^D .\]

Furthermore, observe that the distance between a point $x\in L_{k,j}$ and mean vector $a_k$ is at most $2^jR+\rho_j < 2^{j+1}R$. 
Hence, the grid cell that contains $x$ is contained in $\ball(a_k,2^{j+1}R)$. 
Each of these balls has a volume of 
\[ \vol(\ball(a_k,2^{j+1}R)) = \frac{\pi^{\nicefrac{D}{2}} (2^{j+1}R)^D}{\Gamma(\nicefrac{D}{2}+1)} . \]

Consequently, the number of grid cells in each $L_{k,j}$ is bounded by
\begin{align*}
	\frac{\vol(\ball(a_k,2^{j+1}R))}{V_j}
	&= 
	\left( \frac{\pi^{\nicefrac{D}{2}} (2^{j+1}R)^D}{\Gamma(\nicefrac{D}{2}+1)}\right) 
	\left(\frac{b\kappa \sqrt{D}}{2^j \epsilon R}\right)^D \\
	&\leq 
	\left( \frac{\pi^{\nicefrac{D}{2}} (2^{j+1}R)^D}{(\nicefrac{D}{4e})^{\nicefrac{D}{2}}}\right)
	\left(\frac{b\kappa \sqrt{D}}{2^j \epsilon R}\right)^D \\
	&= 
	\left( \frac{2b\kappa}{\epsilon}\right)^D\left(4e\pi\right)^{\nicefrac{D}{2}} 
	\leq \left( \frac{12b\kappa}{\epsilon}\right)^D\ .
\end{align*}
Overall, we obtain
\begin{align*}
	\abs{\cG} 
	&\leq \sum_{j=0}^\Phi \sum_{k=1}^K \frac{\vol(\ball(a_k,2^{j+1}R))}{V_j} \\
	&\leq K \left(\log(\alpha N) + m \log\left( \frac{64\alpha m K^2}{\epsilon}\right)\right)\left( \frac{12b\kappa}{\epsilon}\right)^D \\
	&= \cO\left( K^{mD+1} \epsilon^{-D}m\log\left(\nicefrac{mK}{\epsilon}\right)\log(N)\right) \ .
\end{align*}

\subsection{Unsolvability by Radicals (Proof of Theorem~\ref{thm:radicals})}\label{app:radicals}
	Consider the fuzzy $2$-means instance with $m=2$ and $X =\{-3,-2,-1,1,2,3\}\subset\R$.
	Let $\{\mu_1^*, \mu_2^*\}\subset \R$ be the means of an optimal solution.
	
	\begin{claim}\label{claim:solv:signs}
		$\sgn(\mu^*_1) \neq \sgn(\mu^*_2)$
	\end{claim}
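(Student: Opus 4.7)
The plan is to argue by contradiction, exploiting the symmetry $X=-X$ together with the closed-form of the cost available when $m=2$ and $K=2$. Substituting the optimal memberships (Eq.~\eqref{eq:opt-membership}) into the objective yields
\[
\phi_X^{(2)}(\mu_1,\mu_2) \;=\; \sum_{x\in X}\frac{(x-\mu_1)^2(x-\mu_2)^2}{(x-\mu_1)^2+(x-\mu_2)^2},
\]
which is invariant under $(\mu_1,\mu_2)\mapsto(-\mu_1,-\mu_2)$ because $X=-X$. Hence, if $\sgn(\mu_1^*)=\sgn(\mu_2^*)$, then after a global sign flip we may assume $\mu_1^*,\mu_2^*\ge 0$.

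The trivial subcase $\mu_1^*=\mu_2^*=0$ is excluded by direct comparison: $\phi_X^{(2)}(0,0)=\tfrac{1}{2}\sum_{x\in X}x^2=14$, whereas the closed form gives $\phi_X^{(2)}(2,-2)=242/65<14$, contradicting optimality. The remaining subcases in which exactly one of $\mu_1^*,\mu_2^*$ equals zero are automatic under the standard convention $\sgn(0)=0$ (the two signs already differ). So the core subcase is $\mu_1^*,\mu_2^*>0$.

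For this subcase I would show that the solution $(\mu_1^*,-\mu_2^*)$ is strictly better, contradicting optimality. Writing $f(x;\mu_1,\mu_2)$ for the pointwise summand, a short algebraic manipulation yields the identity
\[
f(x;\mu_1,\mu_2)-f(x;\mu_1,-\mu_2) \;=\; \frac{-4x\mu_2\,(x-\mu_1)^4}{\bigl[(x-\mu_1)^2+(x-\mu_2)^2\bigr]\bigl[(x-\mu_1)^2+(x+\mu_2)^2\bigr]}.
\]
Summing over $X$ and pairing $(x,-x)$ for $x\in\{1,2,3\}$, each pair contributes $x\bigl[h(a)-h(c)\bigr]$, where $a=(x-\mu_1^*)^2<c=(x+\mu_1^*)^2$ and $h(y)=y^2/\bigl[(y+u)(y+v)\bigr]$ with $u=(x-\mu_2^*)^2$, $v=(x+\mu_2^*)^2$.

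The main technical step is then the monotonicity of $h$: writing $h(y)=1/\bigl[1+(u+v)/y+uv/y^2\bigr]$ with $u,v>0$ shows $h$ is strictly increasing on $(0,\infty)$. Consequently each pair's contribution is strictly negative when $\mu_1^*>0$, so the full sum is strictly negative; multiplying by $-4\mu_2^*<0$ flips the sign and gives $\phi_X^{(2)}(\mu_1^*,\mu_2^*)>\phi_X^{(2)}(\mu_1^*,-\mu_2^*)$. The main obstacle is the pointwise algebraic identity, which is routine but must be computed carefully; the monotonicity check and the global sign-flip symmetry are elementary.
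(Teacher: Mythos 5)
Your proof is correct, but it takes a genuinely different route from the paper's. The paper disposes of the claim in two lines: assuming w.l.o.g.\ $\mu_1^*,\mu_2^*\leq 0$, the point $x=3$ has squared distance at least $9$ from both means, so by Cauchy--Schwarz ($r_{31}^2+r_{32}^2\geq\tfrac12$ because $r_{31}+r_{32}=1$) the cost is at least $\tfrac92$, contradicting the upper bound $\phi_{(X,2,2)}^{OPT}\leq\km_X(\{2,-2\})=4$ supplied by Lemma~\ref{lem:kmeans}. You instead reflect one mean across the origin and show the cost strictly decreases; your pointwise identity is right (it is $a^2(u-v)/[(a+u)(a+v)]$ with $a=(x-\mu_1)^2$ and $u-v=-4x\mu_2$), the $\pm x$ pairing and the monotonicity of $h$ check out, and the base case $\mu_1^*=\mu_2^*=0$ is handled correctly ($14$ versus $242/65$). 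Your route is considerably longer but yields a structural fact (reflecting a mean strictly improves a same-sign configuration) rather than a crude numeric separation. One caveat: the paper's argument actually establishes the stronger fact that no optimal pair has both means in $(-\infty,0]$ (hence none in $[0,\infty)$ either), so in particular neither optimal mean is $0$; this strict two-sidedness is exactly what the paper uses next to write $3\geq\mu_1^*>0>\mu_2^*\geq-3$. Your reflection argument degenerates precisely when $\mu_1^*=0$ (then $a=c$ and every pair contribution vanishes, consistent with $\phi(0,\mu_2)=\phi(0,-\mu_2)$), so it proves the literal sign statement under the convention $\sgn(0)=0$ but not the exclusion of a mean at the origin. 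If you want your proof to support the paper's subsequent step, add a one-line bound for that case, e.g.\ for $\mu_1^*=0$ and $\mu_2^*\geq 0$ the point $x=-3$ alone contributes $9(3+\mu_2^*)^2/\bigl(9+(3+\mu_2^*)^2\bigr)\geq 81/18>4$, which again contradicts the upper bound $4$.
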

	\begin{proof}
		Assume w.l.o.g. that $\mu^*_1, \mu^*_2 \leq 0$. 
		Let $\rnkSet$ be the memberships induced by $\{\mu^*_1, \mu^*_2 \}$.
		Using the Cauchy-Schwarz inequality, we conclude
		\[ 
		\phi_{(X,2,2)}^{OPT}
		= \phi_X^{(2)}(\mu^*_1, \mu^*_2) 
		\geq r_{31}^2(3 - \mu_1)^2 + r_{32}^2(3-\mu_2)^2 
		\geq \frac{1}{2} 3^2 > 4\ .
		\]
		This contradicts the fact that, due Lemma~\ref{lem:kmeans}, we have $\phi_{(X,2,2)}^{OPT} \leq \km_X(\{2,-2\}) = \sum_{x\in X} \min\{(x-2)^2, (x+2)^2\} = 4 $. 
		This yields the claim.
	\end{proof}
	
	\begin{observation}
	  $\mu_1^*$ and $\mu_2^*$ lie inside the convex hull of the point set $X$. 
	\end{observation}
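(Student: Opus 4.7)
The plan is to invoke the first-order optimality condition for the mean vectors (Equation~\eqref{eq:opt-means}) and observe that it expresses each $\mu_k^*$ as a convex combination of the input points.

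First, fix any optimal solution $(\{\mu_1^*,\mu_2^*\},R^*)$ of the fuzzy $2$-means problem on $X=\{-3,-2,-1,1,2,3\}$ with $m=2$. Because $(\{\mu_1^*,\mu_2^*\},R^*)$ is optimal, for the fixed memberships $R^*=\{r_{nk}^*\}_{n,k}$ the means $\mu_1^*,\mu_2^*$ must themselves be optimal; otherwise replacing them by the induced optimal means would strictly decrease $\phi_X^{(2)}$. Applying Equation~\eqref{eq:opt-means} with weights $w_n=1$ therefore gives, for each $k\in\{1,2\}$,
\[
  \mu_k^* \;=\; \frac{\sum_{n=1}^{6} (r_{nk}^*)^2\, x_n}{\sum_{n=1}^{6} (r_{nk}^*)^2}\ .
\]
Here the denominator is strictly positive: by Claim~\ref{claim:solv:signs} the two means have opposite signs, so $\mu_k^*$ does not coincide with every $x_n$, and hence $\sum_n (r_{nk}^*)^2>0$ (in fact, at least one point must carry nonzero membership in cluster $k$, since otherwise $\mu_k^*$ would be undefined and the cluster could be dropped, contradicting the sign claim).

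Second, observe that the coefficients $\lambda_n := (r_{nk}^*)^2 / \sum_{l=1}^{6}(r_{lk}^*)^2$ are nonnegative and sum to $1$. Thus $\mu_k^*=\sum_{n=1}^{6}\lambda_n x_n$ is a convex combination of the six data points, so $\mu_k^*$ lies in the convex hull $\mathrm{conv}(X)=[-3,3]$. This proves the observation. I do not foresee a real obstacle: the only small subtlety is ensuring the denominator in Equation~\eqref{eq:opt-means} is nonzero, which is handled by Claim~\ref{claim:solv:signs} (an optimal cluster cannot have all memberships equal to zero, for then the corresponding sign constraint would be violated).
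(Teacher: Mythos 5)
Your proof is correct and is exactly the argument the paper leaves implicit: the observation is stated there without proof, and the intended justification is precisely that Equation~\eqref{eq:opt-means} expresses each optimal mean as a convex combination of the data points, hence a point of $[-3,3]$. The one weak spot is your justification of the positive denominator: the sentence ``the two means have opposite signs, so $\mu_k^*$ does not coincide with every $x_n$, and hence $\sum_n (r_{nk}^*)^2>0$'' is a non sequitur as written; the clean fix is to note that if all memberships of cluster $k$ were zero, then $r_{nl}^*=1$ for all $n$ (with $l\neq k$), so the cost would be at least $\km_X(\{\mu_l^*\})\geq 28$, contradicting $\phi_{(X,2,2)}^{OPT}\leq 4$ (your parenthetical remark, relocating the cost-irrelevant mean so as to violate Claim~\ref{claim:solv:signs}, also works, but should be spelled out as the actual argument).
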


	From this observation and Claim~\ref{claim:solv:signs}, we can conclude that the optimal solution $\{\mu_1^*, \mu_2^*\}$ satisfies the following equations: 
	\begin{align*}
		\frac{\partial \phi_X^{(2)}(\{\mu_1, \mu_2^*\})}{\partial \mu_1}(\mu_1^*) = 0 \;&,\; \frac{\partial \phi_X^{(2)}(\{\mu_1^*, \mu_2\})}{\partial \mu_2}(\mu_2^*) = 0 \\
		3 \geq \mu_1^* > 0 \;&,\; 0 > \mu_2^* \geq -3 .
	\end{align*}
	One can check that the only pair of real values satisfying all of the equations above are the two real roots of the polynomial
	\[ g(x) = 3x^{12}+84x^{10}+490x^8-292x^6-8981x^4-17640x^2-11664 . \]
	The interested reader can reproduce this result using the CAS Maple\texttrademark{}\footnote{Maple is a trademark of Waterloo Maple Inc.} and the worksheet provided in Section~\ref{subsec:mapleworksheet}.
	
	Note, that the roots of the polynomial
	\[ h(x) = 3x^6+84x^5+490x^4-292x^3-8981x^2-17640x-11664 \]
	are the square roots of the roots of $g$.
	By using the following well-known results from algebra, we can show that the roots of $h$, and hence also the roots of $g$, are not solvable by radicals over $\Q$.
	\begin{definition}
		We call a prime $p$ \emph{good} for a polynomial $f\in \Q[x]$ if $p$ does not divide the discriminant of $f$.
	\end{definition}
	\begin{lemma}[\cite{bajaj88}]\label{lem:cycles}
		Let $f\in \Q[x]$ with $\deg(f) = n > 2$ and $\deg(f) = 0 \mkern-6mu \mod 2$.
		If there are good primes $p_1, p_2, p_3$ for $f$ such that
		\begin{enumerate}
			\item $f \mkern-6mu \mod p_1$ is an irreducible polynomial of degree $n$,
			\item $f \mkern-6mu \mod p_2$ factors into a linear polynomial and an irreducible polynomial of degree $n-1$, and
			\item $f \mkern-6mu \mod p_3$ factors into a linear polynomial, an irreducible polynomial of degree $2$ and an irreducible polynomial of degree $n-3$,
		\end{enumerate}
		then $\Gal(f) \cong S_n$.
	\end{lemma}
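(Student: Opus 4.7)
The plan is to combine Dedekind's theorem on factorization with a classical group-theoretic characterization of $S_n$ as a primitive permutation group containing a transposition.

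First, I would invoke Dedekind's theorem: if $p$ is a good prime for $f$ (so $p \nmid \operatorname{disc}(f)$ and in particular $f \bmod p$ is separable), then the factorization type of $f \bmod p$ in $\mathbb{F}_p[x]$ equals the cycle type of the Frobenius element at $p$ viewed as an element of $\Gal(f)$ acting on the roots of $f$. Applying this to the three primes $p_1, p_2, p_3$ in the hypothesis yields three elements of $G \coloneqq \Gal(f) \leq S_n$ with cycle types $(n)$, $(1, n{-}1)$ and $(1, 2, n{-}3)$ respectively. Call these elements $\sigma$ (an $n$-cycle), $\rho$ (an $(n{-}1)$-cycle fixing one point), and $\pi$ (a product of a transposition and an $(n{-}3)$-cycle on disjoint support).

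Next, since $n$ is even, $n-3$ is odd and $\gcd(2, n{-}3) = 1$. Raising $\pi$ to the power $n-3$ therefore kills the $(n{-}3)$-cycle while leaving the transposition intact, producing a genuine transposition $\tau \in G$. So $G$ contains an $n$-cycle, an $(n{-}1)$-cycle, and a transposition. I would then show $G = S_n$ in three group-theoretic steps. (i) Since $\sigma$ is an $n$-cycle, $G$ is transitive on the $n$ roots. (ii) Let $a$ be the unique fixed point of $\rho$. Then $\rho$ lies in the stabilizer $G_a$ and acts as an $(n{-}1)$-cycle on the remaining $n-1$ points, so $G_a$ is transitive on $\{1,\ldots,n\}\setminus\{a\}$; combined with transitivity of $G$ this makes $G$ $2$-transitive, hence primitive. (iii) A classical theorem (Jordan) states that a primitive subgroup of $S_n$ containing a transposition must equal $S_n$. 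Applied to $G$ with the transposition $\tau$, this gives $G \cong S_n$.

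The main obstacle in carrying this out rigorously is the careful handling of step (iii) and of Dedekind's theorem. For Dedekind, one must verify that the goodness hypothesis $p \nmid \operatorname{disc}(f)$ really is what guarantees unramifiedness and therefore the cycle-type correspondence (as opposed to merely separability of $f \bmod p$). For Jordan's theorem, the cleanest route is to argue directly: the set $B = \{i : \tau \text{ moves } i\} \cup \{j : (i\,j)\in G \text{ for some } i \in B\}$ determines a block under primitivity, so $B$ must be all of $\{1,\ldots,n\}$, from which one generates every transposition and hence $S_n$. Both ingredients are standard, but they do most of the real work; once they are in place the argument above is essentially bookkeeping on cycle lengths.
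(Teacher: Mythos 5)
Your proposal is correct, but note that the paper itself offers no proof of this lemma at all: it is imported wholesale from the cited source \cite{bajaj88}, and your argument is essentially a self-contained reconstruction of the standard proof underlying that result. The ingredients you use are exactly the expected ones: Dedekind's theorem turns the three factorization patterns at good primes into elements of $\Gal(f)$ of cycle types $(n)$, $(1,n{-}1)$ and $(1,2,n{-}3)$; evenness of $n$ makes $n-3$ odd, so the third element powered by $n-3$ is a transposition; and an $n$-cycle plus an $(n{-}1)$-cycle plus a transposition force the group to be all of $S_n$. Two small remarks. First, your step (iii) is heavier than necessary: once you have $2$-transitivity from (i) and (ii), you do not need Jordan's primitivity theorem at all --- conjugating the single transposition $\tau=(a\,b)$ by elements carrying the pair $(a,b)$ to an arbitrary pair $(c,d)$ already produces every transposition, hence $S_n$; if you do insist on Jordan, the block argument is cleaner via the equivalence relation $i\sim j \iff i=j$ or $(i\,j)\in G$, whose classes form a block system, rather than the set $B$ you describe. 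Second, for Dedekind's theorem with a non-monic $f$ (the paper applies the lemma to $h$ with leading coefficient $3$) one also needs $p$ not to divide the leading coefficient; this is implicitly guaranteed here because the hypothesized factorizations mod $p_1,p_2,p_3$ have degrees summing to $n$, but it is worth stating when you make the unramifiedness argument precise.
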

	\begin{lemma}[\cite{hungerford74}]\label{lem:groups}
		Let $f\in \Q[x]$.
		If the equation $f(x) = 0$ is solvable by radicals over $\Q$, then the Galois group of $f$ is a solvable group.
	\end{lemma}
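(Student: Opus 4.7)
The plan is to prove this classical result by the standard Galois-theoretic argument relating radical extensions to solvable groups. Writing $E$ for the splitting field of $f$ over $\Q$, the hypothesis that $f(x)=0$ is solvable by radicals translates into the existence of a \emph{radical tower} $\Q = F_0 \subset F_1 \subset \cdots \subset F_r$ with $E \subseteq F_r$, where each step has the form $F_{i+1} = F_i(\alpha_i)$ with $\alpha_i^{n_i}\in F_i$ for some $n_i \in \N$. This is immediate from the definition of solvability by radicals: every arithmetic operation preserves the current field, while extracting an $n$-th root is exactly an adjunction of the above form.

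The main step is to replace $F_r$ with a larger field $L$ such that $E\subseteq L$, $L/\Q$ is Galois, and $L$ is reached from $\Q$ by a tower whose successive extensions are abelian. I would first adjoin a primitive $N$-th root of unity $\zeta_N$ with $N = \mathrm{lcm}(n_0,\dots,n_{r-1})$; the extension $\Q(\zeta_N)/\Q$ is Galois with abelian Galois group. Then, proceeding level by level, I would adjoin all $\Q$-conjugates of every $\alpha_i$ on top of the previous level. Because every intermediate field now contains the relevant roots of unity, each individual radical adjunction $K(\sqrt[n]{a})/K$ is a Kummer extension and hence has cyclic Galois group; the simultaneous adjunction of all conjugates is a compositum of such cyclic extensions and therefore abelian.

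Once such a tower $\Q = K_0 \subset K_1 \subset \cdots \subset K_s = L$ is in hand with $L/\Q$ Galois and each $K_{j+1}/K_j$ abelian, the fundamental theorem of Galois theory translates the tower of fields into a normal series of subgroups of $\Gal(L/\Q)$ whose successive quotients are the Galois groups $\Gal(K_{j+1}/K_j)$, all abelian. By definition this means $\Gal(L/\Q)$ is solvable. Since $E\subseteq L$ and $E/\Q$ is itself Galois (in characteristic zero, splitting fields are Galois), $\Gal(f) = \Gal(E/\Q)$ is isomorphic to the quotient $\Gal(L/\Q)/\Gal(L/E)$ of a solvable group, which is again solvable.

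The main obstacle is the second step, namely constructing the enlarged tower and verifying that each level really produces an abelian extension. This rests on Kummer theory: if a field $F$ contains a primitive $n$-th root of unity, then $F(\sqrt[n]{a})/F$ is cyclic of degree dividing $n$ for any $a\in F$. Adjoining roots of unity first and only then radicals, together with the trick of taking all $\Q$-conjugates at once to secure the Galois property over $\Q$, is exactly what makes the argument work; after that, the remaining manipulations are routine applications of the Galois correspondence and elementary group theory on solvable groups.
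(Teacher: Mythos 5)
Your argument is correct: it is the classical proof via a radical tower, adjunction of roots of unity, Kummer/cyclotomic steps with abelian Galois groups, passage to a Galois closure, and the Galois correspondence plus closure of solvability under quotients. The paper itself gives no proof of this lemma --- it is quoted from the cited algebra textbook --- and your argument is essentially the standard proof found there, so there is nothing to compare beyond noting that your handling of the one delicate point (adjoining all $\Q$-conjugates level by level so that each stage remains abelian over the previous one) is exactly the right fix.
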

	\begin{lemma}[\cite{hungerford74}]\label{lem:S_n}
		The symmetric group $S_n$ is not solvable for $n\geq 5$ .
	\end{lemma}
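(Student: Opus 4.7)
The plan is to use the derived-series characterization of solvability: a group $G$ is solvable if and only if the iterated commutator series $G \supseteq G^{(1)} \supseteq G^{(2)} \supseteq \cdots$, defined by $G^{(0)}=G$ and $G^{(i+1)}=[G^{(i)},G^{(i)}]$, eventually reaches the trivial subgroup. To prove the lemma, I would show that for $n\geq 5$ this derived series of $S_n$ stabilizes at the nontrivial subgroup $A_n$.

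First I would identify $[S_n,S_n] = A_n$. The inclusion $[S_n,S_n]\subseteq A_n$ is immediate from the fact that $S_n/A_n\cong \mathbb{Z}/2\mathbb{Z}$ is abelian, so every commutator lies in the kernel of the sign map. For the reverse inclusion, every $3$-cycle $(a\,b\,c)$ can be written as the commutator $[(a\,b),(a\,c)]$, and since $A_n$ is generated by $3$-cycles, we obtain $A_n\subseteq[S_n,S_n]$.

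Next I would show that $[A_n,A_n]=A_n$ for $n\geq 5$, so that the derived series becomes stationary at $A_n$. The commutator subgroup $[A_n,A_n]$ is always a normal subgroup of $A_n$; thus by the simplicity of $A_n$ (for $n\geq 5$) it equals either the trivial group or all of $A_n$. If it were trivial, then $A_n$ would be abelian, but the $3$-cycles $(1\,2\,3)$ and $(1\,2\,4)$ lie in $A_n$ and do not commute, a contradiction. Hence $[A_n,A_n]=A_n$, so $S_n^{(i)}=A_n\neq\{e\}$ for all $i\geq 1$, and $S_n$ is not solvable.

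The main obstacle, and the one genuinely nontrivial input, is the simplicity of $A_n$ for $n\geq 5$. I would prove it along the classical lines: (i) $A_n$ is generated by $3$-cycles; (ii) for $n\geq 5$, any two $3$-cycles are conjugate inside $A_n$ (for $n=3,4$ this fails because the centralizer of a $3$-cycle in $S_n$ is already contained in $A_n$, forcing the conjugacy class to split); and (iii) any nontrivial normal subgroup $N\trianglelefteq A_n$ must contain a $3$-cycle. Step (iii) is the delicate part and proceeds by a case analysis on the cycle type of a non-identity element $\sigma\in N$: by taking commutators of $\sigma$ with suitably chosen $3$-cycles and exploiting that $n\geq 5$ provides enough ``room'' to move disjoint supports around, one produces a $3$-cycle inside $N$. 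Combined with (i) and (ii), this forces $N=A_n$, establishing simplicity and completing the proof of the lemma.
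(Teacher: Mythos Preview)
Your proposal is correct and follows the classical argument via the derived series and the simplicity of $A_n$ for $n\geq 5$. Note, however, that the paper does not actually prove this lemma: it is stated with a citation to Hungerford's \emph{Algebra} and used as a black box, so there is no in-paper proof to compare against. Your write-up is exactly the standard textbook proof one would find in the cited reference.
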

	\begin{corollary}
		The equation $h(x) = 0$ is not solvable by radicals over $\Q$.
	\end{corollary}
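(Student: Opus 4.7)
The plan is to apply Lemma~\ref{lem:cycles}, Lemma~\ref{lem:groups}, and Lemma~\ref{lem:S_n} in sequence. Since $h$ has degree $n=6$, which is even and strictly greater than $2$, and since $n \geq 5$, the hypotheses of all three lemmas line up provided we can show that $\Gal(h) \cong S_6$. If this holds, then $S_6$ is non-solvable (Lemma~\ref{lem:S_n}), and by the contrapositive of Lemma~\ref{lem:groups}, the equation $h(x)=0$ is not solvable by radicals over $\Q$, which is exactly what we want.

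The first concrete step is to compute the discriminant $\operatorname{disc}(h) \in \Z$ using a computer algebra system. The prime factorization of this discriminant determines the finite set of ``bad'' primes; any other prime is good. The second step, which is the substantive part, is to exhibit three good primes $p_1,p_2,p_3$ whose corresponding reductions realize the three factorization patterns demanded by Lemma~\ref{lem:cycles}: namely, $h \bmod p_1$ irreducible of degree $6$; $h \bmod p_2$ splitting as a linear factor times an irreducible factor of degree $5$; and $h \bmod p_3$ splitting as a linear factor times an irreducible quadratic times an irreducible cubic. The existence of such primes is guaranteed in principle by Chebotarev density (assuming $\Gal(h) \cong S_6$ is the truth we wish to certify), and in practice one simply iterates through small primes $p$, factors $h \bmod p$ in $\mathbb{F}_p[x]$, and records the cycle type of the Frobenius until each of the three target patterns has been observed.

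Once the three primes are in hand, Lemma~\ref{lem:cycles} yields $\Gal(h) \cong S_6$. Combining with Lemma~\ref{lem:S_n} (which gives non-solvability of $S_n$ for $n\geq 5$) and the contrapositive of Lemma~\ref{lem:groups} concludes the argument. As a sanity check, one verifies that each of the three exhibited primes indeed does not divide $\operatorname{disc}(h)$, so that they qualify as good primes, and that the factors claimed to be irreducible over $\mathbb{F}_{p_i}$ really are, which for low-degree factors (up to $5$) can be checked directly via GCDs with $x^{p_i^k}-x$ for $k \leq \deg/2$.

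The main obstacle is computational rather than conceptual: one needs a computer algebra system to (i) compute $\operatorname{disc}(h)$, (ii) factor $h$ modulo a range of small primes, and (iii) certify irreducibility of the factors obtained. Worksheet-style verification of these steps (as the authors already use for finding the minimal polynomial $g$) is standard, so once the three witness primes are produced the rest of the proof is a short invocation of the three cited lemmas.
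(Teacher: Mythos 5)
Your proposal is correct and follows exactly the paper's route: compute the discriminant, exhibit three good primes realizing the three factorization patterns of Lemma~\ref{lem:cycles} to conclude $\Gal(h)\cong S_6$, then invoke Lemmata~\ref{lem:groups} and \ref{lem:S_n}. The paper simply carries out the computational step you defer to a CAS, giving the discriminant factorization and the witness primes $11$, $17$, and $89$ with the required factorizations of $h$ modulo each.
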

	\begin{proof}
		Since the discriminant of $h$ is $D(h) = 2^{31}\cdot 3^7 \cdot 5^2 \cdot 7^3 \cdot 76637866514129$, we can conclude that $11$, $17$, and $89$ are good primes for $h$.
		We factor $h$ modulo these good primes
		\begin{align*}
			h &= 3\cdot (x^6+6x^5+2x^4+9x^3+2x^2+5x+6) \mod 11 \\
			h &= 3\cdot (x^5+3x^4+9x^3+12x^2+10x+7)\cdot (x+8) \mod 17\\
			h &= 3\cdot (x^3+17x^2+50x+17)\cdot (x^2+9x+27)\cdot (x+2) \mod 89 .
		\end{align*}
		From Lemma~\ref{lem:cycles} we obtain $\Gal(h) \cong S_6$.
		Applying Lemmata~\ref{lem:groups} and \ref{lem:S_n} yields the claim.
	\end{proof}

\subsection{Maple\texttrademark Worksheet}\label{subsec:mapleworksheet}
The following worksheet was developed using Maple\texttrademark{} 13.0.
It can be downloaded at \url{https://www-old.cs.uni-paderborn.de/index.php?id=45441}.
In the worksheet we use the following formulation of the fuzzy $2$-means objective function with $m=2$.
\begin{observation}
       For all $\muSet\subset\R^D$ and $X=\{x_n\}_{n\OneTo{N}}\subset\R^D$ with $x_n\neq \mu_k$ for all $k\OneTo{K}$ and $n\OneTo{N}$, we have
	\[ \phiXm(\muSet) = \sum_{n=1}^N \sum_{k=1}^K \left(\frac{\norm{x_n-\mu_k}^{-2}}{\sum_{l=1}^K \norm{x_n-\mu_l}^{-2}}\right)^2 \norm{x_n-\mu_k}^2 = \sum_{n=1}^N \frac{1}{\sum_{k=1}^K \norm{x_n-\mu_k}^{-2}} , \]
	where the first equality is due to Equation~\ref{eq:opt-membership}. 
\end{observation}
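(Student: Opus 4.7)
The proof is a direct substitution followed by elementary simplification, so I would present it as a single short calculation rather than a structured argument. The plan is first to substitute the optimal membership expression from Equation~\eqref{eq:opt-membership} into the fuzzy $K$-means objective $\phiXm(\muSet)=\sum_{n=1}^N\sum_{k=1}^K r_{nk}^m\norm{x_n-\mu_k}^2$ with $m=2$, in which case the exponent $-2/(m-1)$ in the closed form for $r_{nk}$ collapses to $-2$. The hypothesis $x_n\neq \mu_k$ for all $n,k$ ensures that every inverse norm is well defined, so the substitution is valid with no case distinction, and this step already yields the first equality in the statement.

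For the second equality, I would pull the $k$-independent factor $\left(\sum_{l=1}^K \norm{x_n-\mu_l}^{-2}\right)^{-2}$ out of the inner sum over $k$, and then use the exponent identity $\norm{x_n-\mu_k}^{-4}\cdot\norm{x_n-\mu_k}^{2}=\norm{x_n-\mu_k}^{-2}$ on each summand. The remaining inner sum $\sum_{k=1}^K \norm{x_n-\mu_k}^{-2}$ cancels exactly one copy of the squared denominator, leaving $\sum_{n=1}^N\bigl(\sum_{k=1}^K\norm{x_n-\mu_k}^{-2}\bigr)^{-1}$, which is the claimed form.

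There is no real obstacle here; the only care required is tracking the exponents and noting that it is precisely the choice $m=2$ that makes the power of $r_{nk}$ in the objective match the square appearing in the denominator of the optimal membership, so that the cancellation goes through cleanly. For any other $m$ one would be left with a residual factor depending on $m$ that does not simplify to the tidy reciprocal-of-sum form, which is why the observation is stated specifically for $m=2$.
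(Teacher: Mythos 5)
Your proof is correct and matches the paper's (implicit) argument exactly: the paper justifies the observation only by citing Equation~\eqref{eq:opt-membership}, and the intended verification is precisely the substitution of the optimal memberships for $m=2$ followed by the exponent cancellation you carry out. The only quibble is your closing remark: for general $m$ the same substitution still yields a closed form, namely $\sum_{n=1}^N\bigl(\sum_{k=1}^K\norm{x_n-\mu_k}^{-2/(m-1)}\bigr)^{-(m-1)}$, so it is the simple reciprocal-of-sum shape, not the existence of a tidy formula, that is special to $m=2$.
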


\begin{figure}[htbp]
  \begin{framed}
		\noindent $>${\color{red}\textit{with(RealDomain):}}
		\vspace*{12pt}
		
		\noindent $>${\color{red}$\phi \coloneqq \frac{1}{(3-\mu_1)^{-2} + (3-\mu_2)^{-2}} + \frac{1}{(2-\mu_1)^{-2} + (2-\mu_2)^{-2}} + \frac{1}{(1-\mu_1)^{-2} + (1-\mu_2)^{-2}} +$}
		
		\noindent \phantom{$>$}{\color{red}$\frac{1}{(-3-\mu_1)^{-2} + (-3-\mu_2)^{-2}} + \frac{1}{(-2-\mu_1)^{-2} + (-2-\mu_2)^{-2}} + \frac{1}{(-1-\mu_1)^{-2} + (-1-\mu_2)^{-2}}$}
		\vspace*{12pt}
		
		\noindent $>${\color{red}$sol:=solve(\{diff(\phi, \mu_1) = 0, diff(\phi, \mu_2) = 0, 3 >= \mu_1, \mu_1 > 0, 0 > \mu_2 , \mu_2 >= -3\},$}
		
		\noindent \phantom{$>$}{\color{red}$[\mu_1, \mu_2])$}
		\vspace*{12pt}
		
		{\color{blue}$[[\mu_1 = RootOf(3\_Z^{12}+84\_Z^{10}+490\_Z^8-292\_Z^6-8981\_Z^4-17640\_Z^2-11664,$
		\vspace*{6pt}
		
		$2.032093935),\mu_2 = -RootOf(3\_Z^{12}+84\_Z^{10}+490\_Z^8-292\_Z^6-8981\_Z^4-$
		\vspace*{6pt}
		
		$17640\_Z^2-11664, 2.032093935)]]$}
 \end{framed}
	\caption{Our Maple\texttrademark Worksheet}
\end{figure}
\subsection{Arbitrarily Poor Local Minima (Proof of Observation~\ref{obs:poor-solution:intro})} \label{sec:poor-solutions} 

It is known that the FM algorithm converges to a stationary point of the objective function that is either a saddlepoint or a (local) minimum \cite{bezdeck87}. 
We show that there are instances for which this point is arbitrarily poor compared to an optimal solution.  

\begin{claim}\label{claim:poor-solution}
 Let $m\in\N$, $D\geq 2$, and $K=2$. 
 Choose an arbitrary $c \in\R$. 
 Then, there exists a point set $X_a$ and initial point set $I\subset X_a$, $\abs{I}=2$, satisfying the following properties:
 If the FM algorithm is initialized with $I$, then in each round it computes a solution whose cost are at least $c\cdot \phiopt$. 
\end{claim}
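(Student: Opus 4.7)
The plan is to exhibit, for every constant $c$, an explicit symmetric two-dimensional instance together with a pair of initial centers drawn from the point set such that symmetry forces every FM iterate into a degenerate configuration whose cost is $\Omega(N)$, while the optimum is only $O(1)$; taking $N$ large enough then yields the claim. The key observation is that if $X_a$ is invariant under two commuting reflections and the initial set $I$ is invariant under both (with one of them swapping its two elements), then every FM iterate inherits these invariances, which in our construction pins both means to the $x$-axis in a $\pm\xi$ configuration on which the bulk of the mass cannot be resolved.

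Concretely, I take $D=2$ (embedding into higher $D$ is trivial) and let $X_a$ consist of the weighted point $((0,1),N)$, the weighted point $((0,-1),N)$, and the two unit-weight singletons $(1,0)$ and $(-1,0)$, with the integer $N$ fixed at the end. I set $I=\{(1,0),(-1,0)\}\subset X_a$. The set $X_a$ is invariant under the reflections $\sigma_x\colon(x,y)\mapsto(x,-y)$ and $\sigma_y\colon(x,y)\mapsto(-x,y)$, while $I$ is pointwise fixed by $\sigma_x$ and setwise fixed (with its two elements swapped) by $\sigma_y$. Since both reflections are affine isometries that permute the weighted points of $X_a$ preserving weights, the FM update operator $T$ defined by Eq.~\eqref{eq:opt-membership} and Eq.~\eqref{eq:opt-means} satisfies $T(\sigma(C))=\sigma(T(C))$ for any set of means $C$. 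Induction then shows that every iterate $C^{(t)}=\{\mu_1^{(t)},\mu_2^{(t)}\}$ is pointwise fixed by $\sigma_x$ and setwise swapped by $\sigma_y$, so $\mu_1^{(t)}=(\xi_t,0)$ and $\mu_2^{(t)}=(-\xi_t,0)$ for some $\xi_t\in\R$ (the degenerate case $\xi_t=0$ places both means at the origin).

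Next I bound the cost of every such symmetric configuration from below. For any $\xi\in\R$ the two heavy points $(0,\pm 1)$ are equidistant from $(\xi,0)$ and $(-\xi,0)$, so Eq.~\eqref{eq:opt-membership} forces $r_{n1}=r_{n2}=1/2$ for $n\in\{1,2\}$ (and the same value holds by the symmetric distance relation when $\xi=0$). Each heavy point therefore contributes at least $N\cdot 2\cdot(1/2)^m\cdot 1=N\cdot 2^{1-m}$ to the cost, since its distance to any center $(\pm\xi,0)$ is at least $1$; hence $\phi_{X_a}^{(m)}(C^{(t)})\geq N\cdot 2^{2-m}$ for every $t$. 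On the other hand, the feasible solution $\mu_1^*=(0,1)$, $\mu_2^*=(0,-1)$ assigns the heavy points to their coincident centers at zero cost, while each outlier $(\pm 1,0)$ lies at distance $\sqrt{2}$ from both centers and contributes $2(1/2)^m\cdot 2=2^{2-m}$, so the optimum is at most $2^{3-m}$. The ratio of every iterate's cost to the optimum is therefore at least $N/2$; choosing $N\geq 2c$ completes the argument.

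The main obstacle I anticipate is the careful handling of the degenerate steps where Eq.~\eqref{eq:opt-membership} leaves memberships undefined: at iteration $0$ each initial mean $\mu_k^{(0)}$ equals one of the outlier singletons, and after the iterates collapse toward the origin the two means coincide. In both cases one must choose a tie-breaking rule compatible with $\sigma_x$ and $\sigma_y$ to preserve the invariant. The canonical symmetric choices suffice: at iteration $0$ each outlier coincides with exactly one mean and so receives membership $1$ there and $0$ in the other cluster; once the means meet at the origin, the weighted centroid of $X_a$ is itself the origin, so the iteration stays there with cost $(N+1)\cdot 2^{2-m}\geq N\cdot 2^{2-m}$, preserving the lower bound in every round.
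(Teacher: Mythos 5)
Your proof is correct, and its skeleton is the same as the paper's: pick an instance with a reflection symmetry that the FM update provably preserves (the update commutes with isometries that permute the weighted input), so that every iterate is trapped in a symmetric configuration; then lower-bound the cost of every such configuration using the fact that points equidistant from both means are forced to split their membership as $1/2$ (so they pay at least $2\cdot(1/2)^m$ times their squared distance), and compare against an explicit constant-cost feasible solution. The difference is in where the unbounded gap comes from. The paper uses an unweighted four-point rectangle $\{(\pm a,\pm 1)\}$ and lets the \emph{geometry} blow up: the trapped means lie on a vertical line, two points sit at squared distance at least $a^2$, and $\phiopt\leq 4$, giving a ratio growing with $a$. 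You keep the geometry fixed (a unit cross) and let the \emph{weights} blow up: the means are pinned to the $x$-axis, so the two weight-$N$ points are never resolved, giving cost $\Omega(N)$ against $\phiopt\leq 2^{3-m}$. Your route needs the weighted formulation (or, equivalently, $N$ coincident unit-weight copies), which the paper's problem statement does permit but its own construction avoids; the paper's instance is thus slightly more robust (four distinct unweighted points), while yours buys a somewhat more transparent calculation, since the heavy points' memberships are exactly $1/2$ and their distances exactly bounded below by $1$. Your handling of the two delicate points --- equivariance applied to the \emph{labeled} pair of means, and symmetric tie-breaking at round $0$ where the initial means coincide with input points (the paper's initialization has the same feature) --- is what makes the induction go through, and you address both adequately; the coincident-means fallback you discuss never actually occurs from your initialization, but including it costs nothing.
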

\begin{proof}
Consider the unweighted instances
\[ X_{a} \coloneqq \{(a,1), (-a,1),  (-a,-1), (a,-1)\}\subset \R^2 , \]
where $a\in\R$ with $a>1$. 

\begin{claim}\label{claim:poor-solution:lower-bound-rnk-sum}
 Let $\rnkSet$ be a solution to the fuzzy $2$-means problem with respect to $X_a$. 
 Then, for all $x_n\in X_a$ we have $r_{n1}^m+r_{n2}^m\geq \left(\frac{1}{2}\right)^m$. 
\end{claim}
\begin{proof}
 Since $r_{n1}+r_{n2}=1$, we know $\max\{r_{n1},r_{n2}\}\geq 1/2$ and thus $r_{n1}^m+r_{n2}^m\geq \left(\frac{1}{2}\right)^m$. 
\end{proof}

\begin{claim}\label{claim:poor-solution:opt}
 An optimal fuzzy $2$-means clustering of $X_a$ costs at least $\frac{1}{2^{m-1}}$ and at most $4$.
\end{claim}
\begin{proof}
 Observe that the means of every optimal solution lie in the convex hull of the input points.
 Consider an arbitrary solution $\{\mu_1,\mu_2\}$ inside the rectangle spanned by $X_a$.
 There are two points in $X_a$ for which the distance to both means is at least $1$.
 Using Claim~\ref{claim:poor-solution:lower-bound-rnk-sum}, we conclude that the costs of any solution can be lower bounded by $2\cdot \left(\frac{1}{2}\right)^m\cdot 1 = \frac{1}{2^{m-1}}$.
 
 Finally, observe that for $\mu_1 = (-a, 0)$, $\mu_2 = (a, 0)$ we have 
  $
		\phi_{(X_a,2,m)}^{OPT} < \norm{x_1 - \mu_2}^2 + \norm{x_2 - \mu_2}^2 + \norm{x_3 - \mu_2}^2 + \norm{x_4 - \mu_1}^2 = 4
  $.
\end{proof}

However, the FM algorithm might compute arbitrarily poor solutions, even if it is initialized with points from the point set.

\begin{claim}\label{claim:poor-solution:factor}
 If the FM algorithm is started on $X_{a}$ with $\{(a,1),(a,-1)\}$ as initial centers, then it computes a solution that has at least cost $\frac{a^2}{2^{m+1}}\phi^{(OPT)}_{(X_a,2,m)}$.
\end{claim}
\begin{proof}
        Let $x_1 = (a,1)$, $x_2 = (-a,1)$, $x_3=-x_1$, $x_4=-x_2$, $\mu_1 = (a,1)$, and $\mu_2 = (a,-1)$.

	First, we show that if the FM algorithm is initialized with means $(\tmu_1,\tmu_2)$ that lie on a line parallel to the $y$-axis, then it computes means that also lie on a line parallel to the $y$-axis.	
	Given $(\tmu_1,\tmu_2)$, the algorithm computes memberships where
	$r_{11} = r_{42}$, $r_{21} = r_{32}$, $r_{31}=r_{22}$ and $r_{41}=r_{12}$.
	Hence,
	\begin{align*}
		(\tmu_1)_x 
		= \frac{r_{11}^m a - r_{21}^m a - r_{31}^m a + r_{41}^m a}{r_{11}^m+r_{21}^m+r_{31}^m+r_{41}^m}
		= \frac{r_{42}^m a - r_{32}^m a - r_{22}^m a + r_{12}^m a}{r_{42}^m+r_{32}^m+r_{22}^m+r_{12}^m}
		= (\tmu_2)_x \enspace .
	\end{align*}

	Next, we lower bound the cost of means $\{\tmu_1,\tmu_2\}$ that lie on a line parallel to the $y$-axis.
	Observe that there are always at least $2$ points in $X_{a}$ that have distance at least $a^2$ from both means.
	Without loss of generality, we can assume that these points are $x_2$ and $x_3$.  
	Denote by $\rnkSet$ the optimal responsibilities induced by $\{\tmu_1,\tmu_2\}$. 
	Then,  
	\begin{align*}
		\phi(\{\tmu_1,\tmu_2\} 
		= \sum_{n=1}^4 \sum_{k=1}^2 r_{nk}^m \norm{x_n - \tmu_k}^2
		\geq a^2 \sum_{k=1}^2 r_{2k}^m + r_{3k}^m
		\geq \frac{a^2}{2^{m-1}}
		\geq \frac{a^2}{2^{m+1}}\phi^{(OPT)}_{(X_a,2,m)} \enspace ,
	\end{align*}
	where the second last inequality follows from Claim~\ref{claim:poor-solution:lower-bound-rnk-sum} and the last inequality follows from Claim~\ref{claim:poor-solution:opt}. 
\end{proof}

  Applying Claim~\ref{claim:poor-solution:factor} with $a\coloneqq \lceil 2^m \sqrt{c} \rceil$ yields the claim. 
\end{proof}

	\bibliographystyle{splncs}

\end{document}